\begin{document}

\title{Error Rate Bounds and Iterative Weighted Majority Voting for Crowdsourcing}


\author{ 
\AND
	\name Hongwei Li \email hwli@stat.berkeley.edu \\
    \addr Department of Statistics\\
    University of California\\
   Berkeley, CA 94720-1776, USA
    \AND
    \name Bin Yu \email binyu@stat.berkeley.edu \\
    \addr Department of Statistics \& EECS \\
    University of California\\
    Berkeley, CA 94720-1776, USA
}

\editor{}

\maketitle

\begin{abstract}

Crowdsourcing has become an effective and popular tool for human-powered computation to label large datasets. Since the workers can be unreliable, it is common in crowdsourcing to assign multiple workers to one task, and to aggregate the labels in order to obtain results of high quality.
In this paper, we provide finite-sample exponential bounds on the error rate (in probability and in expectation) of general aggregation rules under the Dawid-Skene crowdsourcing model. The bounds are derived for multi-class labeling, and can be used to analyze many aggregation methods, including majority voting,  weighted majority voting and the oracle Maximum A Posteriori (MAP) rule. 
We show that the oracle MAP rule approximately optimizes our upper bound on the mean error rate of weighted majority voting in certain setting. 
We propose an iterative weighted majority voting (IWMV) method that optimizes the error rate bound and approximates the oracle MAP rule. Its one step version has a provable theoretical guarantee on the error rate. The IWMV method is intuitive and computationally simple.  Experimental results on simulated and real data show that IWMV performs at least on par with  the state-of-the-art methods, and it has a much lower computational cost (around one hundred times faster) than the state-of-the-art methods.

\end{abstract}

\begin{keywords}
  Crowdsourcing, Error rate bound, Mean error rate, Expectation-Maximization, Weighted majority voting
\end{keywords}

\section{Introduction}

There are many tasks which can be easily carried out by people but that tend to be hard for computers, e.g., image annotation, visual design and video event classification.
When these tasks are extensive, outsourcing them to experts or well-trained people may be too expensive.
Crowdsourcing has recently emerged as a powerful alternative.
It outsources tasks to a distributed group of people (called workers) who might be inexperienced in these tasks. However, if we can appropriately aggregate the outputs from a crowd, the yielded results could be as good as the ones by experts
\citep{Smyth1995, Snow_emnlp08, Whitehill_nips09, Raykar_JMLR10, Welinder_nips10, Yan_icml10, Liu2012, Zhou2012}.

The flaws of crowdsourcing are apparent. Each worker is paid purely based on how many tasks that he/she has completed (for example, one cent for labeling one image).  No ground truth is available to evaluate how well he/she has performed on the tasks. So some workers may randomly submit answers independent of the questions when the tasks assigned to them are beyond their expertise. Moreover, workers are usually not persistent. Some workers may complete many tasks, while the others may  finish only very few tasks.

In spite of these drawbacks, is it still possible to get reliable answers in a crowdsourcing system? The answer is yes. In fact, majority voting (MV) has been able to generate fairly reasonable results \citep{ Snow_emnlp08}. However, majority voting treats each worker's result as equal in quality. It does not distinguish a spammer from a diligent worker. Thus majority voting can be significantly improved upon \citep{Karger_NIPS2011}.

The first improvement over majority voting dates back at least to \citep{Dawid_JRSS79}. They assumed that each worker is associated with an unknown confusion matrix, whose rows are discrete conditional distributions of input from workers given ground truth. Each off-diagonal element represents misclassification rate from one class to the other, while the diagonal elements represent the accuracy in each class. Based on the observed labels by the workers, the maximum likelihood principle is applied to jointly estimate unobserved true labels and worker confusion matrices. Although the likelihood function is non-convex, a local optimum can be obtained by using the Expectation-Maximization (EM) algorithm, which can be initialized by majority voting.

Dawid and Skene\rq{}s model \citep{Dawid_JRSS79} can be extended by assuming true labels are generated from a logistic model  \citep{Raykar_JMLR10}, or putting a prior over worker confusion matrices \citep{Liu2012}, or taking the task difficulties into account \citep{Bachrach_ICML12}. One may simplify the assumption made by \citet{Dawid_JRSS79} to consider a confusion matrix with only a single parameter \citep{Karger_NIPS2011, Liu2012}, which we call the \hds  model (Section \ref{sec:Notation}).


Recently, significant progress has been made for inferring the true labels of the items. 
\citet{Raykar_JMLR10} presented a maximum likelihood estimator (via EM algorithm) that infers worker reliabilities and true labels. 
\citet{Welinder_nips10} endowed each item (i.e., image data in their work) with features, which could represent concepts or topics, and workers have different areas of expertise of matching these topics. 
\citet{Liu2012} transformed label inference in crowdsourcing into a standard inference problem in graphical models, and applied approximate variational methods. 
\citet{Zhou2012} inferred the true labels by applying a minimax entropy principle to the distribution which jointly model the workers, items and labels. 
Some work also considers the problem of adaptively assigning the tasks to workers for budget efficiency \citep{Ho2013,Chen2013}.

All the previous work we mentioned above focused on applying or extending Dawid-Skene model, and inferring the true labels based on that. 
However, to understand the behavior and consequences of the crowdsourcing system, it is of great intension to investigate the error rate of various aggregation rules.
To theoretically analyze specific algorithm, 
\citet{Karger_NIPS2011} provided asymptotic error bounds for their iterative algorithm and also majority voting. It seems difficult to generalize their results to other \predrules in crowdsourcing or apply to finite sample scenario.
Very recently, \citet{Gao2014} studied the minimax convergence rate of the global maximizer of a lower bound of the marginal-likelihood function under a simplified \ds model (i.e., one coin model in binary labeling). Their results are on clustering error rate, which is different from the ordinary error rate, i.e., proportion of mistakes in final labeling. They focused on the mathematical properties of the global optimizer of a specific function for sufficiently large number of workers and items, and not on the behavior of rules/algorithms which find the optimizer or aggregate the results.

In this paper, we focus on providing finite sample bounds on the error rate of some general aggregation rules under crowdsourcing models of which the effectiveness on real data has been evaluated in \citep{Dawid_JRSS79,Raykar_JMLR10, Liu2012, Zhou2012}, and motivate efficient algorithms. Our main contributions are as follows:

\begin{enumerate}

\item 
We derived error rate bounds (in probability and in expectation) of a general type of aggregation rules with any finite number of workers and items under the Dawid-Skene model (with the \sds model and \hds model (Section \ref{sec:ProblemSetting}) as special cases). 

\item By applying the general error rate bounds to some special cases such as weighted majority voting and majority voting under specific models, we gain insights and intuitions. These lead to the \oborule for designing optimal weighted majority voting, and also the consistency property of majority voting.

\item We show that the oracle Maximum A Posteriori (MAP) rule approximately optimizes the upper bound on the mean error rate of weighted majority voting. The EM algorithm approximates the \omaprule, thus the error rate bounds can help us to understand the EM algorithm in the context of crowdsourcing.

\item 
We proposed a data-driven iterative weighted majority voting (IWMV) algorithm with performance guarantee on its one-step version (Section \ref{sec:MLE}). It is intuitive,  easy to implement and performs as well as the state-of-the-art methods on simulated and real data but with much lower computational cost. 


\end{enumerate}

To the best of our knowledge, this is the first work which focuses on the finite sample error rate analysis on general \predrules under the practical \ds model for crowdsourcing.  The results we obtained can be used for analyzing error rate and sample complexity of algorithms. It is also worth mentioning that most of the previous work done only focused on binary crowdsourcing labeling, while our results are based on multi-class  labeling, which naturally apply to the binary case. Meanwhile, we did not make any assumptions on the number of workers and items in the crowdsourcing, thus the results can be directly applied to the setting of real crowdsourcing data.

\section{Background and formulation}
\label{sec:ProblemSetting}
\label{sec:Notation}


As an example of crowdsourcing, we assume that a set of workers are assigned to perform labeling tasks, such as judging whether an image of an animal is that of a cat, a dog or a sheep, or evaluating if a video event is abnormal or not.


Throughout this paper, we assume there are $M$ workers and $N$ items for a labeling task with $\nL$ label classes. 
We denote  the set of workers $\M=\hua{1,2,\cdots, M}$, the set of items $\N=\hua{1,2,\cdots, N}$, and the set of labels $\Labset=\hua{1,2,\cdots, \nL}$ (called \emph{label set}). 
The extended label set is defined as $\Lextend=\Labset\cup \hua{0}= \hua{0,1,2,\cdots, \nL}$, where 0 represents the label is missing. However, in the case of $\nL=2$, we use the common convention of label set as $\hua{-1, +1}$ and extended label set as $\hua{0, -1, +1}$.

In what follows, we use $\yj$ as the true label for the $j$-th item, and $\hyj$ as the predicted label for the $j$-th item by an algorithm.\footnote{In this paper, any parameter with a hat $\hat{~}$ is an estimate for this parameter.}
Let $\pri_k= \P(\yj=k)$ denotes the prevalence of label ``$k$" in the true labels of the items  for any $ j\in [N]$ and $k\in\Labset$.

The observed \dataMatrix is denoted by $Z\in \Lextend^{M\times N}$, where $Z_{ij}$ is the label given by the $i$-th worker to the $j$-th item, and it will be $0$ if the corresponding label is missing (the $i$th worker did not label the $j$th item).
We introduce the indicator matrix $T=(\Tij)_{M\times N}$, where $\Tij=1$ indicates that entry $(i,j)$ is observed, and $\Tij=0$ indicates entry $(i,j)$ is unobserved.
Note that $T$ and $Z$ are observed together.

\begin{figure}[htb]
\centering
\includegraphics[width= 0.5\columnwidth]{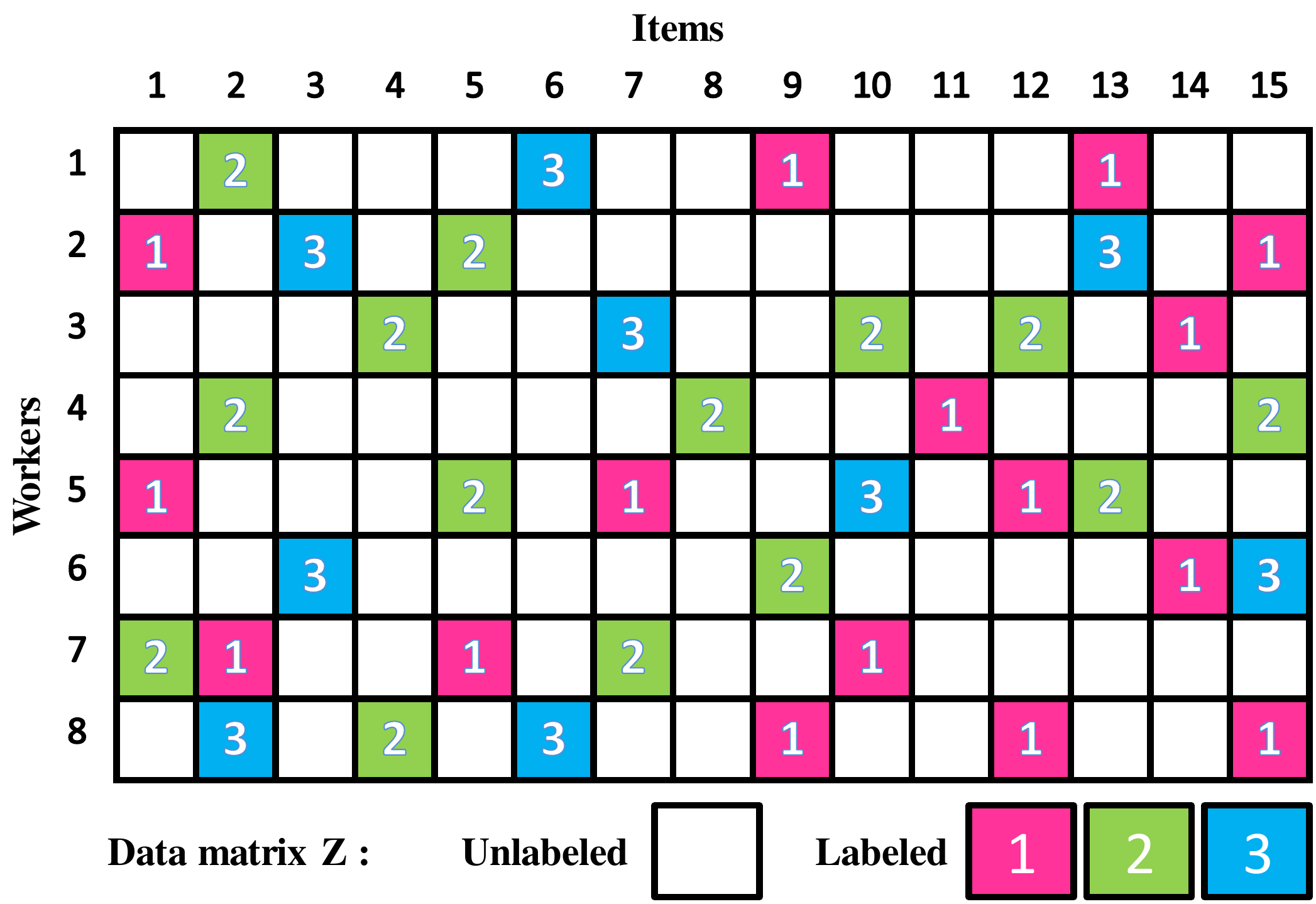}
\caption{Illustration of the input \dataMatrix. Entry $(i,j)$ is the label of $j$-th item given by  $i$-th worker. The set of labels is $\Labset= \hua{1,2,3}$.
}
\label{fig:labelMatrix}
\end{figure}

The process of matching workers with tasks (i.e., labeling items) can be modeled by a probability matrix $\Q= (\qij)_{M\times N}$, where $\qij= \P(\Tij=1)$ is the probability that the $j$th item was assigned to the $i$th worker (i.e., gets labeled). We call $\Q$ as \emph{\assign probability matrix}. 
Unlike the fixed assignment configuration in \citep{Karger_NIPS2011}, the \assign probability matrix is more flexible, and it does not require each worker label the same number of items, nor each item gets labeled by same number of workers. 
Hence, the \assign probability matrix covers the most general form of assigning items to workers, and there are special cases commonly adopted in literature, such as a worker has the same chance to label all items \citep{ Karger_NIPS2011, Liu2012}. More specifically, when $\qij= \qi\in(0,1], \forall i\in\M, j\in\N$, we call it the \emph{\assign probability vector $\qvec=(\q_1, \cdots, \q_M)$}. If $\qij=\qs \in(0,1], \forall i\in\M, j\in\N$, then we call it \emph{constant \assign probability $\q$.}  
The three assignment configurations above are referred to as the \emph{task assignment
\footnote{
The term \taskAssign seems to imply that workers are passive of labeling items --- they will surely label an item whenever they are assigned to.  This might not match the reality in the crowdsourcing platform. When a task owner distributed tasks to the crowd, it is likely that most of the workers will label a set of items and they can stop whenever they want to \citep{Snow_emnlp08}, unless they are required (by the owner) to complete a specific set of tasks for getting paid. Thus the process of matching tasks with workers might be determined by either workers (subjectively) or task owners (by enforcement), which depends on how the owners design and distribute tasks. If the workers have choice to select which item to label, it might be more proper to call the task-worker matching as \emph{task selection}, instead of \emph{\taskAssign}. However, both of the cases can be modeled by the probability matrix $\Q$ (or probability vector $\qvec$, or constant probability $\qs$). In what follows, we use the term \emph{\taskAssign} to represent the task-worker matching without introducing ambiguity. 
}
 is based on probability matrix $\Q$, probability vector $\qvec$ and constant probability $\qs$} , respectively.

  

Generally, we use $\pi, p,q$ as probabilities, and they might have indices according to the context. We denote $A$ and $C$ as constants which depend on other given variables, and $a$ as either a general constant or a vector depending on context.  
$\eta$ denotes likelihood probabilities in the context. $\para$ denotes a set of parameters. 
$\epsilon$ and $\delta$ are constants in $(0,1)$, where $\epsilon$ is used for bounding the error rate, and $\delta$ is used for denoting  a  positive probability.   $\H(\epsilon)$ denotes the natural entropy of Bernoulli random variable with parameter $\epsilon$, i.e., 
$\H(\epsilon)= -\epsilon\ln\epsilon - (1-\epsilon)\ln (1 - \epsilon)$. 
Operators $\minop$ and $\maxop$ denote the $\min$ operator and the $\max$ operator between two numbers, respectively. 
Meanwhile, throughout the paper, we will locally define each notation in the context before using them.

\subsection{Dawid-Skene models}\label{subsec:formulation}

We discuss three models covering all the cases that are widely used for modeling the quality of the workers \citep{Dawid_JRSS79, Raykar_JMLR10, Karger_NIPS2011, Liu2012, Zhou2012}. The first one, which is also the most general one, was originally proposed by  \citet{Dawid_JRSS79}:

\textit{\hwem{\gds model.~}}
In this model, the reliability of worker $i$ is modeled as a confusion matrix $\CMi=\kua{\cmikl}_{\nL\times\nL}\in [0,1]^{\nL\times \nL}$ , which is in a matrix form and represents a conditional probability table such that
\begin{eqnarray}\label{def:confusionMatrix}
\cmikl \defas \P\kua{\zij=l | \yj=k, \Tij=1}, \quad\forall k,l\in \Labset, \forall i\in\M.
\end{eqnarray}
Note that $\cmikk$ denotes the accuracy of worker $i$ on labeling an item with true label $k$ correctly, and $\cmikl, k\neq l$ represents the error probability of labeling an item with true label $k$ as $l$ mistakenly. The number of free parameters of modeling the reliability of a worker are $\nL(\nL-1)$ under the \gds model.

\begin{figure}[!htbp]
\centering
\includegraphics[width=0.4\columnwidth]{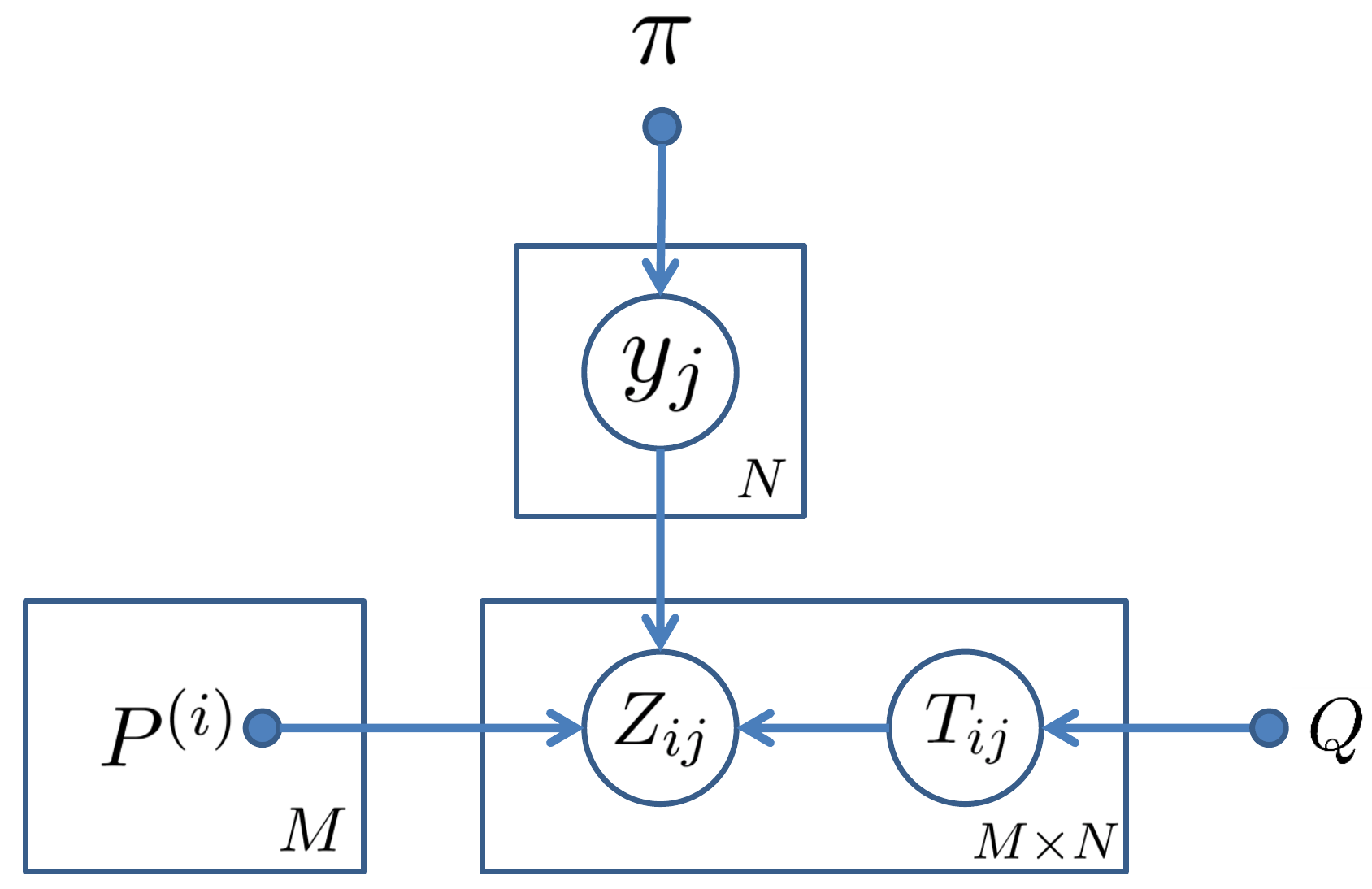}
\caption{The graphical model of the \gds model. Note that $\CMi$ is the confusion matrix of worker $i$ and if we change $\CMi$ to $\wi$, the graph will become the graphical model for the \hds model.}
\label{fig:graphModel}
\end{figure}

Since the \gds model has $\nL(\nL-1)$ degree of freedom to model each worker, it is flexible, but often leads to overfitting on small datasets. As a further regularization of the worker models, we consider another two models which are special cases of \gds model via imposing constraints on the worker confusion matrices.

\begin{itemize}

\item \emph{\sds model.~} In this model, the error probabilities of labeling an item with true label $k$ as label $l$ mistakenly for each worker are the same across different $l$. Formally, we have 
\begin{eqnarray}
\begin{cases}
\cmikk = \P\kua{\zij=k | \yj=k, \Tij=1}, & \quad\forall k\in \Labset, \forall i\in\M, 
\\
\cmikl = \frac{1-\cmikk}{\nL-1},  & \quad \forall k,l\in \Labset, l\neq k, \forall i\in\M.
\end{cases}
\end{eqnarray}
This model simplifies the error probabilities of a worker to be the same given the true label of an item. Thus, the off-diagonal elements of each row of the confusion matrix $\CMi$  will be the same. The number of free parameters to model the reliability of each worker under the \sds model is $\nL$.

\item \emph{\hds model.~}
Each worker is assumed to have the same accuracy on each class of items, and have the same error probabilities as well. 
Formally, worker $i$ labels an item correctly with a fixed probability $\wi$ and mistakenly with another fixed probability $\frac{1-\wi}{\nL-1}$, i.e., 
\begin{eqnarray}
\begin{cases}
\cmikk = \wi, & \qquad \forall k\in \Labset, \forall i\in\M,	
\\
\cmikl = \frac{1-\wi}{\nL-1}, & \qquad \forall k,l\in\Labset, k\neq l, \forall i\in\M.
\end{cases}
\end{eqnarray}

In this case, the worker labels an item with the same accuracy, independent of which label this item actually is. 
The number of parameters of modeling the reliability of each worker is 1 under the \hds model.

\end{itemize}

\begin{figure}[htb]
\centering
\includegraphics[width=0.6\columnwidth]
{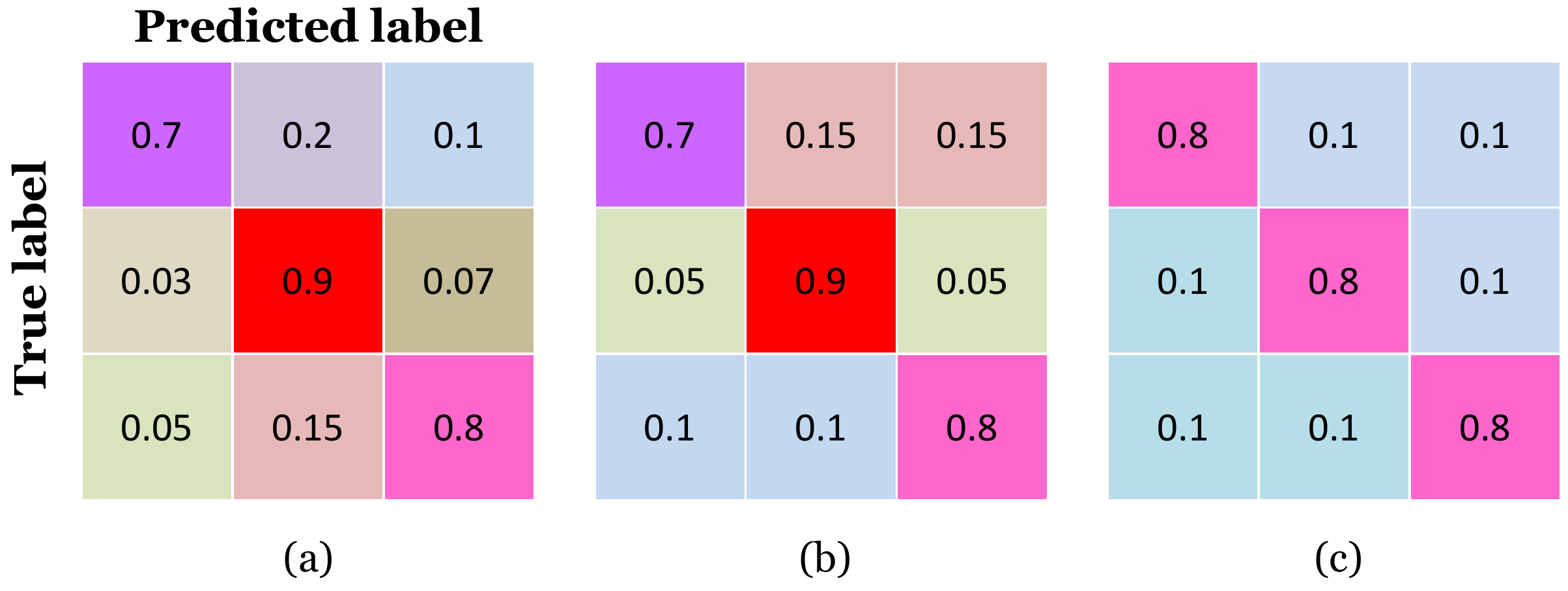}
\caption{ Toy examples of confusion matrices of different worker reliability models. We assume $\nL=3$ thus confusion matrices $\in [0,1]^{3\times 3}$. (a) \gds model. (b) \sds model. (c) \hds model. Vertical axis is actual classes, and horizontal axis is predicted classes. Different color corresponds to different conditional probability, and the diagonal elements are the accuracy of labeling the corresponding class of items correctly. }
\label{fig:confusionMatrix}
\end{figure}

Generally, the parameter set under all the three models can be denoted as 
$
\para = \hua{\hua{\CMi}_{i=1}^M, \Q, \pri}
$. Specifically, the parameter set of the \hds model can be denoted as
$
\para= \hua{\hua{w_i}_{i=1}^M, \Q, \pi}.
$

It is worth mentioning that when $\nL=2$, the \sds model and the \gds model are the same, which are referred to as the two-coin model, and the \hds model is referred to as the one-coin model in the literature \citep{Raykar_JMLR10, Liu2012}. 
In signal processing, the \hds model is equivalent to the random classification noise model \citep{Angluin_ML88}, and the \sds model is also referred to as the class-conditional noise model \citep{Natarajan_NIPS13}. We do not adopt the original term because the error comes from the limitations of workers' ability to label items correctly, not from noise which is in the context of signal processing. 


Binary labeling is the special case ($\nL=2$), and it is the major focus of the previous research in crowdsourcing \citep{Raykar_JMLR10, Karger_NIPS2011, Liu2012}. As a convention, we assume the set of labels is $\hua{\pm 1}$ instead of $\hua{1,2}$ when $\nL=2$.  For notation convenience, we defined the worker confusion matrix in a different way as follows: for $ i=1,2, \cdots, M$
\begin{eqnarray}
\begin{cases}\label{def:ppi}
\ppi = \P(\zijp1|\yjp1, \Tij=1), \\
\pni =	\P(\zijn1| \yjn1, \Tij=1).
\end{cases}
\end{eqnarray}
Then the parameter set will be
$
\para= \hua{\hua{\ppi,\pni}_{i=1}^M, \Q, \pi}
$ under this model. When we present the result of binary labeling, we will use $\ppi$ and $\pni$ without introducing any ambiguity.

Under the models above, the posterior probability of the true label of item $j$ to be $k$ is defined as:
\begin{eqnarray}
\rhojk= \P( \yj = k |Z,T,\para), \quad \forall j\in [N]. \label{def:rhojk}
\end{eqnarray}

For binary labeling, the posterior probability of the true label of item $j$ to be $k$ is defined as:
\begin{eqnarray}
\rhoj= \P( \yjp1 |Z,T,\para), \quad \forall j\in [N]. \label{def:rhojp}
\end{eqnarray}

\subsection{\Predrules}

After collecting the crowdsourced labels, the task owner could use an arbitrary rule to aggregate the multiple noisy labels of an item to a ``refined" label for that item.  The quality of final predicted labels depends not only on the input from the workers but also on the \predrule. It is hence of great importance to design a good \predrule. 

A natural \predrule is \emph{majority voting} \citep{Snow_emnlp08, Karger_NIPS2011}. For multiple labeling, majority voting can be written formally as 
\begin{equation}\label{def:mvRule}
\hyj = \argmax_{k\in \Labset} \sumi \I{\zij=k}, 
\end{equation}
which gives the $j$th item the majority label among all workers. 

Since workers have different reliabilities, it is inefficient to treat the labels from different workers with the same weight as in majority voting \citep{Karger_NIPS2011}. A natural extension of majority voting is \emph{weighted majority voting} (WMV), which weighs the labels differently. Formally, weighted majority voting rule can be written as
\begin{equation}\label{def:wmvRule}
\hyj = \argmax_{k\in \Labset} \sumi \vi\I{\zij=k},
\end{equation}
where $\vi\in \R$ is the weight associated with the $i$th worker. 

The idea behind weighted majority voting can be generalized in this way: given an item and the labels input from workers, the item can be potentially predicted to be any label class in $\Labset$. Suppose each class has a score which is computed based on the worker inputs, 
and we call it the \emph{aggregated score} for potential class
\footnote{Since we do not know the true label of an item, we use the term \emph{potential class} based on the fact that all the label class can potentially be the true label of this item.}
, then the aggregated label can be chosen as the class that obtains the highest score.
 
Based on the ideas above, we consider a general form of \predrule which is based on maximizing aggregated scores. 
The aggregated score of each label class can be decomposed into the sum of bounded prediction functions associated with workers, plus a shift constant. We refer this type of \predrule to \emph{decomposable \predrule}, which has the form
\begin{eqnarray}\label{def:predRuleFi}
\hyj= \argmax_{k\in\Labset} 
\sjk \connect \sjk\defas {\sumi \fikzij ~+ ~\ak},
\end{eqnarray}
where $\sjk$ is the \emph{aggregated score for potential class $k$ on item $j$},  $\ak \in \R$ and  $\f_i: \Labset \times \Lextend \rightarrow \R$ $~\forall i\in M$ is bounded, i.e., $|\f_i(k, h)|<\infty,$ $\forall k\in\Labset, h\in\Lextend$. Given $k$ and $h$, $\f_i(k,h)$ is a constant. Intuitively, $\f_i(k,h)$ is the score gained for the $k$th potential label class when the $i$th worker labels an item with label $h$. It is reasonable to assume that $\zij=0$ contributes no information to predict the true label of $j$th item, thus, we further assume that $\f_i(k,0)= \textrm{constant}, ~\forall i\in\M, k\in \Labset$. Without ambiguity, we will refer to $\hua{\f_1, \f_2, \cdots, \f_M}$ as \emph{score functions} in what follows. Note that score functions are usually designed by the task owners when they aggregate the noisy inputs into final predicted results. 

For illustration, majority voting is a special case of decomposable \predrule with $\fikzij= 1$ if $\zij=k$, and 0 if $\zij\neq k$. For weighted majority voting, $\fikzij= \vi$ if $\zij=k$, and 0 otherwise. Later, we will see more \predrules which can be expressed or approximated in this form (Section \ref{sec:MLE}).


\subsection{Performance metric}

Given an estimation or an \predrule, suppose that its predicted label for item $j$ is $\hyj$, then our objective is to minimize the error rate
\begin{eqnarray}
\errRate \= \inv{N}\sumj \I{\hyj\neq y_j}.
\end{eqnarray}
Since the error rate is random, we are also interested in its expected value (i.e., the \emph{mean error rate}). Formally, the mean error rate is:
\begin{eqnarray}
\E[\errRate] \= \inv{N}\sumj \P(\hyj \neq y_j).
\end{eqnarray}


The rest of the paper is organized as follows. In Section \ref{sec:Mainresults}, we present finite-sample bounds on the error rate of the decomposable \predrule in probability and in expectation under the \gds model.
Section \ref{sec:specialCases} contains the error rate bounds of some special cases, which the crowdsourcing community is widely concerned with. 
In Section \ref{sec:IWMV}, we propose an iterative weighted majority voting (IWMV) algorithm based on the analysis of the optimization of the error rate bounds, and provide performance guarantee for the one step verson of IWMV algorithm.
Experimental results on simulated and real-world dataset are presented in section \ref{sec:experiment}.
Note that the proofs are deferred to the supplementary materials.

\section{Error rate bounds}
\label{sec:Mainresults}
\label{sec:generalSetting}



In this section, finite sample bounds are provided for error rates (in high probability and in expectation) of the decomposable aggregation rule under the Dawid-Skene model.


Our main results will be focused on the setting which is as general as possible. We define the \emph{general setting} as follows:

\begin{itemize}

\item \emph{Worker modeling.} 
We focus on the \gds model, and then the results can be specialized for the \sds model and the \hds model straightforwardly. 


\item \emph{Task \assign.} We consider the \dataMatrix that is observed based on \assign probability matrix $\Q = (\qij)_{M\times N}$ ,where $\qij= \P(\Tij=1)$.
The results can be easily simplified to the scenarios where \taskAssign is based on probability vector $\qvec$ or constant probability $\qs$ according to the practical assignment process. 


\item \emph{\Predrule.} Our main results will be presented based on the decomposable \predrule (\ref{def:predRuleFi}).

\end{itemize}


\subsection{Some quantities of interest}\label{sec:quantities}

One important question we want to address is that how the error rate is bounded with high probability, and what quantities have impact on the bounds. 
Before deriving the error rate bound, we introduce some quantities of interest under the \emph{general setting} in this section. We shall bear in mind that all the quantities defined here serve the purpose of defining two measures $\tone$ and $\ttwo$, which play a central role in  bounding the error rate. 

The first quantity $\normalize$ is associated with  the score functions $\hua{\f_1, \f_2, \cdots, \f_M}$:
\begin{equation}
\normalize \defas \sqrt{\sumi \kua{ \max_{k,l,h\in\Labset,k \neq l}\abs{\muikh-\muilh} }^2  }. \label{eqn:defBeginGeneral}
\end{equation}
It measures the overall variation of the $\f_i$'s on their first argument (\ie, when the potential label class changes). Take weigthed majority voting (\ref{def:wmvRule}) as an example, $\muikh=\vi\I{h=k}$,  then $\normalize= \sqrt{\sumi \vi^2}= \vnorm$. For majority voting, $\normalize= \sqrt{\sumi 1}= \sqrt{M}$. Note that $\normalize$ is invariant to a translation of score functions, and is linear to scale of score functions. That is to say, if we design new score functions as $\f_i'= m\f_i + b$ for constant $m$ and $b$, and for all $i\in\M$, then the corresponding quantity $\normalize'= m\normalize$. 
Later on, we will see that it plays a role in normalization.

Another quantity $\gapjkl$ is defined as the \emph{expected gap of the aggregated scores} between two potential label classes $k$ and $l$, when the true label is $k$. Formally,  
\begin{equation}
\gapjkl \defas \E\braket{\sjk-\sjl|\yj=k} =\sumi\sumhL \qij\kua{\muikh-\muilh}\cmikh \+ \kua{\ak-\al}. \label{def:gapjkl}
\end{equation}
The larger this quantity is, the easier the \predrule identifies the true label as $k$ instead of $l$ (i.e., correctly predicted the label). Like $\normalize$, $\gapjkl$ is also invariant to translation of score functions, and linear to scale of score functions. 
Take weighted majority voting (\ref{def:wmvRule}) for illustration, the gap of aggregated scores under the \hds model is 
$
\gapjkl= \sumi\sumhL\kua{\qij\vi\wi -\qij\vi\frac{1-\wi}{\nL-1}}= \frac{1}{\nL-1}\sumi\qij\vi(\nL\wi-1),
$
because $\muikh=\vi\I{h=k}$ and $\cmikh=\wi$ if $h=k$, otherwise $\frac{1-\wi}{\nL-1}$.

The following two quantities serve as the lower bound and the upper bound of the normalized gap of aggregated scores for the $j$th item (i.e., the ratio of $\gapjkl$ and $\normalize$). 
\begin{equation}
\taujmin \defas \minkl \frac{\gapjkl}{\normalize}
\quad \connect \quad
\taujmax \defas \maxkl \frac{\gapjkl}{\normalize},\label{def:taujminmax}
\end{equation}
Both $\taujmin$ and $\taujmax$ are invariant to translation and scale of the score functions.

Now, we introduce the two most important quantities of interest --- $\tone$ and $\ttwo$, which are respectively the lower bound and the upper bound of the normalized gap of aggregated scores across all items. In our main results, $\tone$ is used to provide a sufficient condition for an upper bound on the error rate of crowdsourced labeling under the general setting. Meanwhile, $\ttwo$ is used to provide a sufficient condition for a lower bound of the error rate. 
\begin{equation}
\tone \defas \minj \taujmin
\quad \connect \quad
\ttwo \defas \maxj \taujmax, \label{def:toneAndttwo}
\end{equation}
Both $\tone$ and $\ttwo$ are invariant to translation and scale of the score functions $\hua{\f_1,\f_2,\cdots,\f_M}$.  $\tone$ and $\ttwo$ are related to how good a group of workers are, how the tasks are assigned and how well the \predrule is with respect to this type of labeling tasks.

Besides the quantities of interest above, we further introduce two notations which can capture the fluctuation of the score functions and the gap of aggregated scores. These two quantities will be used to bound the mean error rate of crowdsourced labeling. 

A quantity $\cH$ measures the maximum change amongst all the score functions when the potential label class changes from one label to another, and it is defined as
\begin{eqnarray}
\cH \= \inv{\normalize}\cdot\max_{i\in\M, k,l,h\in\Labset, k\neq l} \abs{\muikh-\muilh}. \label{def:cH}
\end{eqnarray} 
For weighted majority voting (\ref{def:wmvRule}), $\cH= \frac{\max_{i\in\M}|\vi|}{\vnorm}= \frac{\|\vweight\|_\infty}{\vnorm}$. And for majority voting (\ref{def:mvRule}), $\cH=\inv{\sqrt{M}}$.
Another quantity $\sigtwo$ relate to the variation of the gap of aggregated scores, and
\begin{eqnarray}
\sigtwo \= \inv{\normalize^2}\cdot\max_{j\in\N, k,l\in\Labset, k\neq l} \sumi\sumhL \qij\kua{\muikh-\muilh}^2 \cmikh ~~. \label{def:sigma2}
\end{eqnarray} 

Note that $0 < \cH\leq 1$, and $0<\sigtwo \leq \max_{i\in\M, j\in\N} \qij$. Both $\cH$ and $\sigtwo$ are translation and scale invariant of the score functions $\hua{\f_1,\f_2,\cdots, \f_M}$. 

In the next section, with $\tone$ and $\ttwo$, we will derive the bounds on the error rate of crowdsourced labeling with high probability, and together with $\cH$ and $\sigtwo$, we will derive the bounds on the mean error rate under the general setting.

\subsection{Main results}


In this section, we start with a main theorem to provide finite-sample error rate bounds for decomposable \predrules under the \gds model (\ref{def:confusionMatrix}). 

To lighten the notation, we define two functions as follows:
\begin{eqnarray}
&& \phi(x) = e^{-\frac{x^2}{2}}\qquad x\in\R , \label{def:phi}
\\
&& \D(x||y) =  x\ln{x\over y} + (1-x)\ln{{1-x}\over{1-y}} \qquad \forall x,y\in(0,1) .
 \label{eqn:defEndGeneral}
\end{eqnarray}
$\phi(\cdot)$ is the unnormalized standard Gaussian density function. 
$\D(x||y)$ is the Kullback-Leibler divergence of two Bernoulli distributions with parameters $x$ and $y$ respectively.


The following theorem provides sufficient conditions and the corresponding high probability bounds on the error rate under the \emph{general setting} as described in Section \ref{sec:generalSetting}. 

\begin{thm} \label{thm:MostGeneral_One}
\hwem{(Bounding error rate with high probability)}
Under the \gds model as in (\ref{def:confusionMatrix}), with the  prediction function for each item as in (\ref{def:predRuleFi}), and suppose the task assignment is based on probability matrix $\Q = (\qij)_{M\times N}$ where $\qij$ is the probability that the worker $i$ labels item $j$.

For $\forall \epsilon \in (0,1)$, with notations defined from (\ref{eqn:defBeginGeneral}) to (\ref{eqn:defEndGeneral}), we have:

(1) If \quad
$
\tone \geq \boundToneGeneral,
$
\quad
then 
\begin{eqnarray}
\errUpBoundEps \wpal
1- e^{- N D(\epsilon||(\nL-1)\phi(\tone))}.
\end{eqnarray}

(2) If \quad 
$
\ttwo \leq \boundTtwoGeneral,
$
\quad
then \vpull
\begin{eqnarray}
\errLowBoundEps \wpal
1- e^{- N D(\epsilon||1-\phi(\ttwo))}.
\end{eqnarray}
\end{thm}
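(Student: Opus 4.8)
The plan is to argue at two levels. First I would obtain a bound on the per-item misclassification probability $p_j := \P(\hyj \neq \yj)$ that is \emph{uniform} over $j$, and then aggregate the $N$ item-level error indicators through a relative-entropy Chernoff bound to control the average $\errRate$. The first level reduces $\{\hyj \neq \yj\}$ to pairwise comparisons of aggregated scores: conditioning on $\yj = k$, a correct decision requires $\sjk \ge \sjl$ for every $l \neq k$, while each score difference $\sjk - \sjl = \sumi\kua{\f_i(k,\zij) - \f_i(l,\zij)} + (\ak-\al)$ is, given $\yj = k$, a sum of independent (across workers) bounded terms with mean exactly $\gapjkl$.

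For the per-item estimates I would apply Hoeffding's inequality to $\sjk-\sjl$. The $i$-th summand ranges over an interval of length at most $2\max_h\abs{\muikh-\muilh}$, so the sum of squared ranges is at most $4\normalize^2$, and Hoeffding's factor of $2$ cancels this to give $\P(\sjk-\sjl\le 0\mid\yj=k)\le \phi(\gapjkl/\normalize)$ whenever $\gapjkl\ge 0$. For part (1), a union bound over the $\nL-1$ competitors, together with $\gapjkl/\normalize \ge \taujmin \ge \tone$ and the monotonicity of $\phi$ on $[0,\infty)$, yields $\P(\hyj\neq k\mid\yj=k)\le (\nL-1)\phi(\tone)$, hence $p_j\le(\nL-1)\phi(\tone)$ for all $j$. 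For part (2) I would instead lower-bound the error by keeping a single competitor $l$ (the one attaining $\taujmax$ for item $j$): since $\{\hyj=k\}\subseteq\{\sjk\ge\sjl\}$ and, in the regime forced by $\ttwo\le\boundTtwoGeneral$ (which I expect to be negative, $\boundTtwoGeneral=-\sqrt{2\ln\frac{1}{1-\epsilon}}$), all relevant gaps are negative, the same Hoeffding estimate applied to the opposite tail gives $\P(\sjk\ge\sjl\mid\yj=k)\le\phi(\ttwo)$, and therefore $p_j\ge 1-\phi(\ttwo)$ for all $j$.

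With the uniform per-item bounds in hand, the second level is a Chernoff argument for $S=\sumj\I{\hyj\neq\yj}$. Since the true labels are independent and, conditionally on them, the workers label items independently, the column-$j$ data (hence $\I{\hyj\neq\yj}$) are mutually independent across $j$, so $S$ is a sum of independent, non-identical Bernoulli variables. I would then bound
\[
\P(S \ge N\epsilon) \le \inf_{\lambda>0} e^{-\lambda N \epsilon}\prod_{j}\kua{1-p_j+p_j e^{\lambda}} \le e^{-N\D(\epsilon\|\bar p)},
\]
where $\bar p := \tfrac{1}{N}\sumj p_j$, the last step using the arithmetic–geometric mean inequality to replace the heterogeneous factors by $(1-\bar p+\bar p\,e^{\lambda})^N$ before optimizing over $\lambda$. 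The analogous lower-tail version handles part (2). Finally, the monotonicity of $\D(\epsilon\|\cdot)$—decreasing below $\epsilon$ and increasing above—lets me replace $\bar p$ by the worst admissible value, $(\nL-1)\phi(\tone)$ in part (1) and $1-\phi(\ttwo)$ in part (2); the hypotheses $\tone\ge\boundToneGeneral$ (i.e. $\tone\ge\sqrt{2\ln\frac{\nL-1}{\epsilon}}$) and $\ttwo\le\boundTtwoGeneral$ are exactly what places $\epsilon$ on the correct side of these means, making the divergences well defined.

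I expect the main obstacle to be the bookkeeping of signs and tail directions in part (2). Unlike part (1), where positive gaps make $\{\sjk\le\sjl\}$ a genuine lower-tail event, the lower bound on the error rate is informative only when the gaps are negative, and one must verify that taking the competitor attaining $\taujmax$ simultaneously keeps the relevant deviation an admissible Hoeffding tail and produces the \emph{uniform} constant $1-\phi(\ttwo)$ across all items via $\abs{\taujmax}\ge\abs{\ttwo}$. A secondary technical point is matching the Hoeffding range constant so that the exponent comes out as exactly $\gapjkl^2/(2\normalize^2)$, i.e. precisely $\phi(\gapjkl/\normalize)$, and carefully justifying the conditional independence of the item-level indicators given that a single worker contributes to many items.
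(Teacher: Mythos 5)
Your proposal is correct and follows essentially the same route as the paper: per-item pairwise score comparisons bounded by Hoeffding (with the union bound over the $\nL-1$ competitors for the upper tail and a single worst competitor for the lower tail), aggregation into a uniform per-item bound and hence a mean error rate bound, and then the relative-entropy Chernoff (Bernstein--Chernoff--Hoeffding) bound for the average of independent Bernoulli indicators, finished off by the monotonicity of $\D(\epsilon\|\cdot)$ on either side of $\epsilon$. The independence of the indicators $\I{\hyj\neq\yj}$ across $j$, which you rightly flag, holds exactly as you argue because the decomposable rule for item $j$ depends only on column $j$ of $Z$, and the paper uses this implicitly when invoking its Lemma on independent Bernoulli averages.
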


\remark 
The high probability bounds on error rate require conditions on $\tone$ and $\ttwo$, which are related to the normalized gap of aggregated scores (Section \ref{sec:quantities}). Basically, if the scores of predicting an item as its true label (predicted correctly) are larger than the scores of predicting as a wrong label, then it is more likely that the error rate will be small, thus it is bounded from above with high probability. The interpretation of the lower bound and its condition is similar to the upper bound. 


To ensure the probability of bounding the error rate to be at least $1-\delta$, we have to solve the equation $D(\epsilon||(\nL-1)\phi(\tone)) = \inv{N}\ln\inv{\delta}$, which cannot be solved analytically. Thus, we need to figure out the minimum $\tone$ for bounding the error rate with probability at least $1-\delta$. The following theorem serves this purpose by slightly relaxing the conditions on $\tone$ and $\ttwo$ in Theorem \ref{thm:MostGeneral_One}.

Before presenting the next theorem, we define 
a notation $C$  
, which depends on parameters $\epsilon, \delta \in (0,1)$, for lightening notations in the theorem. 
\begin{eqnarray}
\C \= \constC, 
\end{eqnarray}
where $\H(\epsilon)= -\epsilon\ln\epsilon -(1-\epsilon)\ln(1-\epsilon)$, which is the natural entropy of a Bernoulli random variable with parameter $\epsilon$.

\begin{thm} \label{thm:MostGeneral_DeltaExplicit}
With the same notation as 
in Theorem \ref{thm:MostGeneral_One}, for $\forall \epsilon,\delta\in (0,1)$, we have:

(1) if 
$
\tone \geq \sqrt{2\ln\braket{(\nL-1)\C} },
$
then $\errorrate \leq \epsilon$ \wpal $1-\delta$, 

(2) if 
$
\ttwo \leq -\sqrt{2\ln \G},
$
then $\errorrate \geq \epsilon$ \wpal $1-\delta$. 

\end{thm}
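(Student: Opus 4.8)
The plan is to obtain Theorem~\ref{thm:MostGeneral_DeltaExplicit} as a direct corollary of Theorem~\ref{thm:MostGeneral_One}, by turning each exponential tail bound into an explicit ``with probability at least $1-\delta$'' statement. For part~(1), Theorem~\ref{thm:MostGeneral_One} already gives $\P(\errorrate\leq\epsilon)\geq 1-e^{-N D(\epsilon||(\nL-1)\phi(\tone))}$ whenever $\tone\geq\boundToneGeneral$, so it suffices to force the exponent to satisfy $N\,D(\epsilon||(\nL-1)\phi(\tone))\geq \ln\frac{1}{\delta}$, equivalently $D(\epsilon||(\nL-1)\phi(\tone))\geq \frac{1}{N}\ln\frac{1}{\delta}$. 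The obstruction is exactly the one flagged before the theorem: $D(\epsilon||\cdot)$ is transcendental and cannot be inverted in closed form to isolate $\tone$, so I will instead replace it by a tractable lower bound and solve the resulting inequality.

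Concretely, I would first rewrite $D(\epsilon||y)=-\H(\epsilon)-\epsilon\ln y-(1-\epsilon)\ln(1-y)$. Under the hypothesis the argument $y=(\nL-1)\phi(\tone)$ lies in $(0,1)$, so $-(1-\epsilon)\ln(1-y)\geq 0$ and we may drop it, obtaining the clean lower bound $D(\epsilon||y)\geq -\H(\epsilon)-\epsilon\ln y$. Demanding that this weaker quantity already exceed $\frac{1}{N}\ln\frac{1}{\delta}$ gives $\ln y\leq -\frac{1}{\epsilon}\left(\frac{1}{N}\ln\frac{1}{\delta}+\H(\epsilon)\right)$; substituting $y=(\nL-1)\phi(\tone)$ with $\phi(\tone)=e^{-\tone^2/2}$ and solving for $\tone$ produces precisely $\tone\geq\sqrt{2\ln((\nL-1)\C)}$ with $\C=\constC$.

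For part~(2) I would proceed symmetrically. Here the relevant argument is $y=1-\phi(\ttwo)$, so $1-y=\phi(\ttwo)$ and $D(\epsilon||1-\phi(\ttwo))=-\H(\epsilon)-\epsilon\ln(1-\phi(\ttwo))-(1-\epsilon)\ln\phi(\ttwo)$. This time the nonnegative term to discard is $-\epsilon\ln(1-\phi(\ttwo))\geq 0$, leaving $D\geq -\H(\epsilon)-(1-\epsilon)\ln\phi(\ttwo)$. Forcing this to exceed $\frac{1}{N}\ln\frac{1}{\delta}$ and solving the same way---now with $1-\epsilon$ in place of $\epsilon$, and taking the negative square root because $\ttwo$ must be negative---yields $\ttwo\leq-\sqrt{2\ln\G}$, where $\G$ is the analogue of $\C$ with $\epsilon$ replaced by $1-\epsilon$.

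The step I expect to be the real content, rather than routine algebra, is the consistency check that makes these one-sided relaxations legitimate: each relaxed threshold must itself imply the hypothesis of Theorem~\ref{thm:MostGeneral_One}, otherwise that theorem cannot be invoked. For part~(1) this reduces to verifying $\C\geq \frac{1}{\epsilon}$ (so that $(\nL-1)\phi(\tone)\leq\epsilon<1$ and $\tone\geq\boundToneGeneral$), which follows from the entropy inequality $\H(\epsilon)\geq-\epsilon\ln\epsilon$ (dropping the other nonnegative entropy term), giving $\frac{1}{\epsilon}\H(\epsilon)\geq-\ln\epsilon$ and hence $\C\geq e^{-\ln\epsilon}=\frac{1}{\epsilon}$. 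For part~(2) the mirror inequality $\H(\epsilon)\geq-(1-\epsilon)\ln(1-\epsilon)$ gives $\G\geq\frac{1}{1-\epsilon}$, which places $\ttwo\leq-\sqrt{2\ln\G}$ inside the region $\ttwo\leq\boundTtwoGeneral$. Once both checks are in place, combining the construction ($\text{exponent}\geq\ln\frac{1}{\delta}$) with the applicability of Theorem~\ref{thm:MostGeneral_One} immediately delivers the two claims $\errorrate\leq\epsilon$ and $\errorrate\geq\epsilon$ each with probability at least $1-\delta$.
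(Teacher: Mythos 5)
Your proposal is correct and takes essentially the same route as the paper: reduce the claim to forcing the exponent $N\,\D(\epsilon||\cdot)$ from the Bernstein--Chernoff--Hoeffding bound to exceed $\ln\frac{1}{\delta}$, invert the resulting KL condition by discarding a nonnegative term, and verify that the explicit threshold is consistent with the hypothesis of the underlying tail bound (the paper packages these steps as Lemma~\ref{lemma:probBound} applied to the indicators $\I{\hyj\neq\yj}$, but the content is the same as invoking Theorem~\ref{thm:MostGeneral_One} directly). One small inaccuracy: your relaxation drops $-(1-\epsilon)\ln(1-y)$, i.e.\ bounds $y^{\epsilon}(1-y)^{1-\epsilon}$ by $y^{\epsilon}$, which yields the threshold $\exp\bigl(\frac{1}{\epsilon}(\H(\epsilon)+\frac{1}{N}\ln\frac{1}{\delta})\bigr)$ rather than ``precisely'' $\C=1+\exp\bigl(\frac{1}{\epsilon}(\H(\epsilon)+\frac{1}{N}\ln\frac{1}{\delta})\bigr)$, whereas the paper bounds $y^{\epsilon}(1-y)^{1-\epsilon}$ by $\bigl(y/(1-y)\bigr)^{\epsilon}$, which is what produces the additive $1$ in $\C$. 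Since your constant is smaller than $\C$, the stated hypothesis is stronger than your (valid) sufficient condition and the theorem still follows; the same remark applies to part (2) and $\G$.
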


\remark For any scenarios that can be formulated into the \emph{general setting} as in \ref{sec:generalSetting}, with $\tone$ and $\ttwo$ computed as in Section \ref{sec:quantities}, both Theorem \ref{thm:MostGeneral_One}  and  Theorem \ref{thm:MostGeneral_DeltaExplicit} can be applied. Therefore, in the rest of the paper, for any special case (Section \ref{sec:specialCases}) of the general setting , we will only present one of Theorem \ref{thm:MostGeneral_One}  and  Theorem \ref{thm:MostGeneral_DeltaExplicit}, and omit the other for clarity.



In practice, the mean error rate might be a better measure of performance because of its non-random nature. The method of evaluating the accuracy of a certain algorithm is often conducted by taking an empirical average of its performance in each trial, which is a consistent estimator of the mean error rate. Thus it will be of general interest to bound the mean error rate.

For the next theorem, we present the mean error rate bound under the general setting used in Theorem \ref{thm:MostGeneral_One}. 


\begin{thm}\label{thm:MostGeneral_meanErrorRate}
\hwem{(Bounding the mean error rate.)}
Under the same setting as in Theorem \ref{thm:MostGeneral_One}, with $\cH$ and $\sigtwo$ defined as in (\ref{def:cH}) and (\ref{def:sigma2}) respectively, 
\\
(1)
if~~$\tone \geq 0$,~~then \quad
$
\MER \leq (\nL-1)\cdot\min \hua{ \exp\kua{ - {\tone^2 \over 2}},~ \exp\kua{- \frac{\tone^2}{2\kua{\sigtwo + \cH\tone/3}} } },
$
\\
(2)
if~~$\ttwo \leq 0$,~~then \quad
$
\MER \geq 1- \min\hua{\exp\kua{-{\ttwo^2\over 2}},~ \exp\kua{- \frac{\ttwo^2}{2\kua{\sigtwo - \cH\ttwo/3}} } }.
$

\end{thm}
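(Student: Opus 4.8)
The plan is to control each per-item misclassification probability and then average, since $\MER=\inv{N}\sumj\P(\hyj\neq\yj)$. Fix an item $j$, condition on its true label $\yj=k$, and set $X_i\defas \f_i(k,\zij)-\f_i(l,\zij)$ for a competing class $l\neq k$. Under the decomposable rule (\ref{def:predRuleFi}) a misclassification forces some wrong class to beat $k$, so by a union bound
\[
\P(\hyj\neq k\mid \yj=k)\;\leq\;\sum_{l\neq k}\P\kua{\,\sjk-\sjl\leq 0 \mid \yj=k\,},
\]
and the score gap $\sjk-\sjl=\sumi X_i+(\ak-\al)$ is, conditionally on $\yj=k$, a constant plus a sum of the $M$ independent variables $X_i$ (independence coming from the Dawid--Skene assumption that workers label independently given the truth). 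Two structural facts make this a clean concentration problem: unlabeled entries contribute nothing, since $\f_i(k,0)$ is constant in $k$ forces $X_i=0$ whenever $\Tij=0$; and by construction $\E[\sjk-\sjl\mid\yj=k]=\gapjkl$, while $\sumi\E[X_i^2]=\sumi\sumhL\qij(\muikh-\muilh)^2\cmikh\leq\sigtwo\normalize^2$ and $|X_i|\leq\normalize\cH$.

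For part (1) I would apply two tail bounds to the lower deviation $\P(\sjk-\sjl\leq 0)=\P\big(\E[\sjk-\sjl]-(\sjk-\sjl)\geq\gapjkl\big)$. Since each $X_i$ lies in an interval of length at most $2\max_h|\muikh-\muilh|$ and $\sumi(\max_h|\muikh-\muilh|)^2\leq\normalize^2$, \textbf{Hoeffding's inequality} gives a bound of $\exp(-\gapjkl^2/(2\normalize^2))$; because $\gapjkl/\normalize\geq\tone$ this is at most $\exp(-\tone^2/2)$, the first term of the minimum. \textbf{Bernstein's inequality}, fed the variance proxy $\sigtwo\normalize^2$ and the uniform bound $\normalize\cH$, gives $\exp\!\big(-\tfrac{\gapjkl^2/2}{\sigtwo\normalize^2+\normalize\cH\gapjkl/3}\big)$; the exponent $g\mapsto\frac{g^2/2}{V+bg/3}$ is increasing in $g\geq 0$, so substituting the uniform lower bound $g=\tone\normalize$ only enlarges the probability and yields $\exp\!\big(-\tfrac{\tone^2}{2(\sigtwo+\cH\tone/3)}\big)$. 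Since each of the $\nL-1$ summands obeys both bounds uniformly in $(j,k,l)$, averaging over $j$ (and over $k$ against $\pri_k$) delivers part (1).

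Part (2) needs no union bound, which explains the absence of the $\nL-1$ factor: correct classification requires $\sjk\geq\sjl$ for \emph{every} $l\neq k$, so fixing a single $l$ gives $\P(\hyj=k\mid\yj=k)\leq\P(\sjk-\sjl\geq 0\mid\yj=k)$. Here $\ttwo\leq 0$ forces $\gapjkl\leq\ttwo\normalize\leq 0$, so this is an \emph{upper}-tail deviation of size $-\gapjkl\geq-\ttwo\normalize\geq 0$; the same Hoeffding and Bernstein estimates applied to the upper tail, with the monotonicity substitution $t=-\ttwo\normalize$, produce $\min\{\exp(-\ttwo^2/2),\,\exp(-\ttwo^2/(2(\sigtwo-\cH\ttwo/3)))\}$, the sign flip $-\cH\ttwo\geq 0$ keeping the denominator positive. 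Thus $\P(\hyj=\yj)\leq\min\{\cdots\}$ for every $j$, and $\MER=1-\inv{N}\sumj\P(\hyj=\yj)$ gives the stated lower bound.

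The routine work is the union bound and the algebra translating $\gapjkl,\ \sumi\E[X_i^2],\ |X_i|$ into $\tone,\ \sigtwo\normalize^2,\ \normalize\cH$. I expect the \textbf{Bernstein step to be the main obstacle}: one must verify the centered summands genuinely satisfy the variance and boundedness hypotheses (taking care that the a.s. bound on $|X_i|$ and the second-moment proxy enter with the right constants), and then justify the monotonicity argument that replaces the item-specific $\gapjkl$ by the uniform $\tone$ (resp. $\ttwo$). A secondary but essential point is fixing the tie-breaking convention of the $\argmax$ so that the (mis)classification event is correctly contained in $\{\sjk-\sjl\leq 0\}$ (resp. $\{\sjk-\sjl\geq 0\}$).
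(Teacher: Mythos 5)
Your proposal is correct and follows essentially the same route as the paper: a per-item conditioning on the true label, a union bound over competing classes (giving the $\nL-1$ factor) for the upper bound versus a single fixed competitor for the lower bound, and Hoeffding plus Bernstein applied to the conditionally independent score differences, with the monotonicity of the exponents used to replace $\gapjkl/\normalize$ by the uniform $\tone$ or $\ttwo$. The paper merely interposes an explicit per-item proposition with item-specific quantities $\taujmin,\taujmax$ before averaging, but the decomposition, the concentration inequalities, the variance proxy $\sigtwo\normalize^2$, and the a.s.\ bound $\cH\normalize$ are all identical to yours.
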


\remark The results above are composed of two exponential bounds, and neither is generally dominant over the other. Thus each component inside the min operator can be served as an individual bound for the mean error rate. Take the upper bound in (1) as an example, when $\tone$ is small (recall that both $\sigtwo$ and $\cH$ are bounded above by 1), the second component will be tighter than the first one. Thus the error rate bound behaves like $e^{-\tone}$. Otherwise, the first component will be tighter, and the mean error rate behaves like $e^{-\tone^2}$. 



All the proofs of the results in this section are deferred to Appendix \ref{app:proveTheorems}. In the following section, we demonstrate how the main results here can be applied to various settings of labeling by crowdsourcing, and provide theoretical bounds for the error rate correspondingly. 


\section{Apply error rate bounds to some typical scenarios}\label{sec:specialCases}


In this section, we apply the general results in Section \ref{sec:Mainresults} to some common  settings such as binary labeling, and \predrules such as majority voting and weighted majority voting. 
Specifically, we consider the following scenarios:


\begin{itemize}

\item Worker modeling: Recall that the \gds model is the same as the \sds model when $\nL=2$. We will cover the binary case with a certain \predrule under the \gds model. For $\nL>2$, 
we focus on multiclass labeling under the \hds model. 


\item Task assignment: We consider the scenario that the \taskAssign is based on probability vector $\qvec= (\q_1, \q_2, \cdots, \q_M)$ or a constant probability $\qs$.




\item \Predrule: We focus on  {weighted majority voting} (WMV) rule and  majority voting (MV) since they are intuitive and of common interest. In the case of binary labeling, we consider a general hyperplane rule. Later, we present results for Maximum A Posteriori rule. 



\end{itemize}

\subsection{Error rate bounds for hyperplane rule, WMV and MV}

It turns out that in practice, many prediction methods for binary labeling ($\yj\in\hua{\pm 1}$) can be formulated as a sign function of a  hyperplane, such as majority voting, weighted majority voting and the \omaprule (Section \ref{sec:MLE}). In this section, we are going to apply our results in Section \ref{sec:Mainresults} to discuss the error rate of the \predrule, whose decision boundary is a hyperplane in a high dimensional space that the label vector of each item (i.e., all the labels from the workers for this item) lies in. Formally, the \predrule is
\begin{eqnarray}\label{def:HyperplaneRule}
\hyj= \sign\kua{\sumi \vi \zij + \aconst}
\end{eqnarray}
This is called the \emph{general hyperplane rule} with unnormalized weights $\v= (\v_1, \cdots, \v_M)$ and shift constant $\aconst$. 
For binary labeling, majority voting is a special case with $\vi=1, \forall i\in \M$ and $\aconst=0$.



Theorem \ref{thm:MostGeneral_One} and Theorem \ref{thm:MostGeneral_meanErrorRate} can be directly applied to derive the error rate bounds of general hyperplane rule in the following corollary.

When \taskAssign is based on probability vector $\qvec$, the corresponding quantities of interest for the hyperplane rule (as in Section \ref{sec:quantities}) are defined as follows:

\begin{eqnarray}
&& \tone = \inv{\vnorm} \braket{ \kua{\sumi \qi\vi(2\ppi-1) + \aconst} \minop \kua{\sumi \qi\vi(2\pni-1) - \aconst} } \label{eqn:tone_hyperplane}
\\
&& \ttwo = \inv{\vnorm}\braket{ \kua{\sumi \qi\vi(2\ppi-1) + \aconst} \maxop \kua{\sumi \qi\vi(2\pni-1)- a} } 
\label{eqn:ttwo_hyperplane}
\\
&& \cH = \frac{||\vweight||_\infty}{\vnorm}
\connect
\sigtwo = \inv{\vnorm^2}\sumi \qi\vi^2
\label{eqn:sigtwo_hyperplane}, 
\end{eqnarray}
where $\minop$ and $\maxop$ are \emph{min} and \emph{max} operators respectively, $||\vweight||_\infty \defas \max_{i\in\M}|\vi|$, and functions $\phi(x)$ and $\D(x||y)$ are defined  in (\ref{def:phi}) and (\ref{eqn:defEndGeneral}).

\begin{cor} \label{cor:erbc_hyperPlane}
\emph{(Hyperplane rule in binary labeling)}
Consider binary labeling under the \gds model, i.e., $\yj\in \hua{\pm 1}$,  $\ppi$ and $\pni$ are defined as in (\ref{def:ppi}). 
 Suppose the \taskAssign is based on probability vector $\qvec = (\q_1, \q_2, \cdots, \q_M)$ where $\qi$ is the probability that worker $i$ labels an item,   
 \predrule is general hyperplane rule as in (\ref{def:HyperplaneRule}), and the quantities of interest are defined in (\ref{eqn:tone_hyperplane})-(\ref{eqn:sigtwo_hyperplane}), 
for $\forall \epsilon \in (0,1)$:
\\
(1) If \quad $\tone \geq 0$, then \quad 
$
\MER \leq \binMerUpBound. 
$

Furthermore, if ~
$
\tone \geq \boundTone,
$
~ then ~
$
\PErrUpExp .
$
\\
(2) If \quad $\ttwo \leq 0$, then \quad
$
\MER \geq \MerLowBound. 
$

Furthermore, if
$
\ttwo \leq \boundTtwo,
$
 then 
$ 
\PErrLowExp . 
$

\end{cor}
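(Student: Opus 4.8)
The plan is to recognize the general hyperplane rule (\ref{def:HyperplaneRule}) as a special case of the decomposable \predrule (\ref{def:predRuleFi}) in the binary setting $\nL=2$, $\Labset=\hua{\pm1}$, and then to obtain the corollary by directly specializing Theorem \ref{thm:MostGeneral_meanErrorRate} (for the mean error rate bounds) and Theorem \ref{thm:MostGeneral_One} (for the high-probability bounds). The only real content is a verification that, under the hyperplane rule with task assignment by $\qvec$, the general quantities of Section \ref{sec:quantities} collapse to the closed forms (\ref{eqn:tone_hyperplane})--(\ref{eqn:sigtwo_hyperplane}); once this is established, the two theorems apply with $\nL-1=1$.

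First I would write the rule in decomposable form. Taking the score functions $f_i(+1,h)=\vi h$ and $f_i(-1,h)=0$, with shift constants $a_{+1}=\aconst$ and $a_{-1}=0$, the score difference is $s_{j,+1}-s_{j,-1}=\sumi\vi\zij+\aconst$, so $\argmax_{k\in\hua{\pm1}}s_{j,k}=+1$ precisely when $\sumi\vi\zij+\aconst>0$; this reproduces (\ref{def:HyperplaneRule}). Since $f_i(k,0)=0$ does not depend on $k$, a missing label contributes nothing, so the construction is a legitimate decomposable \predrule and the theorems of Section \ref{sec:Mainresults} are applicable.

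Next I would compute the quantities of interest, using that the relevant difference is $f_i(+1,h)-f_i(-1,h)=\vi h$ and that $|h|=1$ for $h\in\hua{\pm1}$. This immediately gives $\normalize=\sqrt{\sumi\vi^2}=\vnorm$ and $\cH=\max_i|\vi|/\vnorm=\|\vweight\|_\infty/\vnorm$. For the gaps, substituting the binary confusion entries $c_i(+1,+1)=\ppi$, $c_i(+1,-1)=1-\ppi$, $c_i(-1,-1)=\pni$, $c_i(-1,+1)=1-\pni$ and $q_{ij}=\qi$ into (\ref{def:gapjkl}) yields $\gapjkl$ equal to $\sumi\qi\vi(2\ppi-1)+\aconst$ for $(k,l)=(+1,-1)$ and equal to $\sumi\qi\vi(2\pni-1)-\aconst$ for $(k,l)=(-1,+1)$. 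Because these gaps are independent of $j$, taking $\min_j$ of the normalized minimal gap and $\max_j$ of the normalized maximal gap reduces $\tone$ and $\ttwo$ to the $\minop$ and $\maxop$ expressions (\ref{eqn:tone_hyperplane})--(\ref{eqn:ttwo_hyperplane}). Finally, since $h^2=1$ and the two conditional probabilities in each row sum to one, the inner sum in the definition of $\sigtwo$ collapses to $\sumi\qi\vi^2$, giving $\sigtwo=\sumi\qi\vi^2/\vnorm^2$, matching (\ref{eqn:sigtwo_hyperplane}).

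With these identities in hand, part (1) follows from Theorem \ref{thm:MostGeneral_meanErrorRate}(1) (with $\nL-1=1$, so the $(\nL-1)$ prefactor disappears) for the mean error rate bound under $\tone\ge0$, and from Theorem \ref{thm:MostGeneral_One}(1) for the sharper exponential-in-$N$ high-probability bound once $\tone$ exceeds the stated threshold; part (2) is entirely symmetric, via Theorem \ref{thm:MostGeneral_meanErrorRate}(2) and Theorem \ref{thm:MostGeneral_One}(2). I do not expect a genuine obstacle: the argument is a careful but routine bookkeeping specialization. The points demanding the most care are getting the signs of the two gaps right — the true label $+1$ versus $-1$ produces the two different terms, with the shift constant $\aconst$ entering with opposite signs — and confirming that the $h=0$ contribution vanishes so the missing-label convention is respected; both are handled by the construction above.
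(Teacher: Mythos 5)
Your proposal is correct and follows essentially the same route as the paper: cast the hyperplane rule as a decomposable prediction rule, verify that the quantities of Section \ref{sec:quantities} reduce to (\ref{eqn:tone_hyperplane})--(\ref{eqn:sigtwo_hyperplane}), and invoke Theorems \ref{thm:MostGeneral_One} and \ref{thm:MostGeneral_meanErrorRate} with $\nL-1=1$. Your choice of score functions ($f_i(+1,h)=\vi h$, $f_i(-1,h)=0$) differs cosmetically from the paper's ($f_i(k,h)=\vi\I{\,h=k\,}$), but the two have identical differences $f_i(k,h)-f_i(l,h)$, and since every quantity is translation-invariant in exactly this sense, the computations coincide.
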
 

\begin{proof}
The \predrule can be expressed as 
$
\hyj= \argmax_{k\in\hua{\pm 1}} \sumi \vi\I{\zij=k} + \ak, 
$
where $\aconst_+ = a$ and $\aconst_-=0$. 
Directly apply Theorem \ref{thm:MostGeneral_One} and Theorem \ref{thm:MostGeneral_meanErrorRate}, with $\muikk=\vi$ for $k\in\hua{\pm 1}$, $\muikl=0$ for $k\neq l$, and denote $\cmikl$ in terms of $\ppi$ and $\pni$, $\qij=\qi, \forall j\in\N$,  we can obtain the desired result immediately. 
\end{proof}

\remark From this corollary, we know that if $\tone$ is big enough, then the error rate will be upper bounded by $\epsilon$ with high probability. This means when the workers' reliabilities are generally good, it is very likely that we will have high quality aggregated results. 
Note that usually we can freely choose $\q$, $\v$ and $\aconst$, so the most important factors are worker reliabilities on positive samples ($\ppi$) and negative samples ($\pni$). 


If one needs to know, given $\epsilon$ and $\delta$, what are the results under the setting above corresponding to Theorem \ref{thm:MostGeneral_DeltaExplicit}, we can simply compute $\tone$ and $\ttwo$ as listed in Corollary \ref{cor:erbc_hyperPlane}, then the conditions and bounds in Theorem \ref{thm:MostGeneral_DeltaExplicit} hold.

As mentioned in the beginning of this section, weighted majority voting (WMV) is an important special case covered by our results in Section \ref{sec:Mainresults}. 
For the next several results, we focus on the mean error rate bound of WMV and MV under \hds model.

For simplicity, we consider the case where \taskAssign is based on constant probability $\qs$. Therefore, in the following corollary, the label of any entry $(i,j)$ is assumed to be revealed with constant probability $q$, and weighted majority voting (\ref{def:wmvRule}) is applied to obtain the aggregated labels in the end. 

\begin{cor}\label{cor:wmv_merBound}
\emph{(Weighted majority voting under \hds model)}
For weighted majority voting, whose prediction rule is $\hyj = \argmax_{k\in \Labset} \sumi \vi \I{\zij=k}$ with $\vweight=(\vweight_1, \vweight_2,\cdots, \vweight_M)$.  Assume the \taskAssign is based on constant probability $\qs\in (0,1]$, and assume the workers are modeled by the \hds model with parameter $\hua{\wi}_{i=1}^M$, then with
\begin{eqnarray}\label{def:tcsigtwoWMV}
\tsame = \frac{\qs}{(\nL-1)\vnorm}\sumi \vi(\nL \wi -1) 
,\quad
\cH= \frac{||\vweight||_\infty}{\vnorm}
\connect
\sigtwo = \qs
\end{eqnarray}

(1) if $\tsame \geq 0$, then 
$
\MER \leq (\nL-1)\min \hua{ \exp\kua{ - {\tsame^2 \over 2}},~ \exp\kua{- \frac{\tsame^2}{2\kua{\sigtwo + \cH\tsame/3}} } }
$

(2) if $\tsame \leq 0$, then 
$
\MER \geq 1- \min\hua{\exp\kua{-{\tsame^2\over 2}},~ \exp\kua{- \frac{\tsame^2}{2\kua{\sigtwo - \cH\tsame/3}} } }
$ 
\end{cor}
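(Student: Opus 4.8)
The plan is to recognize Corollary \ref{cor:wmv_merBound} as a direct specialization of Theorem \ref{thm:MostGeneral_meanErrorRate}: once the quantities $\tone$, $\ttwo$, $\cH$ and $\sigtwo$ are evaluated explicitly in the present setting (WMV score functions, \hds worker model, constant assignment probability $\qs$), the two displayed bounds follow by substituting into that theorem. Thus the bulk of the work is a chain of elementary evaluations, and the one structural fact worth isolating is that the homogeneity of the model collapses $\tone$ and $\ttwo$ into a single number $\tsame$.

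First I would record that WMV is the decomposable rule (\ref{def:predRuleFi}) with $\muikh = \vi\I{h=k}$ and $\ak = 0$ for all $k$. From (\ref{eqn:defBeginGeneral}) this gives $\normalize = \sqrt{\sumi \vi^2} = \vnorm$, since $\max_{h, k\neq l}|\muikh - \muilh| = |\vi|$; the same observation applied to (\ref{def:cH}) gives $\cH = \|\vweight\|_\infty / \vnorm$.

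Next I would compute the expected score gap $\gapjkl$ in (\ref{def:gapjkl}). Under the \hds model, conditioning on $\yj = k$ gives $\cmikk = \wi$ and $\cmikl = \frac{1-\wi}{\nL-1}$ for $l \neq k$. Substituting $\muikh - \muilh = \vi(\I{h=k} - \I{h=l})$ and summing against $\cmikh$ leaves only the $h = k$ and $h = l$ terms, producing $\vi(\wi - \frac{1-\wi}{\nL-1}) = \vi\frac{\nL\wi - 1}{\nL-1}$ per worker; with $\qij = \qs$ and $\ak - \al = 0$ this yields $\gapjkl = \frac{\qs}{\nL-1}\sumi \vi(\nL\wi - 1)$, which is crucially independent of $j$, $k$ and $l$. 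Hence $\taujmin = \taujmax$ for every $j$, so $\tone = \ttwo = \tsame$; this collapse is what allows a single threshold $\tsame$ to drive both the upper bound (case $\tsame \geq 0$) and the lower bound (case $\tsame \leq 0$).

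Finally I would treat $\sigtwo$. Using $(\muikh - \muilh)^2 = \vi^2(\I{h=k} + \I{h=l})$ for $k \neq l$, the expression (\ref{def:sigma2}) reduces to $\frac{\qs}{\vnorm^2}\sumi \vi^2\kua{\wi + \frac{1-\wi}{\nL-1}}$. The main obstacle, such as it is, is that this does not literally equal $\qs$ once $\nL > 2$; I would dispose of it with the elementary inequality $\wi + \frac{1-\wi}{\nL-1} \leq 1$, valid for $\nL \geq 2$ and $\wi \leq 1$ since it rearranges to $(1-\wi)\frac{2-\nL}{\nL-1} \leq 0$ (this is also guaranteed in general by the bound $\sigtwo \leq \max_{i,j}\qij$ noted after (\ref{def:sigma2})). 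Because $\sigtwo$ enters the two bounds only through the denominators $\sigtwo + \cH\tsame/3$ and $\sigtwo - \cH\tsame/3$ (with $\tsame \leq 0$ in the latter), enlarging $\sigtwo$ to $\qs$ only weakens each bound, so the stated inequalities continue to hold. Substituting $\tone = \ttwo = \tsame$, $\cH = \|\vweight\|_\infty/\vnorm$ and $\sigtwo = \qs$ into Theorem \ref{thm:MostGeneral_meanErrorRate} then gives the corollary.
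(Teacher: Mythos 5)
Your proposal is correct and follows the same route as the paper's own proof: specialize Theorem \ref{thm:MostGeneral_meanErrorRate} by evaluating $\normalize$, $\gapjkl$, $\cH$ and $\sigtwo$ for the WMV score functions under the \hds model with constant assignment probability $\qs$, and observe that the $j,k,l$-independence of the expected score gap collapses $\tone$ and $\ttwo$ into the single quantity $\tsame$, which is exactly why one threshold drives both bounds. You are in fact more careful than the paper's one-line proof on one point: the definition (\ref{def:sigma2}) literally yields $\sigtwo = \frac{\qs}{\vnorm^2}\sumi\vi^2\kua{\wi+\frac{1-\wi}{\nL-1}}\leq \qs$, with equality only when $\nL=2$ (or all $\wi=1$), and your monotonicity argument showing that enlarging $\sigtwo$ to $\qs$ only weakens both the upper and the lower bound is the right way to justify the statement as written.
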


\begin{proof}
Under this setting, we have $\cmikk= \wi$ and $\cmikl= \frac{1-\wi}{\nL-1}$ when $k\neq l$. Meanwhile, $\muikk= \vi$ for any $k\in\Labset$ and $\muikl=0$ for $k,l\in\Labset, k\neq l$. Then $\normalize= \vnorm$. By plugging these above into the definitions of $\tone, \ttwo, \cH$ and $\sigtwo$ in Section \ref{sec:Mainresults}, we can then obtain the results.  
\end{proof}

\remark Note that $\tone = \ttwo =\tsame$ under the setting of this corollary. By replacing $\tone$ and $\ttwo$ with $\tsame$, the high probability bound on the error rate as in Theorem \ref{thm:MostGeneral_One} and Theorem \ref{thm:MostGeneral_DeltaExplicit} will hold as well. It is worth mentioning that the measure $t$ is of critical importance for the quality of the final aggregated results. It depends not only on the workers' reliability, but also on the weight vector $\vweight$, which can be chosen by us. This leaves room for us to study the best possible weight based on the bound we derived here. In Section \ref{sec:bound_optimal}, we will investigate the optimal weight in detail.

As a further special case of weighted majority voting and a commonly used prediction rule in crowdsourcing, the {majority voting} (MV) rule uses the same weight for each worker. It can be formally expressed as 
$
\hyj= \argmax_{k\in\Labset} {\sumi \I{\zij=k}}.
$
We can then directly obtain the error rate bounds of MV under the \hds model. 

\begin{cor}\label{cor:MV_mer_hds_q}
\emph{(Majority voting under \hds model)}
For majority voting with uniform random labeling distribution $\qs \in (0,1]$, and $\wbar= \inv{M}\sumi\wi$, 
if \quad $\wbar > \inv{\nL}$, \quad then 
\begin{eqnarray}\label{bound:MV1}
\MER \leq (\nL-1)\cdot\exp \hua{- \inv{2} \kua{\frac{\nL}{\nL-1}}^2 M \qs^2 \kua{\wbar - \inv{\nL}}^2 }.
\end{eqnarray}
Meanwhile, we  have
\begin{eqnarray}\label{bound:MV2}
\MER \leq (\nL-1)\cdot\exp \hua{- \frac{\inv{2} \kua{\frac{\nL}{\nL-1}}^2 M \qs \kua{\wbar - \inv{\nL}}^2 } 
{1+\frac{L}{3(\nL-1)}(\wbar-\inv{\nL})} }.
\end{eqnarray}
When $\qs < \frac{3}{4}$, the second upper bound (\ref{bound:MV2}) is tighter than the first one (\ref{bound:MV1}).

\end{cor}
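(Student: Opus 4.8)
The plan is to treat majority voting as the special case of weighted majority voting with uniform weights and then specialize Corollary \ref{cor:wmv_merBound}. Concretely, I would set $\vi = 1$ for every $i\in\M$, so that $\vnorm = \sqrt{M}$ and $\|\vweight\|_\infty = 1$, giving immediately $\cH = 1/\sqrt{M}$ and $\sigtwo = \qs$. The only quantity needing a short computation is $\tsame$: using $\sumi(\nL \wi - 1) = \nL M\wbar - M = M(\nL\wbar - 1)$, I obtain
\[
\tsame = \frac{\qs}{(\nL-1)\sqrt{M}}\,M(\nL\wbar - 1) = \qs\sqrt{M}\,\frac{\nL}{\nL-1}\kua{\wbar - \inv{\nL}}.
\]
The hypothesis $\wbar > 1/\nL$ is then exactly $\tsame > 0$, which is the condition needed to invoke part (1) of Corollary \ref{cor:wmv_merBound}.

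With these substitutions in hand, the two claimed bounds fall out by plugging into the two components of the min in Corollary \ref{cor:wmv_merBound}(1). Squaring $\tsame$ turns $\exp(-\tsame^2/2)$ into (\ref{bound:MV1}). For the Bernstein-type component I would first compute the denominator $\sigtwo + \cH\tsame/3 = \qs\kua{1 + \frac{\nL}{3(\nL-1)}(\wbar - \inv{\nL})}$; substituting this together with $\tsame^2$ into $\tsame^2/\bigl[2(\sigtwo + \cH\tsame/3)\bigr]$, one factor of $\qs$ cancels and the expression collapses to the exponent in (\ref{bound:MV2}). Since both components are valid upper bounds coming from the same corollary, each holds on its own, as the statement asserts.

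The final comparison is the only place that needs a genuine (if minor) argument. The second bound is tighter than the first exactly when its exponent is larger, i.e.\ when $\sigtwo + \cH\tsame/3 < 1$. To control this, I would use that each $\wi\le 1$, hence $\wbar\le 1$ and $\wbar - \inv{\nL} < \frac{\nL-1}{\nL}$, which gives $\frac{\nL}{3(\nL-1)}(\wbar - \inv{\nL}) < \inv{3}$ and therefore $\sigtwo + \cH\tsame/3 < \frac{4}{3}\qs$. Thus $\qs < 3/4$ forces this quantity below $1$, establishing the comparison. I do not expect any real obstacle: the proof is a direct specialization, and the only care required is tracking the algebra so the $\qs$ factors cancel correctly in the second exponent, and invoking $\wbar\le 1$ to close the comparison.
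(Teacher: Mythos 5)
Your proposal is correct and matches the paper's own proof essentially step for step: both specialize Corollary \ref{cor:wmv_merBound} with $\vi=1$, compute $\tsame = \qs\sqrt{M}\frac{\nL}{\nL-1}(\wbar-\inv{\nL})$, and establish the comparison by showing the factor $\qs\bigl(1+\frac{\nL}{3(\nL-1)}(\wbar-\inv{\nL})\bigr)$ is below $1$ via $\wbar\le 1$ and $\qs<\frac{3}{4}$. No gaps.
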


\begin{proof}
For obtaining the upper bounds, we directly apply  Corollary \ref{cor:wmv_merBound} by letting $\vi=1$, then $\vnorm=\sqrt{M}$ and  note that $\sumi(\nL\wi-1)= \nL M (\wbar-\inv{\nL})$. By direct simplification, we get the desired result. 
The only difference in the two bounds is that the exponent in the second one is equal to the first one dividing by the factor $\alpha= \qs(1+\frac{L}{3(\nL-1)}(\wbar-\inv{\nL}))$. Since $\wbar\leq 1$, then $\frac{L}{3(\nL-1)}(\wbar-\inv{\nL}) \leq \inv{3}$, and $\alpha < 1$ when $\qs < \frac{3}{4}$. Therefore, $\qs < \frac{3}{4}$ implies that the second bound is tighter than the first one. 
\end{proof}
\remark (1) In real crowdsourcing applications, it is common that $\qs$ will be small enough due to the reasonable size of the available crowd \citep{Snow_emnlp08}. Thus the second bound will likely be tighter than the first one in practice. 
(2) The lower bound and its conditions can be easily derived similarly, so we omit them here since we are more interested in the ``possibilities" of controlling the error rate to be small. 


Due to the importance of majority voting in the crowdsourcing community, we are also interested in asymptotic properties of the bound. The following corollary discusses the case when $M \rightarrow \infty$, that is, when the number of workers who label items tends to infinity. 




\begin{cor}\label{cor:asympOneCoin}
\emph{(Majority voting in the asymptotic scenario)}
For the \hds model with \taskAssign based on constant probability $\qs\in (0,1]$, and let $\hyj$ be the predicted label for the $j$th item via majority voting rule (\ref{def:mvRule}), for any $N\geq 1$, 

(1) if \quad $\limM\wbar > \inv{\nL}$, \quad then \quad $\errorrate \goto 0$ in probability as $M \goto \infty$;

(2) if \quad$\limM\wbar < \inv{\nL}$, \quad then \quad $\errorrate \goto 1$ in probability as $M\goto \infty$;

(3) if \quad $\limM\wbar >\inv{\nL}$,~ then ~~ $\hyj \goto \yj$ in probablitliy as $M\goto \infty$, $\forall j\in \N$, i.e., majority voting is consistent. 
\end{cor}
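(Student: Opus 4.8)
The plan is to derive all three conclusions from the finite-sample mean error rate bounds already established for majority voting under the \hds model, passing from convergence of the mean to convergence in probability via Markov's inequality. For part (1) I would invoke the upper bound (\ref{bound:MV1}) of Corollary \ref{cor:MV_mer_hds_q}. The hypothesis $\limM\wbar > \inv{\nL}$ ensures that there exist a constant $c>0$ and an index $M_0$ with $\wbar - \inv{\nL} \geq c$ for all $M \geq M_0$; hence the exponent in (\ref{bound:MV1}) is at most $-\inv{2}(\frac{\nL}{\nL-1})^2 \qs^2 c^2 M \goto -\infty$, so $\MER \goto 0$. Since $\errorrate$ takes values in $[0,1]$ and is nonnegative, Markov's inequality gives $\P(\errorrate > \epsilon) \leq \MER/\epsilon \goto 0$ for every $\epsilon \in (0,1)$, which is exactly $\errorrate \goto 0$ in probability.

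For part (3) I would observe that under the \hds model with constant assignment probability $\qs$ and equal weights every item is exchangeable: the per-item quantity $\taujmin$ is identical across $j$, so $\tone = \minj \taujmin$ loses nothing by minimizing over items. Consequently the concentration argument underlying Theorem \ref{thm:MostGeneral_meanErrorRate} bounds each individual term $\P(\hyj \neq \yj)$ by the very expression on the right of (\ref{bound:MV1}), not merely their average. Under $\limM \wbar > \inv{\nL}$ this bound tends to $0$ for every fixed $j \in \N$, which is precisely the definition of $\hyj \goto \yj$ in probability.

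Part (2) is the mirror image, using the lower bound of Corollary \ref{cor:wmv_merBound}(2) specialized to $\vi = 1$ (equivalently the lower bound analog noted in the remark after Corollary \ref{cor:MV_mer_hds_q}). With $\vi=1$ one has $\tsame = \frac{\qs \nL \sqrt{M}}{\nL-1}(\wbar - \inv{\nL})$, so the assumption $\limM \wbar < \inv{\nL}$ makes $\tsame$ eventually negative with $\tsame^2 \goto \infty$; both terms inside the min then vanish and the lower bound forces $\MER \goto 1$. Applying Markov to the nonnegative bounded variable $1 - \errorrate$ yields $\P(1 - \errorrate > \epsilon) \leq \E[1-\errorrate]/\epsilon \goto 0$, i.e. $\errorrate \goto 1$ in probability.

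The step I expect to require the most care is the per-item claim in part (3): one must confirm that the bound of Corollary \ref{cor:MV_mer_hds_q}, stated for the mean, is genuinely a uniform bound on each $\P(\hyj \neq \yj)$. This holds because $\tone$ equals the common per-item normalized gap in this symmetric setting, so no item is harder than the average. If one prefers not to appeal to the internal structure of the earlier proof, the alternative is to rerun the single-item Chernoff/Bernstein bound directly on the score gap $\sjk - \sjl$ for the fixed item $j$, which reproduces the same exponent and hence the same conclusion.
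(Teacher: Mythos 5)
Your proposal is correct and follows essentially the same route as the paper: both deduce $\MER \to 0$ (resp.\ $\to 1$) from the finite-sample bounds of Corollary \ref{cor:MV_mer_hds_q} (noting the hypothesis drives the exponent to $-\infty$) and then upgrade to convergence in probability. The only cosmetic differences are that you invoke Markov's inequality where the paper argues item-by-item through the indicators $\I{\hyj\neq\yj}$, and for part (3) your exchangeability argument can be replaced by the simpler observation the paper uses implicitly: for fixed $N$, an average of finitely many nonnegative terms tending to zero forces each term $\P(\hyj\neq\yj)$ to zero.
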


\begin{proof}
The setting of this corollary is the same as in Corollary \ref{cor:MV_mer_hds_q}, and $\tone= \frac{\nL \qs}{\nL-1}\sqrt{M}(\wbar - \inv{\nL})$. When $\limM \wbar>\inv{\nL}$, there exists $\eta>0$ such that $\eta= \limM\wbar -\inv{\nL}$, which implies $\limM \tone = +\infty$, which further implies $\MER = 0$ for any $N\geq 1$ by Corollary \ref{cor:MV_mer_hds_q}. This is to say that $\limM \P(\hyj\neq \yj) = \limM \P(\I{\hyj\neq \yj}= 1)= 0$, which further implies that $\limM \P(\I{\hyj\neq \yj}= 0)= 1$. Note that for arbitrary $\epsilon>0$, we have $\P(\I{\hyj\neq \yj} < \epsilon)\geq \P(\I{\hyj\neq \yj}=0)$, then $\limM \P(\I{\hyj\neq\yj} \geq \epsilon)= 0$, i.e., $\I{\hyj\neq\yj} \goto 0, ~\forall j\in\N$ in probability when $M\goto \infty$ . By the properties of convergence in probability, $\errorrate \goto 0$ in probability when $M\goto \infty$. With the same argument, one can prove (2) as well.
To prove (3), note that $\P(\hyj-\yj=0)= \P(\I{\hyj\neq\yj}=0) \goto 0$ as $M\goto \infty$, which implies $\hyj\goto \yj$ in probability as $M\goto \infty$.
\end{proof}

\hwem{Remark:}
This corollary tells us that, if the average quality of the workers (i.e. the accuracy of labeling items) in the worker population, is better than random guess, then all the items can be labeled correctly with arbitrarily high probability when there are enough workers available.
The consistency property of majority voting for finite number of items ensures us that as long as there are enough reliable workers (better than random guess) available, we can achieve arbitrary accuracy even by a simple aggregating approach --- majority voting. 

The examples we covered in this section do not require the estimation of parameters such as worker reliabilities. In the next section, we discuss the maximizing likelihood  methods for inferring the parameters in crowdsourcing models, such as the celebrated EM (Expectation-Maximization) algorithm. Then we illustrate how our main results can be applied to analyze an underlining method that the ML methods approximate. 

\subsection{Error rate bounds for the Maximum A Posteriori rule}\label{sec:MLE}

If we know the label posterior distribution of each item defined in (\ref{def:rhojk}), then the Bayes classifier is
\begin{equation}\label{eqn:Bayes_Pred}
\hyj = \argmax_\kinL \rhojk,
\end{equation}
which is well known to be the optimal classifier \citep{Duda2012}. 

In reality, we do not know the true parameters of the model, thus the true posterior remains unknown. One natural way is to estimate the parameters of the model by Maximum Likelihood methods, further estimate the posterior of each label class for each item, and then build a classifier based on that. This is usually called the \emph{Maximum A Posteriori} (MAP) approach \citep{Duda2012}. 
After the EM algorithm estimating parameters, the Maximum A Posteriori (MAP) rule, which predicts the label of an item as the one that has the largest estimated posterior, can be applied. 
The prediction function of such a rule is
\begin{eqnarray}
\hyj =\argmax_\kinL \hrhojk,
\end{eqnarray}
where $\hrhojk$ is the estimated posterior. 
If the MAP rule is applied after the parameters are learned by the EM algorithm, then we call this method the \emph{EM-MAP rule}, and sometimes simply refer it to \emph{EM} without introducing any ambiguity in the context. 

However, the EM algorithm cannot guarantee convergence to the global optimum. Thus, the estimated parameters might be biased from the true parameters and the estimated posterior might be far away from the true one if it starts from a ``bad" initialization. 
Moreover, it is generally hard to study the solution of the EM algorithm, and thus it is relatively difficult for us to obtain the error rate for the EM-MAP rule. 

We consider the \emph{oracle MAP rule}, which assumes there is an oracle who knows the true parameters and uses the true posterior to predict labels.
Hence the oracle MAP rule is the Bayes classifier \eqref{eqn:Bayes_Pred}, and recall that its
prediction function is 
$
\hyj= \argmax_\kinL \rhojk,
$
where $\rhojk $ is the true posterior of $\yj=k$. 
Based on our empirical observations (Section \ref{sec:experiment}), the EM-MAP rule approximates the oracle MAP rule well in performance when most of the workers are good (better than random guess). In the next, we provide an error rate bound for the oracle MAP rule, which hopefully will help us understand the EM-MAP rule better. 


%


The following result is about the error rate bounds on the oracle MAP rule, and it can be straightforwardly derived from the main results in Section \ref{sec:Mainresults} since the \omaprule is a decomposable rule as in (\ref{def:predRuleFi}).

\begin{cor}\label{cor:oracleMAP_gds}
\emph{(Error rate bounds of the oracle MAP rule under the \gds model)} Suppose there is an oracle that knows the true parameters $\para=\hua{\hua{\CMi}_{i\in\M}, \Q, \pi}$ where $\CMi=\kua{\cmikh}_{k,h\in\Labset} \in(0,1]^{\nL\times\nL}$. The prediction function of the oracle MAP rule is $\hyj= \argmax_\kinL \rhojk$, where $\rhojk$ is the true posterior. 
All the error rate bounds in Theorem \ref{thm:MostGeneral_One}, \ref{thm:MostGeneral_DeltaExplicit} and \ref{thm:MostGeneral_meanErrorRate}  hold for the oracle MAP rule with $\muikh \defas \log\cmikh, ~\forall k,h\in\Labset$.
\end{cor}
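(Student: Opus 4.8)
The plan is to exhibit the oracle MAP rule as a decomposable \predrule of the form (\ref{def:predRuleFi}) with $\muikh=\log\cmikh$ and $\ak=\log\pri_k$, and then to invoke Theorems \ref{thm:MostGeneral_One}, \ref{thm:MostGeneral_DeltaExplicit} and \ref{thm:MostGeneral_meanErrorRate} verbatim. All the real content lies in identifying the correct $\f_i$ and $\ak$ and in checking that the hypotheses of those theorems hold; once that is done, the bounds follow with no further computation.

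First I would factor the posterior. Under the \gds model the items are mutually independent, and conditionally on $\yj=k$ the workers' labels are independent (this is the conditional-independence structure depicted in Figure \ref{fig:graphModel}); moreover the assignment indicators $\Tij$ come from $\Q$ independently of $\yj$, so the assignment contributes a factor that does not depend on $k$. Bayes' rule then gives
\[
\rhojk \;\propto\; \pri_k \prod_{i:\,\Tij=1} \P(\zij \mid \yj=k,\Tij=1),
\]
with a $k$-independent proportionality constant. By (\ref{def:confusionMatrix}) the $i$-th factor is $\cmikh$ evaluated at $h=\zij$. Since $\log$ is increasing and preserves the $\argmax$, the rule $\hyj=\argmax_\kinL\rhojk$ is identical to
\[
\hyj \;=\; \argmax_\kinL \Big[\, \log\pri_k + \sum_{i:\,\Tij=1} \log\P(\zij\mid\yj=k,\Tij=1) \,\Big].
\]

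Next I would match this against (\ref{def:predRuleFi}). Setting $\muikh\defas\log\cmikh$ for $h\in\Labset$, $\f_i(k,0)\defas 0$, and $\ak\defas\log\pri_k$, the missing entries ($\Tij=0$) contribute exactly $\f_i(k,0)=0$, so $\sumi\fikzij+\ak$ reproduces the bracketed expression above term for term; hence the two rules coincide. The only hypothesis of (\ref{def:predRuleFi}) that is not immediate is boundedness of the score functions, and this is exactly where the assumption $\CMi\in(0,1]^{\nL\times\nL}$ enters: every $\cmikh$ is strictly positive, so $\muikh=\log\cmikh$ is finite and $\f_i(k,0)=0$ is trivially constant in $k$. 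I expect this boundedness check to be the only genuinely delicate point of the proof.

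With the oracle MAP rule now written as a decomposable rule under the \gds model, I would instantiate $\normalize$, $\gapjkl$, $\tone$, $\ttwo$, $\cH$ and $\sigtwo$ of Section \ref{sec:quantities} at $\muikh=\log\cmikh$ and $\ak=\log\pri_k$, and the three theorems apply directly to yield all the stated bounds. A final sanity check worth recording is that the base of the logarithm is irrelevant: changing it rescales every $\muikh$ by a common positive constant, and since $\tone$, $\ttwo$, $\cH$ and $\sigtwo$ are all scale-invariant in the score functions, the resulting bounds are unchanged. The main obstacle is thus conceptual rather than computational --- correctly handling the missing entries and the conditional-independence assumptions so that the log-posterior truly decomposes into a per-worker sum plus a $k$-dependent shift --- after which everything is delegated to the already-proved general theorems.
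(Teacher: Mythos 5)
Your proposal is correct and follows essentially the same route as the paper's proof: factor the posterior via conditional independence, take logarithms to identify the oracle MAP rule as a decomposable rule with $\muikh=\log\cmikh$ and $\ak=\log\prik$, and invoke the general theorems. Your explicit treatment of the missing entries, the boundedness check via $\cmikh>0$, and the base-of-logarithm invariance are details the paper leaves implicit, but they do not change the argument.
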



As a special case of \gds model, the \hds model is relatively easy to visualize and simulate. Therefore, the results under this model will be useful in simulation.  The next corollary shows that the oracle MAP rule under the \hds model is weighted majority voting with class dependent shifts $\hua{\ak}_\kinL$ where $\ak=\log\prik$. For simplicity, we assume that the true labels of the items are drawn from uniform distribution (\ie, $\prik=\inv{\nL}$, balanced classes).

\begin{cor}\label{cor:msr_MAP_hds}
\emph{(The oracle MAP rule under \hds model)} Suppose the \taskAssign is based on a constant probability $\qs \in(0,1]$, and the prevalence of the true labels are balanced. Then the oracle MAP rule is a weighted majority voting rule under the \hds model with
\begin{eqnarray}
\hyj= \argmax_\kinL \sumi \vi\I{\zij=k} \text{~~~ where } \vi= \ln{\frac{(\nL-1)\wi}{1-\wi}}, ~~\forall i\in\M.
\end{eqnarray}
Let $\vweight=(\vweight_1, \vweight_2,\cdots,\vweight_M)$. The mean error rate of the oracle MAP rule is upper bounded without any conditions on $\tone$, i.e., for any $\hua{\wi}_{i\in\M}\in(0,1)^M$,  
$$
\MER \leq \LMerUpBound,
$$
where 
$
\tone =  \frac{\qs}{(\nL-1)\vnorm}\sumi\vi(L\wi-1),~~ ~~
 \cH= \frac{\|\vweight\|_\infty}{\vnorm}\connect \sigtwo= \qs.
$

\end{cor}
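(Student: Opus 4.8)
The plan is to establish the corollary in two stages: first to identify the \omaprule with the stated weighted majority vote, and then to invoke Corollary~\ref{cor:wmv_merBound} after checking that its hypothesis $\tone\geq 0$ holds automatically for these particular weights.

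For the first stage, I would start from the definition $\hyj=\argmax_\kinL\rhojk$ and expand the posterior by Bayes' rule. Under the \hds model the likelihood factorizes across workers, so that $\log\rhojk$ equals, up to a term independent of $k$, the quantity $\log\prik+\sum_{i:\Tij=1}\log\P(\zij \mid \yj=k,\Tij=1)$. Using $\cmikk=\wi$ and $\cmikl=\frac{1-\wi}{\nL-1}$ for $l\neq k$, each worker's log-likelihood can be rewritten as $\I{\zij=k}\,\ln\frac{(\nL-1)\wi}{1-\wi}+\ln\frac{1-\wi}{\nL-1}$. The second summand, together with the prior $\prik=\inv{\nL}$ (balanced classes), is constant in $k$ and drops out of the argmax; since a missing entry has $\zij=0\notin\Labset$, the restricted sum over observed entries coincides with $\sumi$. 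This yields exactly $\hyj=\argmax_\kinL\sumi\vi\I{\zij=k}$ with $\vi=\ln\frac{(\nL-1)\wi}{1-\wi}$, as claimed.

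For the second stage, the mean error rate bound will follow from Corollary~\ref{cor:wmv_merBound}(1) once its hypothesis $\tone\geq 0$ is verified. The key observation, and the only nonroutine point, is a \emph{sign-matching} argument: $\vi=\ln\frac{(\nL-1)\wi}{1-\wi}\geq 0$ if and only if $(\nL-1)\wi\geq 1-\wi$, i.e.\ if and only if $\nL\wi-1\geq 0$, with the analogous equivalence for the strict inequalities. Hence $\vi$ and $(\nL\wi-1)$ always share the same sign, so $\vi(\nL\wi-1)\geq 0$ termwise and therefore $\sumi\vi(\nL\wi-1)\geq 0$. Since $\tone=\frac{\qs}{(\nL-1)\vnorm}\sumi\vi(\nL\wi-1)$, this forces $\tone\geq 0$ for every $\hua{\wi}_{i\in\M}\in(0,1)^M$, with no further assumptions. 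Feeding the stated $\tone$, $\cH$ and $\sigtwo$ into Corollary~\ref{cor:wmv_merBound}(1) then delivers the announced upper bound on $\MER$.

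I expect the main obstacle to be conceptual rather than computational: recognizing that the MAP weights are precisely the log-likelihood-ratio weights for which the sign of $\vi$ is tied to whether worker $i$ beats random guessing ($\wi>\inv{\nL}$), which is exactly what makes the normalized gap $\tone$ nonnegative unconditionally. The remaining steps, namely the Bayes expansion and the algebraic simplification of the per-worker log-likelihood, are routine.
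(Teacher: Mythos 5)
Your proposal is correct and follows essentially the same route as the paper: the paper likewise reduces the oracle MAP rule to the stated WMV form by expanding the posterior under the \hds model (via Corollary \ref{cor:oracleMAP_gds}) and absorbing the $k$-independent terms, and then verifies $\tone\geq 0$ unconditionally by the same per-worker sign-matching of $\vi$ with $\nL\wi-1$ before invoking Corollary \ref{cor:wmv_merBound}. No gaps.
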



The results in the section above help us understand more about the practice of inferring the ground truth labels via maximum likelihood methods. The prominent \emmaprule approximates the \omaprule by estimating the parameters of crowdsourcing model and thus estimates the posterior distribution (\ref{def:rhojk}), then applies the MAP rule to predict the labels of items.  A further study on the error rate bounds of the \omaprule might be good for designing better algorithms with performance on par with the \emmaprule. This is the focus of the next section.

\section{Iterative weighted majority voting method}\label{sec:IWMV}

In this section, we first study  the mean error rate bound of weighted majority voting, then we minimize the bound to get the \oborule. Finally, we present its connection to the \omaprule. Based on the \oborule, we propose an iterative weighted majority voting method with performance guarantee on its one-step version. 

\subsection{The \oborule and the oracle MAP rule}
\label{sec:bound_optimal}

Here we explore the relationship between the oracle MAP rule and the mean error rate bound of WMV under the \hds model. We assume the \taskAssign is based on a constant probability $\qs$ for simplicity, and ignore the shift terms $\hua{\ak}$ in the \predrules (this is the case for the \omaprule when the label classes are balanced, Corollary \ref{cor:msr_MAP_hds}). 


The mean error rate bound of weighted majority voting (WMV) in Corollary \ref{cor:wmv_merBound} implies that if $\tone\geq 0$, then $\MER\leq \LMerUpBound$, where $\sigtwo= \qs$ and $\inv{\sqrt{M}}\leq \cH\leq 1$. Note that the impact of $\cH$ on the bound is marginal compared to that of $\tone$ since $\cH$ can be replaced by 1 to relax the bound slightly. 
At the same time, both functions $\exp\kua{ - {\tone^2 \over 2}}$ and $\exp\kua{- \frac{\tone^2}{2\kua{\sigtwo + \cH\tone/3} } }$ are monotonely decreasing w.r.t. $\tone \in [0,\infty)$. Thus the upper bound $\LMerUpBound$ is also monotonely decreasing w.r.t. $\tone\in[0,\infty)$. The mean error rate is bounded from above with the condition $\tone\geq 0$. Therefore,  maximizing $\tone$ will increase the chance of $\tone\geq 0$ being satisfied and reduce the bound to some extent. 
Recall that 
$$
\tone =\frac{\qs}{\nL-1} \sumi {\vi\over \vnorm}(\nL\wi - 1). 
$$ 
Since $\q$ is fixed now and we assume $\tone \geq 0$, so optimizing the upper bound is equivalent to maximizing $\tone$:
\begin{eqnarray}
&& \vstar
\quad=\quad  \argmax_{\v \in \R^M} \tone 
\quad=\quad   \argmax_{\v \in \R^M} \frac{\qs}{\nL-1} \sumi {\vi\over \vnorm}(\nL\wi - 1) 
\nn\\
&\then&  \hwem{The \oborule:} \quad \text{WMV with } \vstar_i\propto \nL\wi-1.
\label{eqn:chooseVA}
\end{eqnarray}
Therefore a bound-optimal strategy is to choose the weight for WMV as in (\ref{eqn:chooseVA}). This rule requires the information of the true parameters $\hua{\wi}_{i\in\M}$,  that is why we call it \emph{the \oborule}. In practice, we can estimate the parameters and plug $\hua{\hwi}_{i\in\M}$ into \eqref{eqn:chooseVA}, which we refer to as \emph{the \borule}.

By Corollary \ref{cor:msr_MAP_hds}, the oracle MAP rule under the \hds model is a weighted majority voting rule with weight 
$$
\vi^\ora= \log\frac{(\nL-1)\wi}{1-\wi}\approx \frac{\nL}{\nL-1}(\nL\wi-1).
$$
The approximation is due to the Taylor expansion around $x=\inv{\nL}$,
\begin{eqnarray}\label{eqn:tylor}
\ln{\frac{(\nL-1)x}{1-x}}= \frac{\nL}{\nL-1}\kua{\nL x - 1} + O\kua{\kua{x-\inv{\nL}}^2}.
\end{eqnarray}

Thus, \emph{the weight of the oracle bound-optimal rule is the first order Taylor expansion of the weight in the oracle MAP rule}.
Similar result and conclusion hold for the \sds model as well, but we omit them here for clarity.


By observing that the oracle MAP rule is very close to the oracle bound-optimal rule, the oracle MAP rule approximately optimizes the upper bound of the mean error rate. This fact also indicates that our bound is meaningful since the oracle MAP rule is the oracle Bayes classifier.

\subsection{Iterative weighted majority voting with performance guarantee}
\def \stepone{\hwem{ (Step 1) }}
\def \steptwo{\hwem{ (Step 2) }}
\def \stepthr{\hwem{ (Step 3) }}

Based on Section \ref{sec:bound_optimal}, the \oborule of choosing weights is $\vi\propto \nL (\wi -1)$. With this strategy, if we have an estimated $\wi$,  we can put more weights to the \lq\lq{}better\rq\rq{} workers and downplay the \lq\lq{}spammers\rq\rq{} (those workers with accuracy close to random guess). This strategy can potentially improve the performance of majority vote and result in a better estimate for $\wi$, which further improves the quality of the weights, and iterate. 
This inspires us to design an iterative weighted majority voting (IWMV) method as in Algorithm \ref{alg:IWMV}.

\begin{algorithm}[hbt] 
\caption{The iterative weighted majority voting algorithm (IWMV)}
\label{alg:IWMV}
\begin{algorithmic}

 \STATE {\bf Input:} { Number of workers= M; Number of items= N; \dataMatrix: $Z\in\Lextend^{M\times N}$;}
 \STATE {\bf Output:} {the predicted labels $\hua{\hat{y}_1, \hat{y}_2, ..., \hat{y}_N}$}

\STATE \hwem{Initialization:} $\vi= 1, ~~\forall i\in\M$; ~ $\Tij= \I{\zij\neq 0}, \forall i \in\M, \forall j\in\N$.
\REPEAT
\vspace{-8mm}
		\STATE 
		\begin{eqnarray*}
		&& \hyj \leftarrow \argmax_\kinL \sumi \vi\I{\zij=k}, \qquad\forall j\in\N. \\
		&& \hwi \leftarrow \frac{\sumj\I{\zij = \hyj}}{\sumj \Tij}, \qquad\forall i\in\M .\\
		&& \vi \leftarrow \nL\hwi - 1, \qquad \forall i\in\M .
		\end{eqnarray*}

\UNTIL{ converges or reaches $\nS$ iterations.}
\STATE \hwem{Output} the predictions $\hua{\hyj}_{j\in\N}$ by $\hyj=\argmax_\kinL \sumi\vi\I{\zij=k}$.
\end{algorithmic}
\end{algorithm}

 The time complexity of this algorithm is $O((M+L)N\nS)$, where $\nS$ is the number of iterations in the algorithm.
 Empirically, the IWMV method converges fast. But it also suffers from the local optimal trap as EM does, and is generally hard to analyze its error rate. However, we are able to obtain the error rate bound in the next theorem for a ``naive" version of it -- \emph{one-step WMV} (osWMV), which executes \stepone to \stepthr only once as follows:

\stepone Use majority voting to estimate labels, which are treated as the ``golden standard", i.e. $\hyjmv= \argmax_\kinL \sumi \I{\zij=k}$.

\steptwo Use the current ``golden standard" to estimate the worker accuracy $\hwi= \frac{\sumj\I{\zij=\hyjmv}}{\sumi \I{\zij\neq 0}}$ for all $i$ and set $\vi= \nL\hwi - 1$ for all $i$.

\stepthr Use the current weight $v$ in WMV to estimate an updated ``gold standard", i.e., $\hyj= \argmax_\kinL \sumi \vi\I{\zij=k}$.

For the succinctness of the result, we focus on the case where $\nL=2$, but the techniques used can be applied to the general case of $\nL$ as well.

\begin{thm}\label{thm:mer_IWMV_Bound}
\hwem{(Mean error rate bound of one step WMV for binary labeling)}
Under the \hds model, with label sampling probability $q=1$ and $\nL=2$, let $\yjwmv$ be the label predicted by one-step WMV for the $j$th item,  if $\wbar \geq \inv{2}+\inv{M}+\sqrt{\frac{(M-1)\ln2}{2M^2}}$, the mean error rate of one-step WMV 
\vspace{-2mm}
\begin{eqnarray} \label{eqn:oswmv}
\MERwmv \leq \finalUpBound,
\end{eqnarray}
\vspace{-2mm}
where $\pdiv= \sqrt{\inv{M}\sumi (\wpi-\inv{2})^2}$ and  $\seta= 2\halfseta$
\end{thm}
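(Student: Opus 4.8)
The plan is to treat the final step of one-step WMV as a weighted majority vote with \emph{random, data-dependent} weights $\vi = \nL\hwi - 1 = 2\hwi - 1$, and to show these weights are close enough to the oracle bound-optimal weights $\vi^\star = 2\wi-1$ that the effective margin $\tsame$ of Corollary~\ref{cor:wmv_merBound} stays near its ideal value. Since under the \hds model the event $\yjwmv \neq \yj$ is invariant under a global sign flip of all true labels (the whole pipeline of MV, accuracy estimation, and WMV is equivariant under such a flip), I would first condition on the entire true-label vector and argue that the resulting conditional error probability admits a bound uniform in that vector; it then suffices to bound $\P(\yjwmv \neq \yj)$ conditionally, where the $\zij$ are independent. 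Recall from Section~\ref{sec:bound_optimal} that if $\hwi \equiv \wi$ then $\tsame = \sqrt{\sumi(2\wi-1)^2} = 2\sqrt{M}\,\pdiv$, which already yields a bound of the form $\exp(-2M\pdiv^2)$; the entire task is to control the degradation coming from having to estimate the weights.

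The core estimate is the accuracy of $\hwi$. I would compare the agreement-with-MV count to the agreement-with-truth proxy $\tilde w_i \defas \inv{N}\sumj \I{\zij = \yj}$: since $\hwi$ and $\tilde w_i$ differ only on items where majority voting errs, $|\hwi - \tilde w_i|$ is at most the MV error rate, which Corollary~\ref{cor:MV_mer_hds_q} controls under $\wbar > \inv{2}$. Next, $\tilde w_i$ is an average of $N$ independent indicators with mean $\wi$, so Hoeffding's inequality gives its concentration around $\wi$. The subtle point is that worker $i$'s own vote $\zij$ enters both $\hyjmv$ and the indicator $\I{\zij=\hyjmv}$, biasing $\hwi$ upward; I would remove this self-voting bias by comparing the full majority vote to the majority vote over the remaining $M-1$ workers, which is exactly where the $\inv{M}$ correction and the margin $\sqrt{(M-1)\ln 2/(2M^2)}$ in the hypothesis on $\wbar$ originate. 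Combining these pieces yields a high-probability lower bound on $\tsame$ of the form $2\sqrt{M}\,\pdiv$ minus an explicit estimation error, valid off an event of probability at most $\seta = 2\halfseta$, the factor $2$ arising from a union bound over the two failure modes.

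The last obstacle, and the one I expect to be hardest, is that the weights $\vi$ are computed from the \emph{same} matrix $Z$ that is aggregated in Step~3, so Corollary~\ref{cor:wmv_merBound}, which assumes fixed weights, cannot be applied verbatim. I would resolve this by a leave-one-out decoupling over items: to predict item $j$, replace $\vi$ by the weight $\vi^{(-j)}$ computed without item $j$; these differ only through a single summand and hence by $O(\inv{N})$, while $\vi^{(-j)}$ is, conditionally on the true labels, independent of the labels $\hua{\zij}_{i\in\M}$ that decide item $j$. On the good event the decoupled weights satisfy $\tsame \geq 0$ and are close to $\vi^\star$, so the fixed-weight mean-error bound of Corollary~\ref{cor:wmv_merBound} applies conditionally. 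Finally I would assemble the estimate by the law of total probability: on the good event the conditional error is at most $\exp(-\tsame^2/2)$ with the controlled $\tsame$, and off it the contribution is at most $\seta$, giving $\MERwmv \leq \finalUpBound$. The recurring difficulty is the self-reference — worker $i$'s vote inflating its own estimated accuracy, and the weights depending on the very data being aggregated — which the two leave-one-out arguments, over workers for the bias and over items for the decoupling, are designed to neutralize.
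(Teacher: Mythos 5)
Your proposal correctly isolates the two real obstacles --- the self-voting bias in $\hwi$ and the coupling between the data-dependent weights and the column being aggregated --- and its opening move, conditioning on the entire true-label vector so that the entries of $Z$ become independent, is exactly the paper's. From there, however, the routes diverge, and yours has a gap that prevents it from reaching the stated inequality. The paper never decouples: it treats $\fj(Z)=\sumi(2\hwi-1)\zij$ as a single deterministic function of the whole matrix, bounds its coordinate-wise oscillations $\dijstar$ (of order $M/N$ for $\jstar\neq j$ and $2+O(M/N)$ for $\jstar=j$, Lemma \ref{res:fjZDiff}), lower-bounds the conditional mean by $4M\pdiv^2(1-\seta)$ via Lemmas \ref{res:AgreeMVZijCond} and \ref{res:SingleLabelAgreeMV}, and applies McDiarmid's bounded-differences inequality. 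Every feature of $\finalUpBound$ comes from that computation: the fourth power of $\pdiv$ (the mean enters squared), the multiplicative $(1-\seta)^2$ inside the exponent, and the $N^2/(M^2N+(M+N)^2)$ factor (the sum of squared oscillations). In particular, $\seta$ is not a bad-event probability: it is the multiplicative degradation of $\E[\hwi\zij\mid y]$ caused by $\P(\zik=\ykmv)$ being only approximately $\wpi$, it depends on $M$ and $\wbar$ but not on $N$, and the $\sqrt{(M-1)\ln 2/(2M^2)}$ margin in the hypothesis exists to force $\seta\leq 1$ (only the $\inv{M}$ term comes from removing the self-vote).

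The assembly step is where your argument fails as a proof of this theorem. A good-event plus leave-one-out scheme yields a bound of the form $\exp(-cM\pdiv^2)+\P(\text{bad event})$, with the failure probability added outside the exponential; no rearrangement of that turns it into a single exponential with $(1-\seta)^2$ inside, so the right-hand side you would obtain is not $\finalUpBound$. Two ingredients are also under-specified: (i) $|\hwi-\tilde w_i|$ is bounded by the \emph{realized} MV error rate, whereas Corollary \ref{cor:MV_mer_hds_q} controls only its expectation, so an extra concentration or Markov step is needed, and the union bound over $M$ workers then contributes a term of order $Me^{-cN\epsilon^2}$ with no counterpart in the stated bound; (ii) the good event must be made measurable with respect to the columns other than $j$ (i.e., defined through the leave-one-out weights $\vi^{(-j)}$) before you may condition and invoke the fixed-weight Corollary \ref{cor:wmv_merBound}, a point you gesture at but do not pin down. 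Your route could likely be completed into a correct statement, but it would be a different statement; to obtain the bound actually claimed you need the bounded-differences argument.
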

The proof of this theorem is deferred to Appendix \ref{app:oneStepWMVBound}. It is non-trivial to prove this theorem since the dependency among the weights and labels makes it hard to apply the concentration approach used in proving the previous results. Instead, a martingale-difference concentration bound has to be used.
\\

\hwem{Remarks:}
\begin{enumerate}
\item In the exponent of the bound, there are several important factors: $\pdiv$ represents how far away the accuracies of workers are from random guess, and it is a constant smaller than 1;  $\seta$ will be close to $0$ given a reasonable $M$.

\item The condition on $\wbar$ requires that $\wbar - \inv{2}$ is $\Omega(M^{-0.5})$, which is easier to satisfy with $M$ large if the average accuracy in the crowd population is better than random guess. This condition ensures that majority voting approximates the true labels.  Thus with more items labeled, we can get a better estimate of the workers\rq{} accuracies. The one-step WMV performance will then be improved with better weights.

\item We address how $M$ and $N$ affect the bound :
first, when both $M$ and $N$ increase but $\frac{M}{N}= r $ is a constant or decreases, the error rate bound  decreases. This makes sense because with the number of items labeled per worker increasing,   $\pih$ will be more accurate. The weights will be closer to the \oborule.
Second, when $M$ is fixed and $N$ increases, i.e., the number of items labeled increases, the upper bound on the error rate decreases.
Third, when $N$ is fixed and $M$ increases, the bound decreases when $M < \sqrt{N}$ and then increases when $M$ is beyond $\sqrt{N}$.
Intuitively, when $M$ is larger than $N$ and $M$ increases, the fluctuation of score functions, where $\pih$ is the estimated accuracy of the $i$th worker, will be large. This increases the chance of making more prediction errors. When $M$ is reasonably small (compared with $N$) but is increasing, i.e., more people label each item, the accuracy of majority voting will be improved according to Corollary \ref{cor:msr_MAP_hds}, then the gain on the accuracy of estimating $\pih$ results in the weights of the one-step WMV to be closer to the \oborule.

\end{enumerate}


As an alternative way of assigning weights to workers in each iteration (Alg.\ref{alg:IWMV}), we can also choose the weight of worker $i$ by plugging $\hwi$ into the weight in the \omaprule. That is, $\vi'= \log\frac{(\nL-1)\hwi}{1-\hwi}$. We refer this variant of IWMV to the \emph{\IWMVlog} algorithm.  From the practical point of view, however, $\vi'$ is unbounded and too large (or too small) if estimator $\hwi$ is close to 1 (or 0). 
Therefore \IWMVlog uses an aggressive way to weigh the workers, and it might be too risky when the estimates $\hua{\hwi}_{i\in\M}$ are noisy. 
Recall that given an estimate $\hwi$ of the reliability of worker $i$, the way that the IWMV algorithm chooses the weight is $\vi= \nL\hwi-1$. As a linearized version of $\vi'$ \eqref{eqn:tylor}, $\vi$ is more stable to the noise in the estimate $\hwi$. 
 Furthermore, IWMV is more convenient for theoretical analysis than the \IWMVlog  algorithm. In the next section, we will show some comparisons between IWMV and \IWMVlog by experiments. 
 


\section{Experiments}\label{sec:experiment}
\def \wideR{0.35}
\def \suo{-6mm}

In this section, we first compare the theoretical error rate bound with the error rate of \omaprule via simulation. Meanwhile, we compare IWMV with EM and \IWMVlog on synthetic data.  We then experimentally test IWMV and compare it with the state-of-art algorithms on real-world data. 
We implement majority voting, EM algorithm \citep{Raykar_JMLR10} with MAP rule (also referred as the EM-MAP rule in the experiments), and use public available code
\footnote{\tt http://www.ics.uci.edu/$\sim$qliu1/codes/crowd\_tool.zip}
 --- the iterative algorithm in \citep{Karger_NIPS2011} is referred to as KOS, and the variational inference algorithm from \citep{Liu2012} is referred to as LPI. 
All results are averaged over 100 random trials. 
All our experiments are implemented in Matlab 2012a, and run on a PC with Windows 7 operation system, Intel Core i7-3740QM (2.70GHz) CPU and 8GB memory. 

\subsection{Simulation}
\def \threeFigW{0.34}

The error rate of a crowdsourcing system is affected by variations of different parameters such as number of workers $M$, number of items $N$ and worker reliabilities $\hua{\wi}_{i\in\M}$ etc. To study how the error rate bound reflects the change of error rate when a parameter of the system changes, we conduct numerical experiments on simulated data for comparing the \omaprule with its error rate bound (Corollary \ref{cor:msr_MAP_hds}). We also measure the performance of the IWMV algorithm and compare it with the performance of \omaprule.


\begin{figure}[htb]
\begin{center}
\begin{tabular}{ccc}
\hspace{-3em}
\includegraphics[width=\threeFigW\columnwidth]{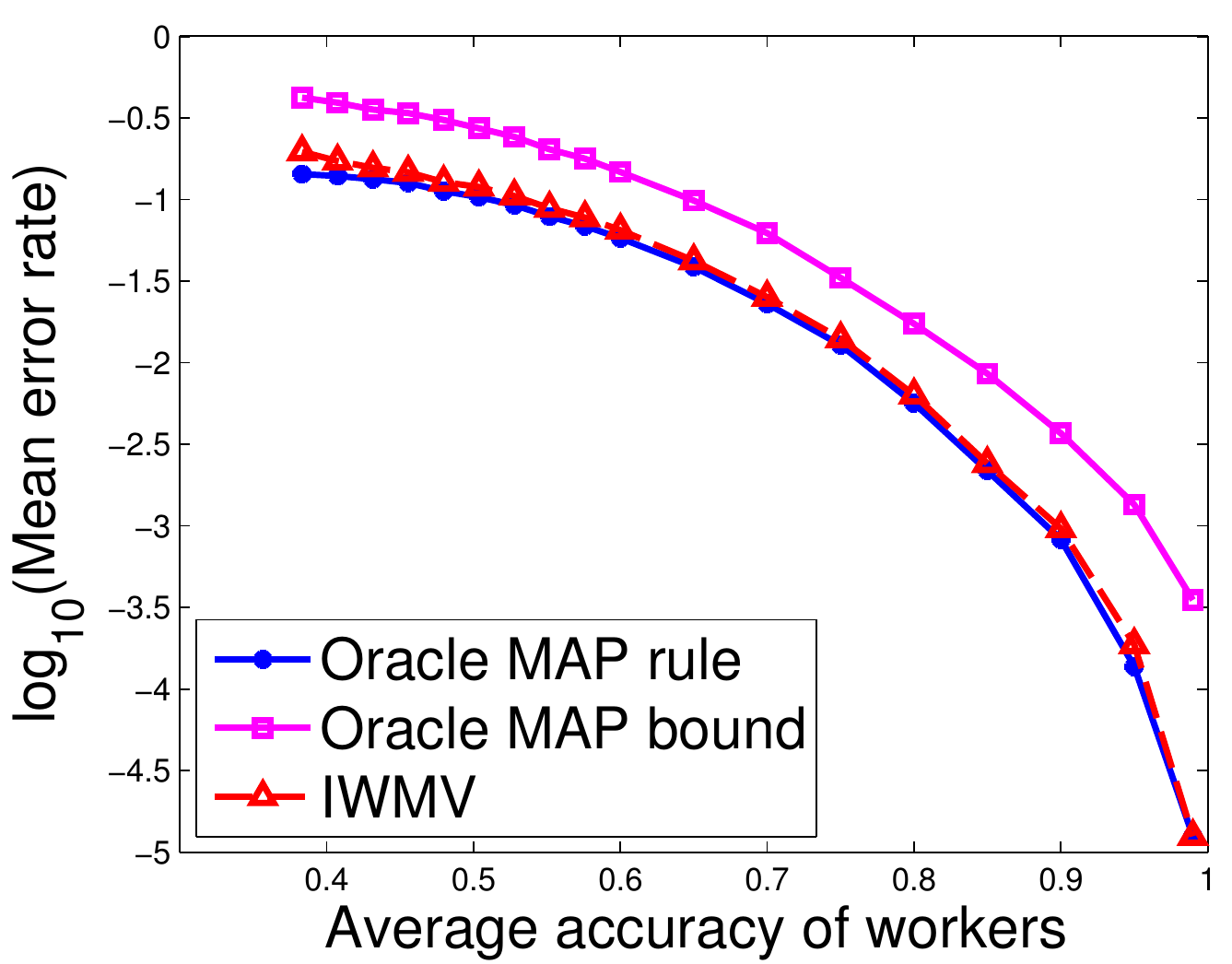} 
&
\includegraphics[width=\threeFigW\columnwidth]{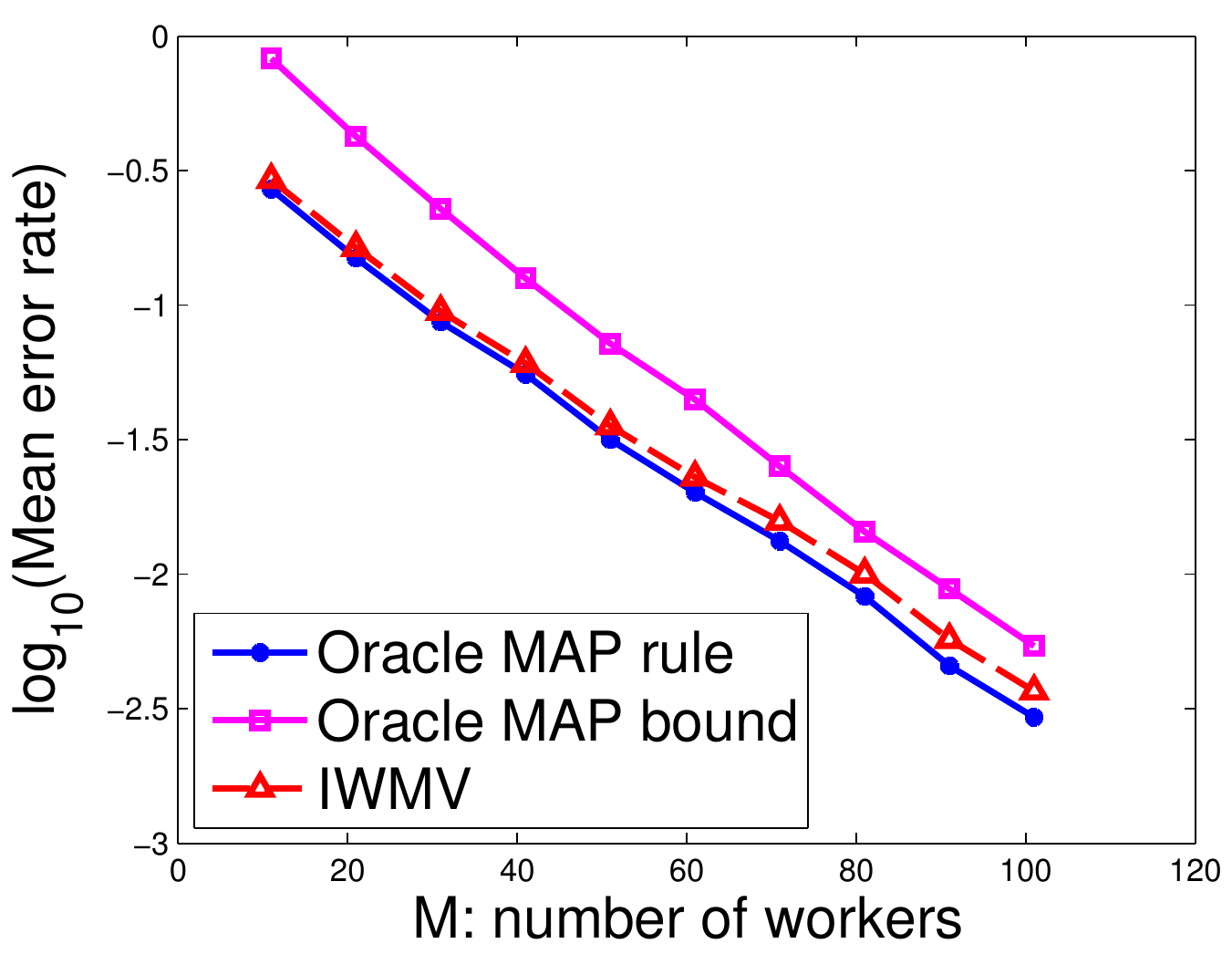}
&
\includegraphics[width=\threeFigW\columnwidth]{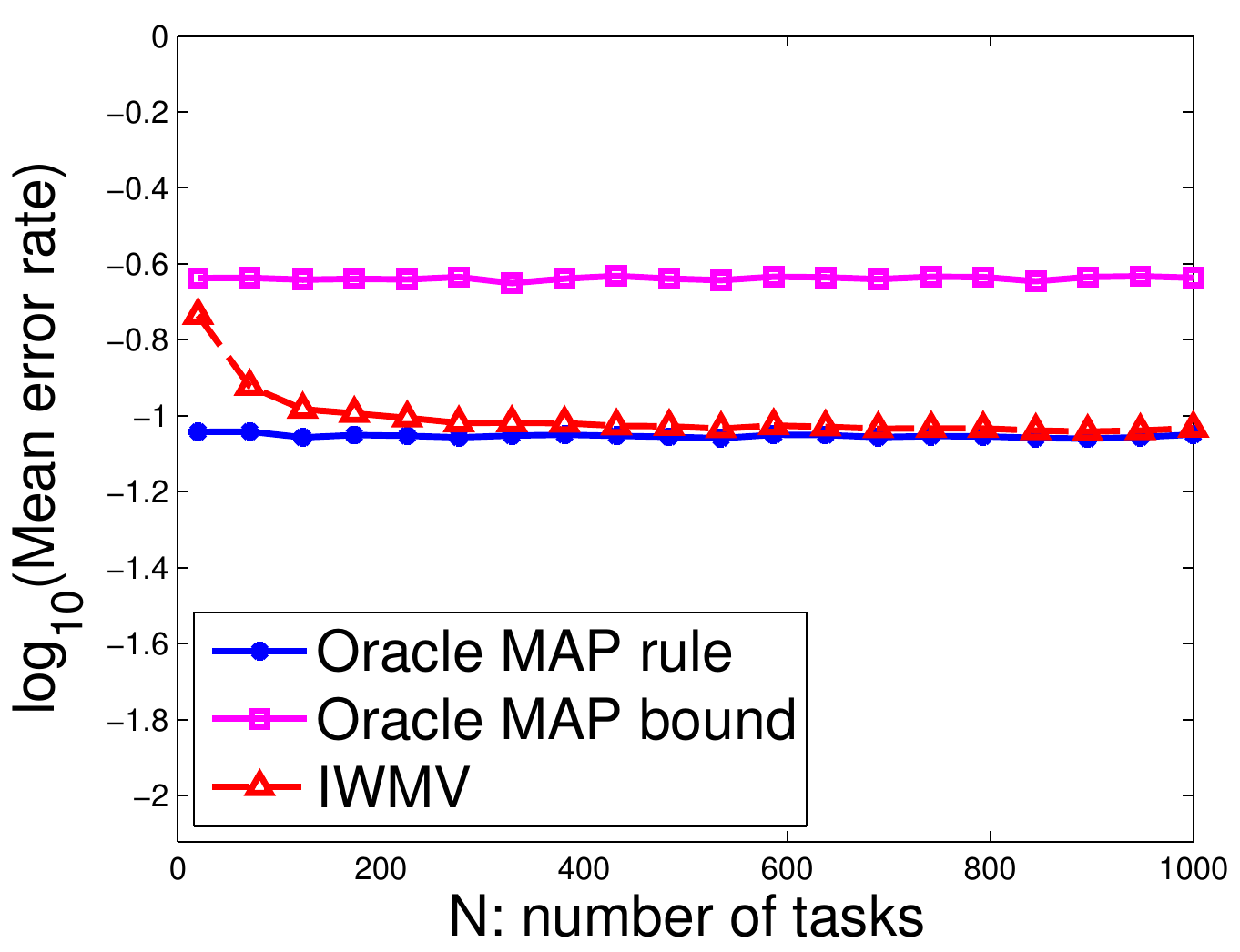} 
\\
(a) & (b) & (c)
\end{tabular}
\caption{ Comparing the \omaprule with its theoretical error rate bound by simulation. The performance of the IWMV algorithm is also imposed. These simulations are done under the \hds model with $\nL=3$ and $\qs=0.3$.  (a) Vary the average accuracy of workers and fix $M=31$ and $N=200$.(b) Vary $M$ and fix $N=200$. (c) Vary $N$ and fix $M=31$. The reliabilities of workers are sampled based on $\wi\sim $Beta(2.3, 2), $\forall i\in\M$ in (b) and (c). Note that all of them are in log scale and all the results are averaged across 100 repetitions. }
\label{fig:EM_MAPBound}
\end{center}
\end{figure}

The simulations are run under the \hds model.  Each worker has $\qs=$30\% chance to label any item which belongs to one of three classes ($\nL=3$). The ground truth labels of items are uniformly generated. The  accuracies of workers (\ie, $\hua{\wi}$) are sampled from a beta distribution \emph{Beta}$(a, b)$ with $b=2$. Given an expected average worker accuracy $\wbar$, we choose the paramater $a=\frac{2\wbar}{1-\wbar}$ so that the expected value for worker accuracies under distribution $Beta(a,2)$ matches with $\wbar$. In each random trial in the simulation, we keep sampling $M$ workers from this $Beta$ distribution until the average worker accuracy is within $\pm 0.01$ range from the expected $\wbar$. This is to maintain the average worker accuracy at the same level for each trial.  
 
First of all, we fix $M=31$ and $N=200$, then control the expected accuracy of the workers varies from 0.38 (slightly larger than random guess $1/L$) to 1 with a step size 0.05. 
The averaged error rates are displayed in Figure \ref{fig:EM_MAPBound}(a). Note that the error rate of the \omaprule is bounded by its mean error rate bound tightly (see Figure \ref{fig:EM_MAPBound}(a)). The bound follows the same trend of the true error rate of the \omaprule. The performace of IWMV converges to that of the \omaprule quickly as $\wbar$ increases.

By fixing $a=2.3$ and $\qs=0.3$, we then vary one of the two parameters--- number of items $N$ (default as 200) and number of workers $M$ (default as 31) --- with the other parameter maintained as the default. The corresponding results are presented in Figure \ref{fig:EM_MAPBound} (b) and (c), respectively. According to the results of simulation, the error rate bound of the \omaprule and its upper bound do not change when the number of tasks $N$ increases (Figure \ref{fig:EM_MAPBound}(c)), but they change log linearly when the number of workers $M$ increases (Figure \ref{fig:EM_MAPBound}(b)).
 Nevertheless, the performance of IWMV  changes whenever we increase $M$ or $N$. It behaves closely to the \omaprule when $M$ varies, but differently from the \omaprule when $N$ increases. This is because with more and more tasks done by the workers, the estimation of the reliability of the workers will be more accurate, and this can boost the performance of IWMV. However, the \omaprule knows the true worker reliability initially, so its performance will be independent with $N$.

\begin{figure}[htb]
\begin{center}
\includegraphics[width=0.8\columnwidth]{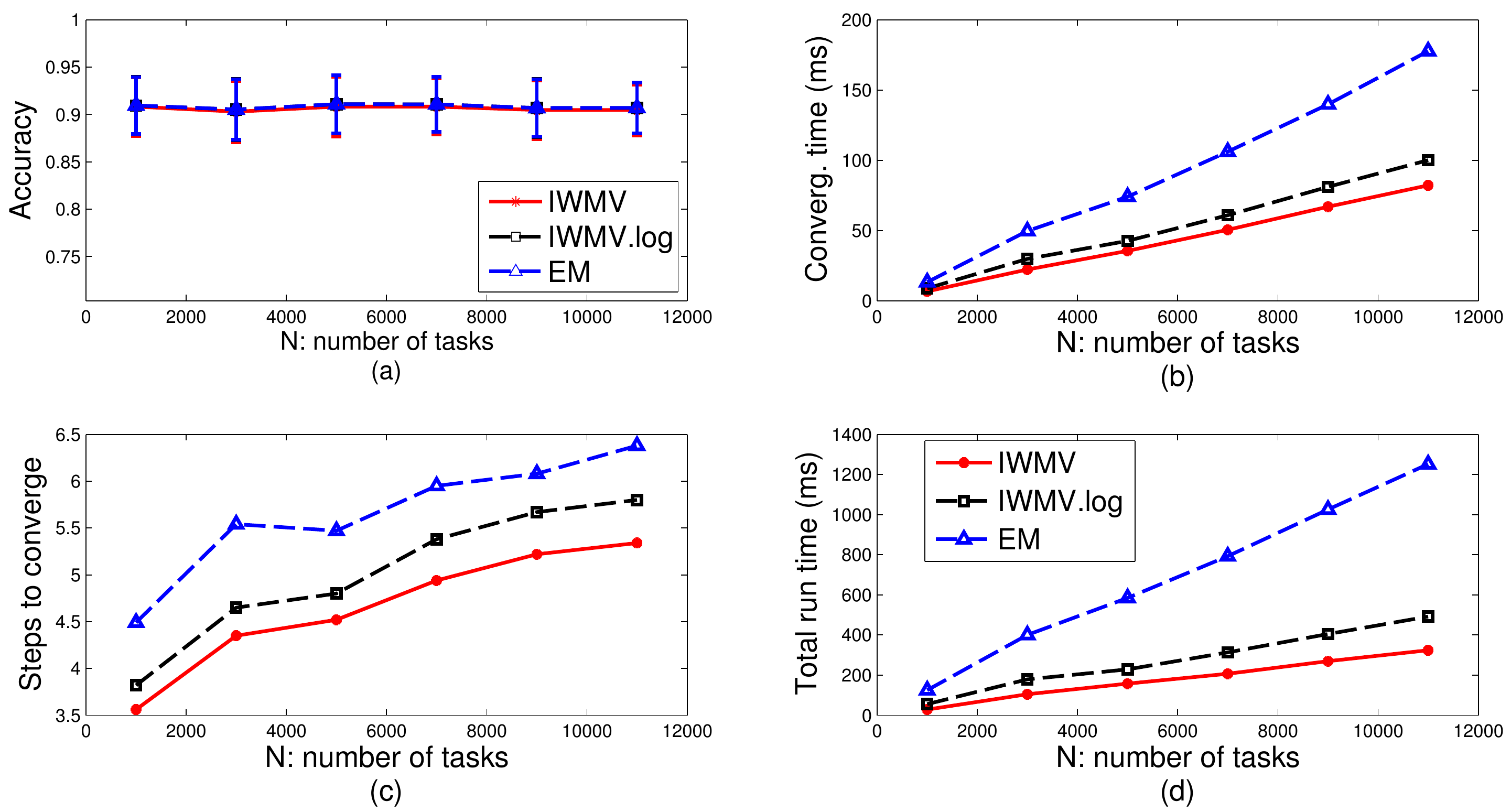}
\caption{Comparison between IWMV, \IWMVlog and the EM-MAP rule, with the number of items varying from 1000 to 11000. Simulation was performed with $\nL=3, M=31, q=0.3$ under the \hds model.  The reliabilities of workers are sampled based on $\wi\sim $Beta(2.3, 2), $i\in\M$.  All the results are averaged across 100 repetitions. (a) Final accuracy with error bar imposed. (b) The time until convergence, and we need to know the ground truth for measuring it. (c) Number of steps to converge. (d) Total run time is computed based on finishing 50 iterations.   }
\label{fig:results_IWMV_EM_changeN}
\end{center}
\end{figure}

Our next simulation (Figure \ref{fig:results_IWMV_EM_changeN}) shows that IWMV, its variant \IWMVlog and the EM algorithm achieve the same final prediction accuracy, while IWMV has the lowest computational cost. 
Specifically, we vary the number of tasks $N$, and compare the final accuracy with one standard deviation error bar imposed (Figure \ref{fig:results_IWMV_EM_changeN}(a)), the convergence time (Figure \ref{fig:results_IWMV_EM_changeN}(b)),
 the number of iterations (\ie steps) to converge (Figure \ref{fig:results_IWMV_EM_changeN}(c)) and the total run time for 50 iterations (Figure \ref{fig:results_IWMV_EM_changeN}(d)). With almost the same accuracies, IWMV converges faster and takes less steps to converge than EM and \IWMVlog, and the run time of IWMV is prominently lower than EM. Similar conclusions can be also confirmed when changing $M$, $\qs$ and $\nL$, thus we omit the similar results here.

\begin{figure}[htb]
\begin{center}
\begin{tabular}{cc}
\textit{\raisebox{.25\height}
{\includegraphics[width=0.35\columnwidth]{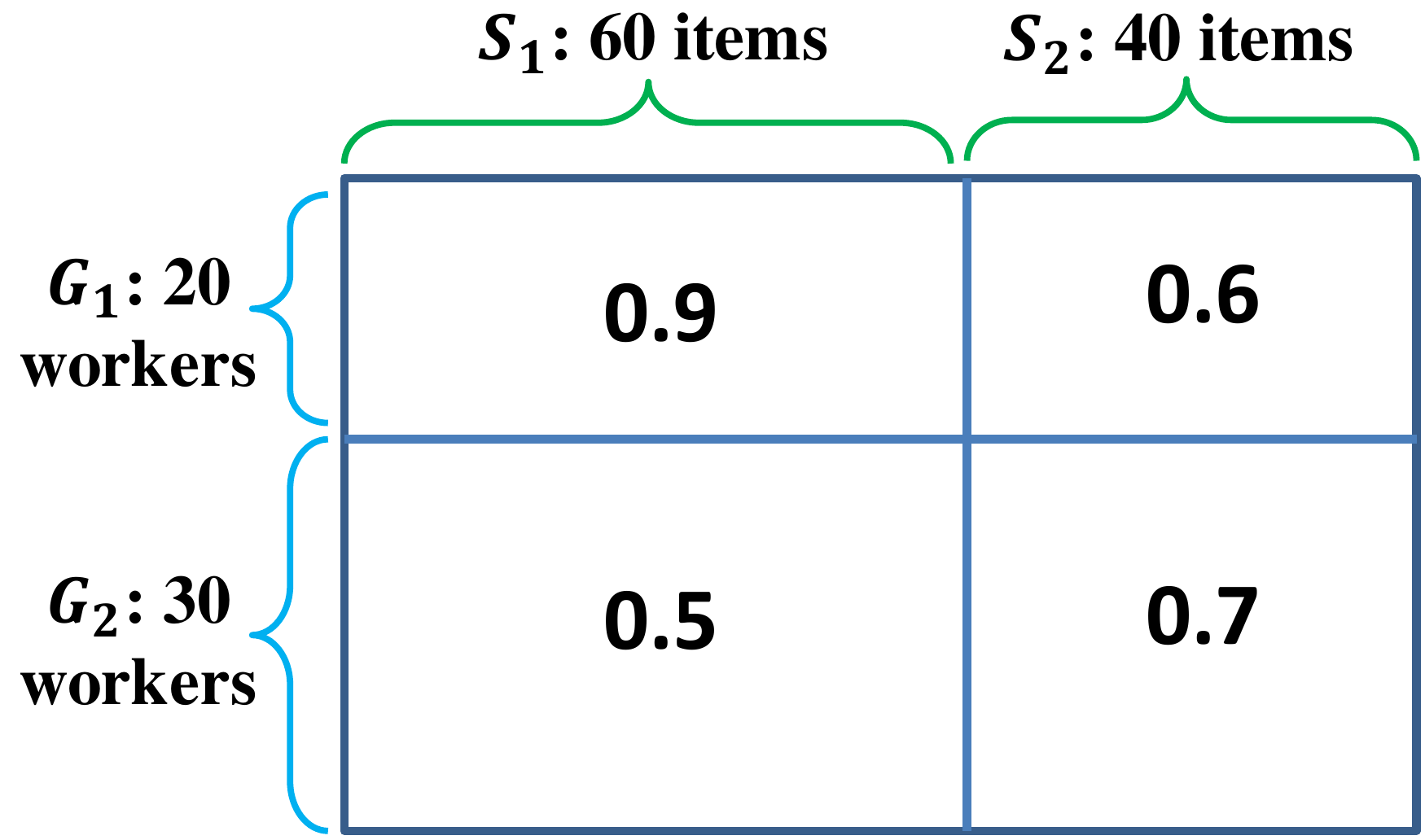}
}
}
&
\includegraphics[width=0.45\columnwidth]{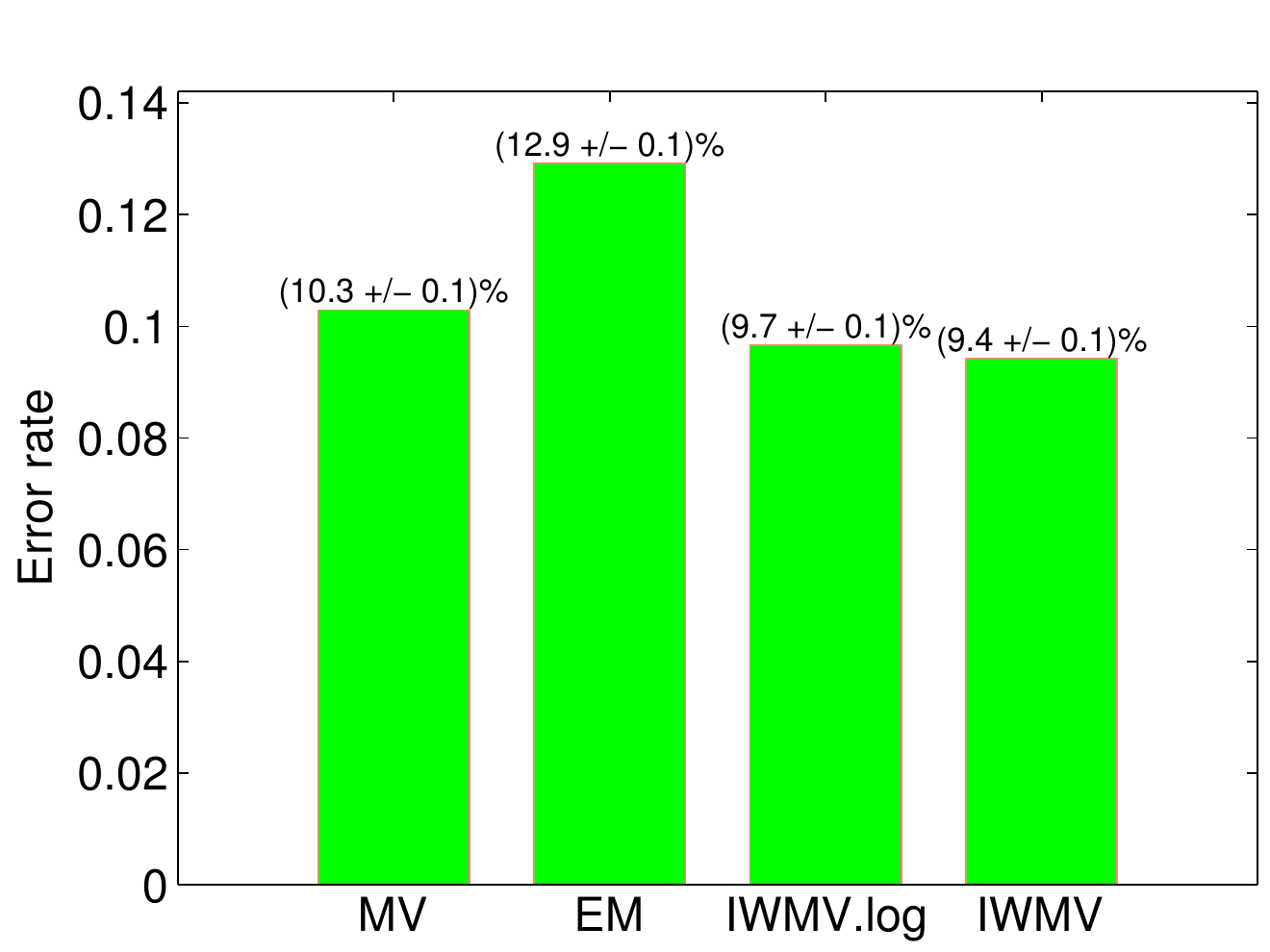}
\\
(a) & (b)
\end{tabular}
\caption{
(a) Setting of model misspecification. (b) Performance comparison under model misspecification setting in (a).}
\label{fig:model_misspecify}
\vspace{-3mm}
\end{center}
\end{figure}

The experiments above are strictly simulated based on the \hds worker model for its simplicity.
To compare IWMV with EM when the worker model is violated, we simulated toy data. 
See Figure \ref{fig:model_misspecify}(a) for the setup: suppose there are two group of workers $\Gone$ and $\Gtwo$, and two sets of items $\Sone$ and $\Stwo$. The true labels of items are generated uniformly from $\hua{\pm 1}$. The data matrix is generated as follows: 
$\P(\zij=\yj)= 0.9$ if $i\in \Gone, j\in\Sone$; $\P(\zij=\yj)= 0.6$ if $i\in \Gone, j\in\Stwo$; $\P(\zij=\yj)= 0.5$ if $i\in \Gtwo, j\in\Sone$; $\P(\zij=\yj)= 0.7$ if $i\in \Gtwo, j\in\Stwo$. 
We use $\qs=0.3$. 
The error rate (with one standard deviation) of MV, EM, \IWMVlog and IWMV are shown in Figure \ref{fig:model_misspecify} (b), which shows that IWMV achieves lower error rate than EM and \IWMVlog do in this model misspecification example. 
The results in Figure \ref{fig:model_misspecify} shows that IWMV are more robust than EM under model misspecification to some extent. Similar results can be obtained under other different configuations (as in Figure \ref{fig:model_misspecify}), and we omit them here. 

\subsection{Real data}

To compare our proposed \iwmv with the state-of-the-art methods \citep{Raykar_JMLR10, Karger_NIPS2011, Liu2012}, we conducted several experiments on real data (most of them are publicly available). We sampled the collected labels in the real data independently with probability $\samq$ which varies from 0.1 to 1, and see how the error rate and run time change accordingly. 
For clarity of the figures produced in this section, we omit the results of \IWMVlog since its performance is usually worse than IWMV in terms of both accuracy and computational time. Our focus will be the comparisons among IWMV, EM, KOS and LPI.  


\begin{table}[htb]
\caption{The summary of datasets used in the real data experiments. 
$\wbar$ is the average worker accuracy.
}\label{tab:datasets}
\begin{center}
\begin{tabular}{llllll}
\toprule
\multicolumn{1}{l}{Dataset}  
& \multicolumn{1}{l}{$\nL$ classes}
& \multicolumn{1}{l}{$M$ workers}
& \multicolumn{1}{l}{$N$ items}
& \multicolumn{1}{l}{\#labels}
& \multicolumn{1}{l}{$\wbar$}
\\ 
\midrule 
Duchenne	&2	&17		&159	&1221	&65.0\%	\\
RTE			&2	&164	&800	&8000	&83.7\%	\\
Temporal	&2	&76		&462	&4620	&84.1\%	\\
Web search	&5	&177	&2665	&15539 	&37.1\%	\\
\bottomrule
\end{tabular}
\end{center}
\end{table}

\emph{Duchenne dataset.} The first dataset is from \citep{Whitehill_nips09} on identifying Duchenne smile from non-Duchenne smile based on face images. In this data, 159 images are labeled with \emph{\{Duchenne, non-Duchenne\}} labels by 17 different Mechanical Turk\footnote{https://www.mturk.com} workers. In total, there are 1,221 labels, thus 1221/(17$\times$159)= 45.2\% of the potential task assignments are done (i.e., 45.2\% of the entires in the data matrix are observed). The ground truth labels are obtained from two certified experts and 58 out of the 159 images contain Duchenne smiles. The Duchenne images are hard to identify (the average accuracy of workers on this task is only 65\%).

\begin{figure}[htb]
\begin{center}
\begin{tabular}{ccc}
\hspace{\suo}
\includegraphics[width=\wideR\columnwidth]{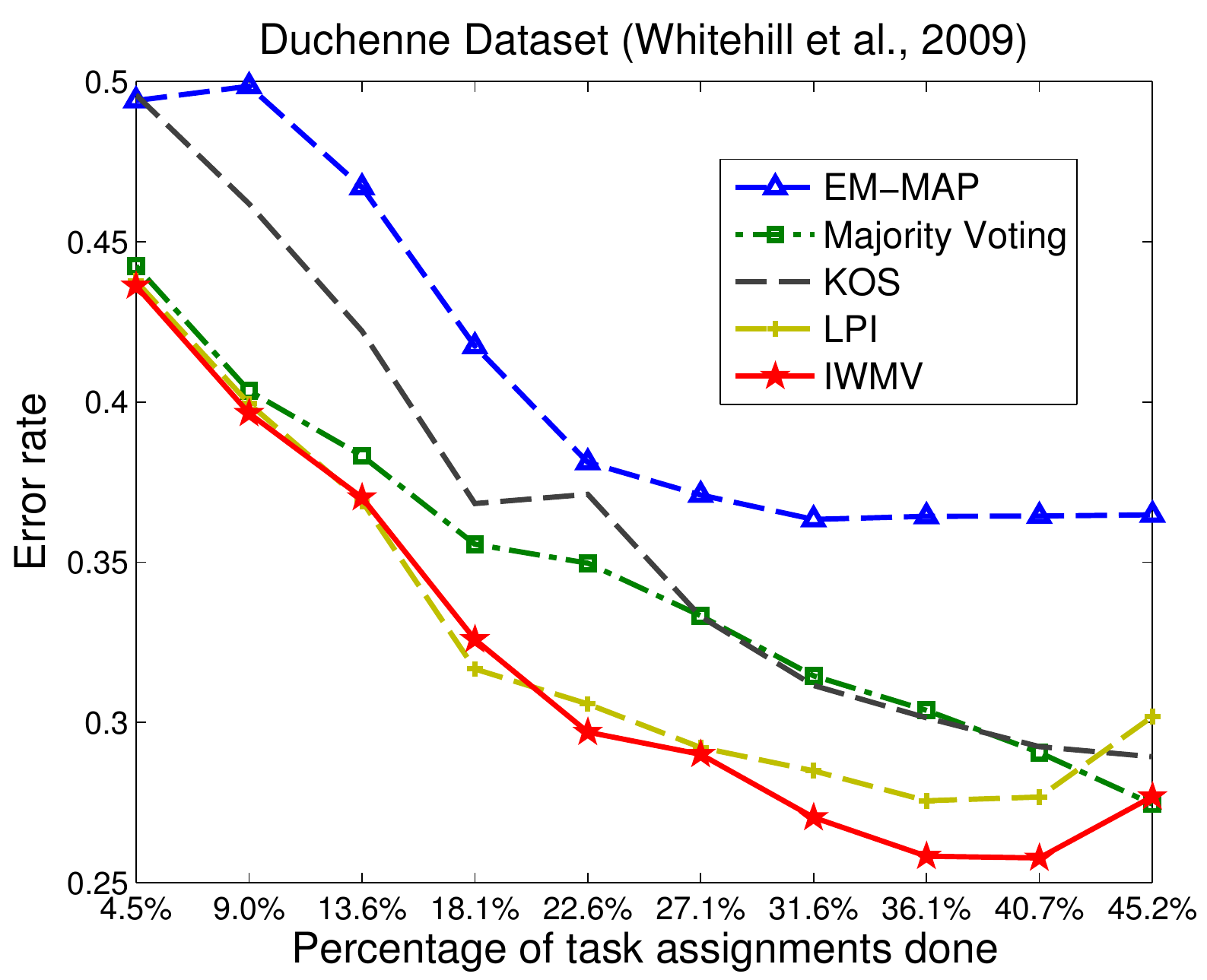}
&
\hspace{\suo}
\includegraphics[width=\wideR\columnwidth]{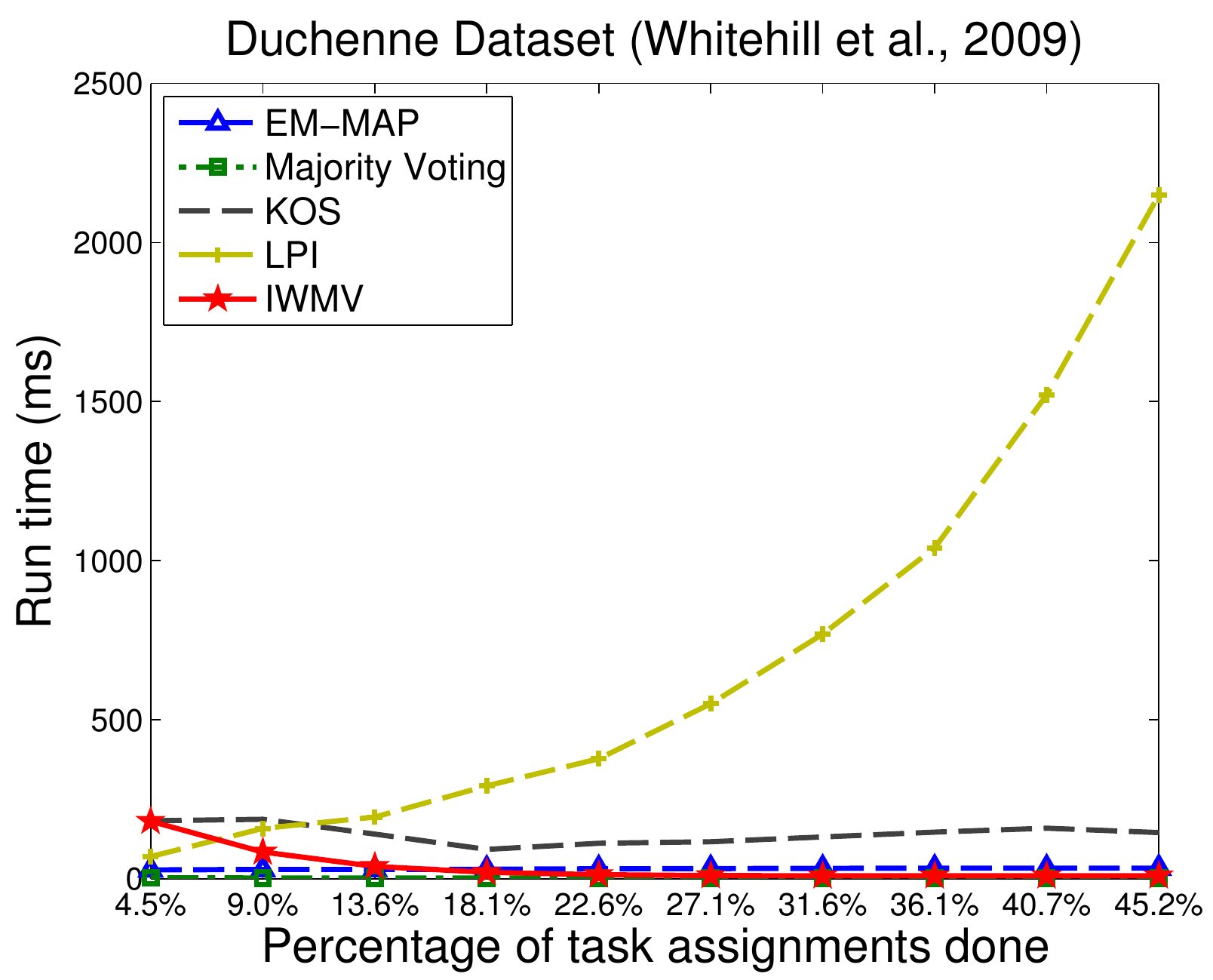}
&
\hspace{\suo}
\includegraphics[width=\wideR\columnwidth]{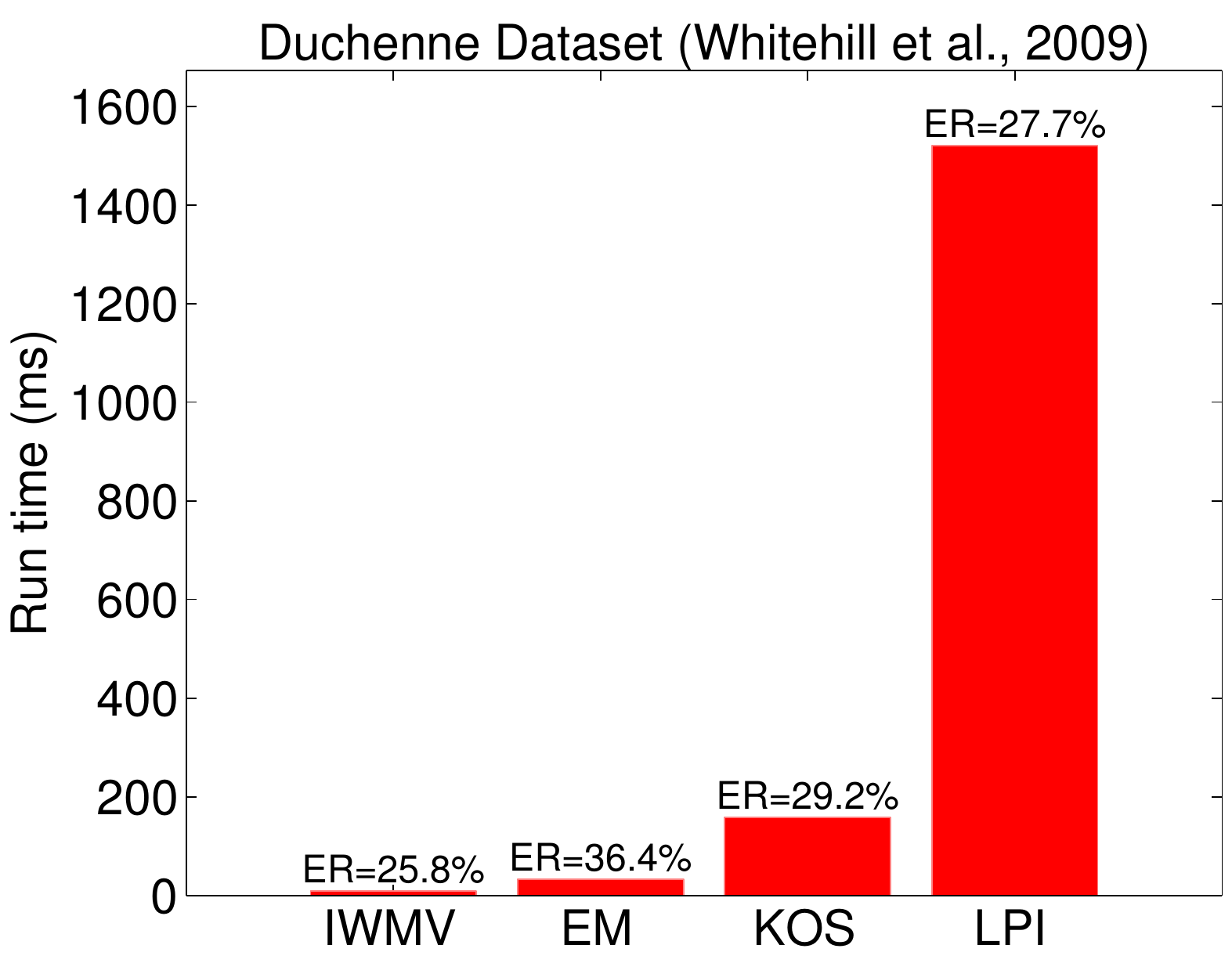}
\\
(a)& (b) & (c)
\end{tabular}
\caption{Duchenne smile dataset \citep{Whitehill_nips09}. (a) Error rate of different algorithms when the number of labels available increases. (b) Run time comparison. (c) A visualization with both run time and error rate when 40.7\% of the task assignments are done. 
}
\label{fig:results_Duchenne}
\vspace{-3mm}
\end{center}
\end{figure}

We conducted the experiments by sampling the labels independently with probability $\samq$ varying from 0.1 to 1. Thus the proportion of non-zero labels in the data matrix will vary from 0.6\% to 45.2\%. Note that based on our setting, the \taskAssign probability corresponding to $\samq$ is $\qs= \samq\times 45.2\%$. After sampling the data matrix from the original Duchenne dataset with a given $\samq$, we then run IWMV, majority voting, the EM algorithm \citep{Raykar_JMLR10}
, KOS (the iterative algorithm in \citep{Karger_NIPS2011}), and LPI (the variational inference algorithm from \citep{Liu2012}). The entire process will be repeated 100 times, and the results will be averaged. 
The comparison is shown in Figure \ref{fig:results_Duchenne}(a). 

From Figure \ref{fig:results_Duchenne}(a), We can see that when the available labels are very few, the performance of IWMV is as good as LPI, and these two generally dominate the other algorithms.  With more labels available, the error rate of IWMV is  around 2\% lower than LPI (Figure \ref{fig:results_Duchenne}(a)). At the same time, we compared the run time of each algorithm (Figure \ref{fig:results_Duchenne}(b)). With more labels, the run time of LPI increases fast (non-linearly), while the IWMV maintains a lower run time than EM, KOS and LPI. For a better visualization of comparing run time and error rate, we compared the run time of IWMV, EM, KOS and LPI by a bar plot with their error rates imposed on top. Figure \ref{fig:results_Duchenne}(c) shows the comparison when 40.7\% of the labels are available ($\samq=0.9$). IWMV is more than 100 times faster than LPI, and achieves the lowest error rate among these algorithms. 

An interesting phonomenon in Figure \ref{fig:results_Duchenne}(a) is that EM performs poorly --- it is even worse than MV. The major reason for this is that the workers reliabilities form a pattern similar to our model misspecification example in Figure \ref{fig:model_misspecify}(a): some workers are good at a set of images but bad at the complementary set of images, while the other workers are reversed. This is a real-data example of model misspecificatioin, and IWMV is more  robust to the model misspecification on this data than EM.

\begin{figure}[htb]
\begin{center}
\begin{tabular}{ccc}
\hspace{\suo}
\includegraphics[width=\wideR\columnwidth]{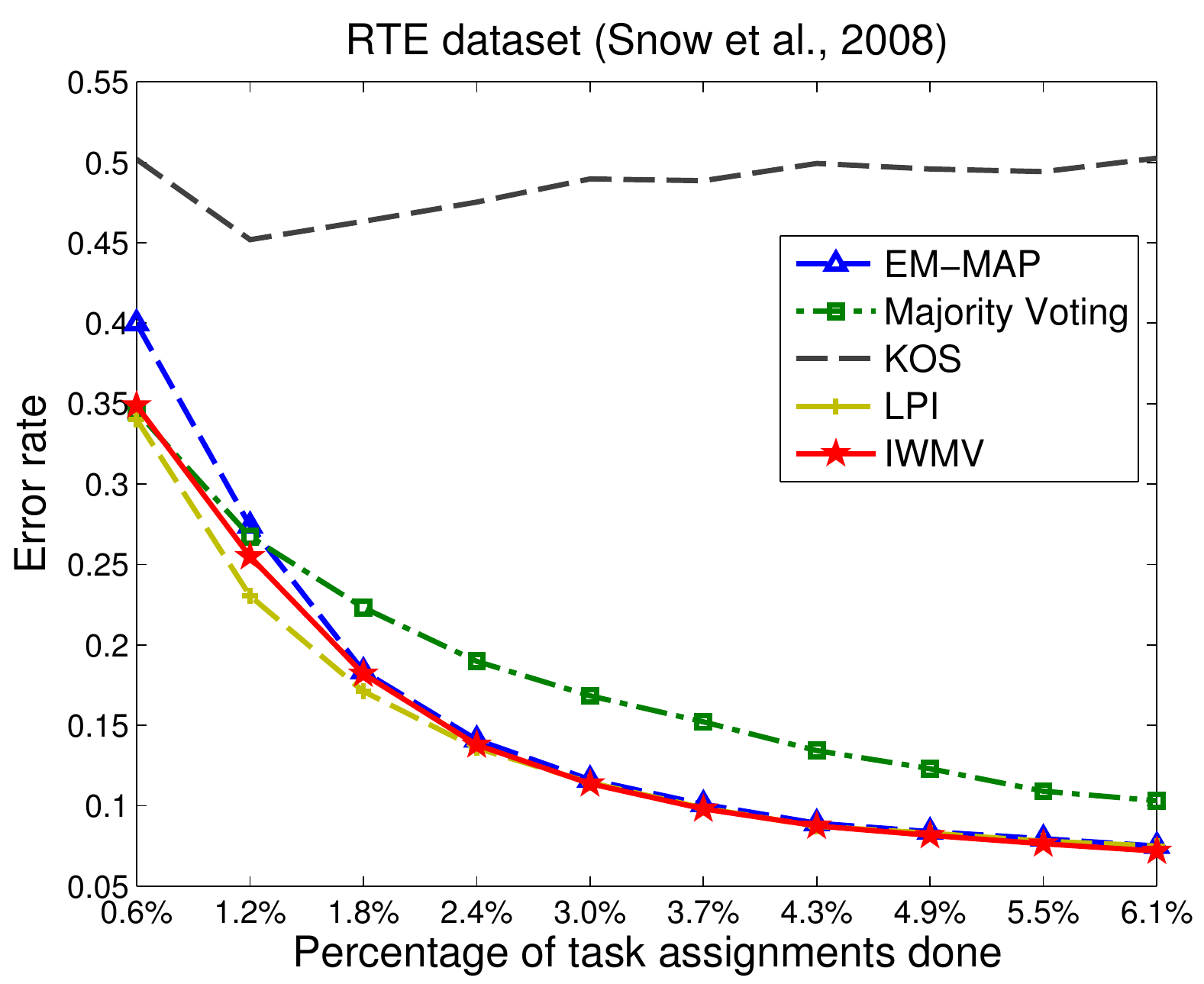}
&
\hspace{\suo}
\includegraphics[width=\wideR\columnwidth]{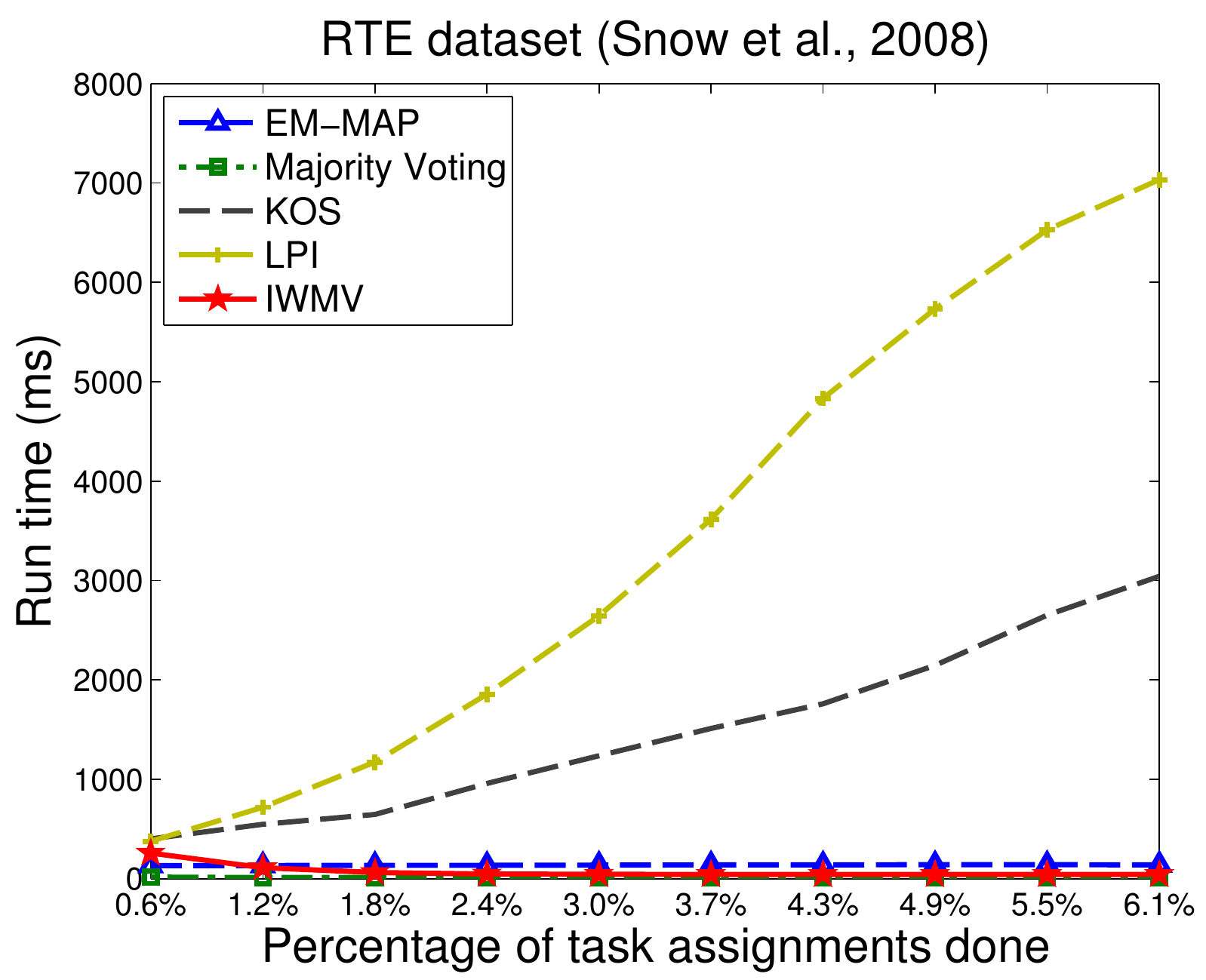}
&
\hspace{\suo}
\includegraphics[width=\wideR\columnwidth]{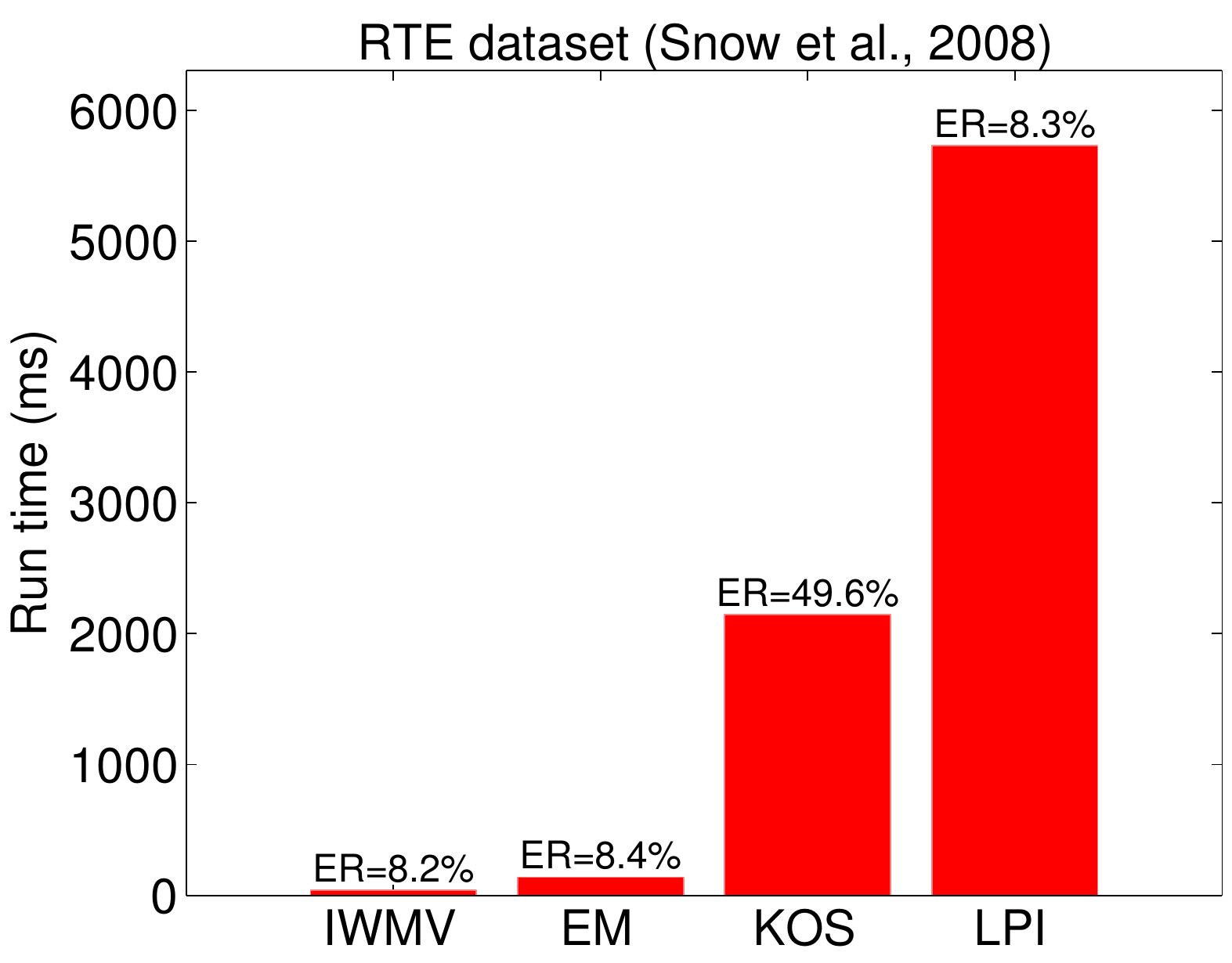}
\\
(a) & (b) & (c)
\end{tabular}
\caption{ RTE dataset \citep{Snow_emnlp08}. (a) Error rate of different algorithms.
(b) Run time when the percentage of the task assignments done increases.
(c) Run time comparison when 4.9\% of the task assignments is done. The error rates of each method are imposed on the top of the bar. 
}
\label{fig:results_RTE_realData}
\end{center}
\end{figure}


\emph{RTE dataset.} The RTE data is a language processing dataset from \citep{Snow_emnlp08}. The dataset is collected by asking workers to perform recognizing textual entailment (RTE) tasks, i.e.,  for each question the worker is presented with two sentences and given a binary choice of whether the second sentence can be inferred from the first. 


\emph{Temporal event dataset.}
This dataset is also a natural language processing dataset from \citep{Snow_emnlp08}. The task is to provide a label from \emph{\{strictly before, strictly after\}} for event-pairs that represents the temporal relation between them. 

We conducted similar experiments on the RTE dataset and the temporal event dataset as the one on the Duchenne dataset. The results on the RTE dataset are shown in Figure \ref{fig:results_RTE_realData}. 
Figure \ref{fig:results_RTE_realData}(a) is the performance curves of different algorithms when the percentage of task assignments done increases. 
Figure \ref{fig:results_RTE_realData}(b) is the run time of these algorithms, and it confirms the same observations as the results on the Duchenne dataset: the IWMV runs much faster than the other algorithms except majority voting, and it has similar performance to LPI which is the state-of-art method (Figure \ref{fig:results_RTE_realData}(c)). For clarity, we show the performance comparison on temporal event dataset in Figure \ref{fig:results_temp_web}(a) and omit the run time comparison. 


\begin{figure}[htb]
\begin{center}
\begin{tabular}{ccc}
\includegraphics[width=0.4\columnwidth]{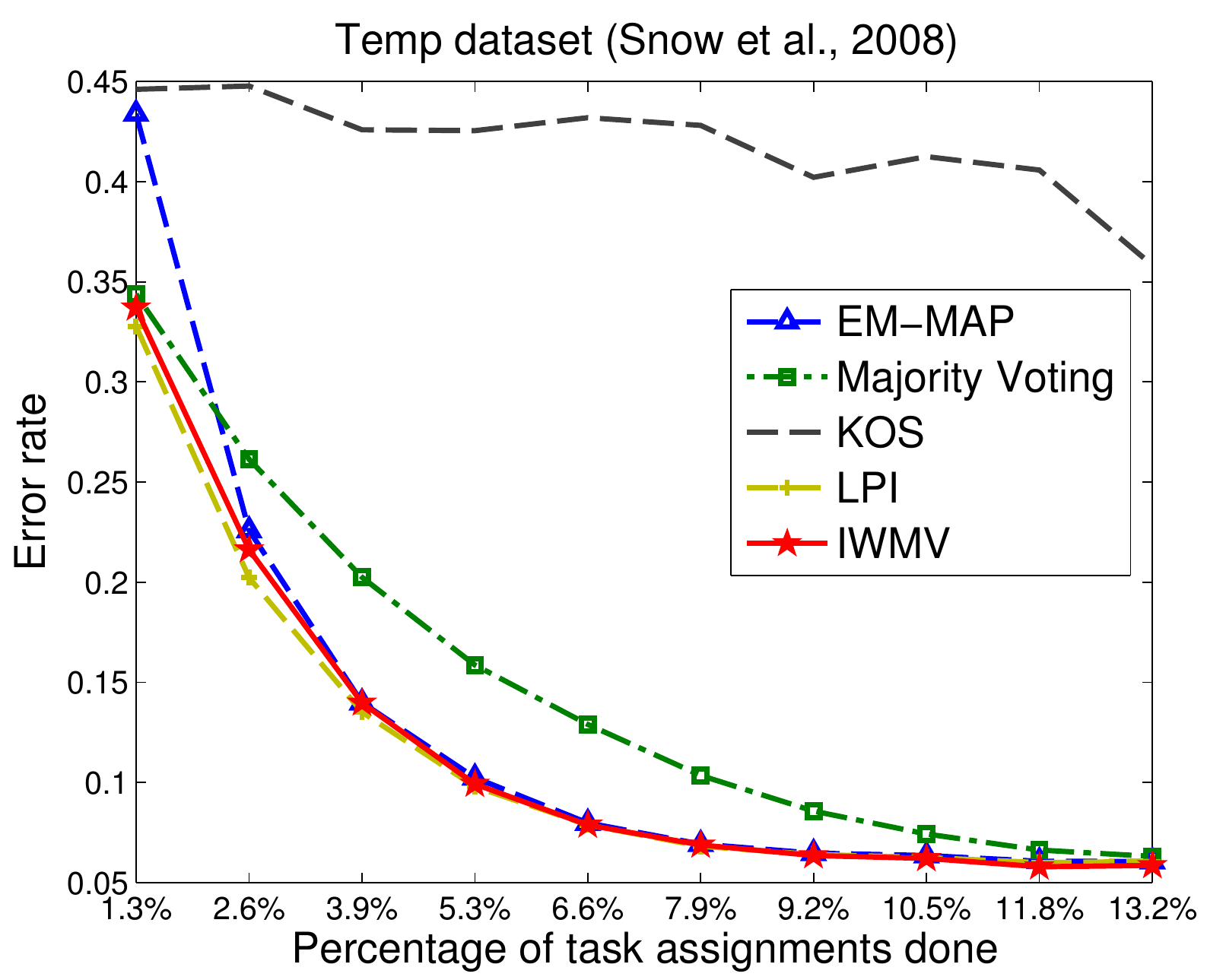}
&
\includegraphics[width=0.4\columnwidth]{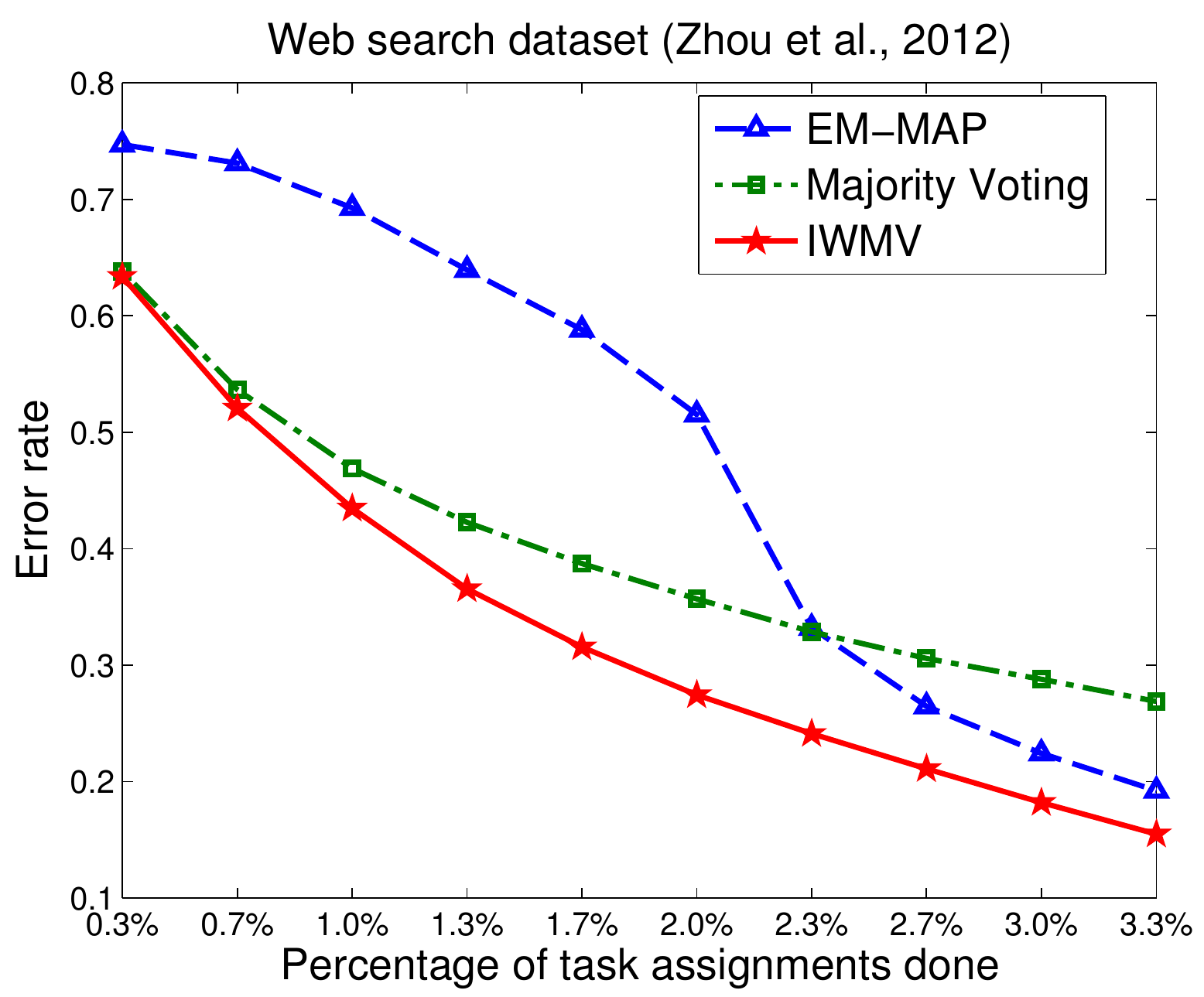}
\\
(a) & (b)
\end{tabular}
\caption{(a) Results on Temp dataset from \citep{Snow_emnlp08}. 
(b) Results on Web search dataset \citep{Zhou2012}. 
}
\label{fig:results_temp_web}
\vspace{-5mm}
\end{center}
\end{figure}

\emph{Web search dataset.} In this dataset \citep{Zhou2012}, workers were asked to rate query-URL pairs on a relevance rating scale from 1 to 5. Each pair was labeled by around 6 workers, and around 3.3\% of the entries in the data matrix are observed. 
The ground truth labels were collected via consensus from 9 experts. We treat the task as a multi-class labeling problem, thus $\nL=5$. 
We conduct the experiment in a similar setting to the experiment on the Duchenne dataset --- sampling the labels with probability $\samq$ and varied it to plot the performance curve (Figure \ref{fig:results_temp_web}(b)). Since LPI and KOS is constrained to binary labeling so far, we only compared IWMV with the \emmaprule and majority voting. 
The performance of IWMV generally outperforms the \emmaprule and majority voting by at least 4\%.

\section{Conclusions}

In this paper, we provided finite sample bounds on the error rate (in probability and in expectation) 
of decomposable \predrules 
 under the general \ds crowdsourcing model.
Optimizing the mean error rate bound under the \hds model leads to an \predrule that is a good approximation to the oracle MAP rule.  A data-driven iterative weighted majority voting is proposed to approximate the oracle MAP with a theoretical guarantee on the error rate of its one-step version.

Through simulations under the \hds model (for simplicity) and tests on real data, we have the following findings. 
\begin{enumerate}

\item The error rate bound reflects the trends of the real error rate of the \omaprule when some important factors in the crowdsourcing systems such as ($M,N,\wiAll$ etc.) change. 

\item The IWMV algorithm is close to the oracle MAP rule with superior performance in terms of error rate.

\item The iterative weighted majority voting method (IWMV) performs as well as the \emmaprule with much lower computational cost in simulation, and IWMV is more robust to model-misspecification than EM.  

\item On real data, IWMV achieved performance as good as or even better than that of the state-of-the-art methods with much less computational time. 


\end{enumerate}



In practice, if we want to obtain the error rate bounds for certain \predrule{}s that falls in the form of decomposable \predrule (\ref{def:predRuleFi}), what we can do should be similar to what we did in Section \ref{sec:specialCases}: (1) for the specific model and \taskAssign, compute the measure of $\tone$ and $\ttwo$ (and also $\cH$ and $\sigtwo$ if interested in bounds on the mean error rate) according to the descriptions in Section \ref{sec:quantities}. (2) Compute the corresponding error rate bounds according to the theorems in Section \ref{sec:Mainresults}. The quantities $\tone$ and $\ttwo$ can tell us if we can obtain upper bound or lower bound on the error rate in probability and in expectation. Note that though the mean error rate bounds (Theorem \ref{thm:MostGeneral_meanErrorRate}) are in a composite form of two exponential bounds, we can choose one of the two to use if the convenience of theoretical analysis is concerned. 

To the best of our knowledge, this is the first extensive work on error rate bounds for general \predrules under the practical \ds model for multi-class crowdsourced labeling. Our bounds are useful for explaining the effectiveness of different \predrules.

As a further direction for research, it would be interesting to obtain finite sample error bounds for the \predrules with random score functions, which depend on the data in a complicated manner. For example, the \emmaprule can be formulated as a weighted majority voting under the \hds model. However, the weights are estimated by EM algorithm \citep{Raykar_JMLR10} and depend on the data complicatedly. Hence the analysis of the EM algorithm is rather difficult. The IWMV and the EM algorithm share the similar iterative nature, and IWMV is simpler than EM. An error rate analysis of the final prediction of the IWMV will be helpful to understand the behavior of EM algorithm in the crowdsourcing context.

\section{Acknowledgement}
We thank Dengyong Zhou for his valuable suggestions and enlightening comments which lead to many improvements of this paper. 
We would like to thank Riddhipratim Basu and Qiang Liu for the valuable discussions. We would also like to thank Terry Speed for his helpful comments and suggestions.

\nocite{Dalvi2013, Dekel_icml09,Ertekin_nips09,Sheng2008,  Jin_nips02, Smyth1995,
Yan_icml11,
Bachrach_ICML12, Angluin_ML88,
Natarajan_NIPS13, Ho2013}


\bibliography{hongwei_NIPS14_reference}
\bibliographystyle{natbib}

\appendix
\section{Proofs of the main theorems}
\label{app:proveTheorems}

\def \thmGeneralERBoundProb{\ref{thm:MostGeneral_One}\xspace}
\def \thmPACDelta{\ref{thm:MostGeneral_DeltaExplicit}\xspace}
\def \thmGeneralMERBound{\ref{thm:MostGeneral_meanErrorRate}\xpace}
\def \thmGeneralMERIndiv{\ref{thm:MostGeneral_boundMER_indiv}\xspace}

\def \corOMAPgds{Corollary \ref{cor:oracleMAP_gds}\xspace}
\def \corMAPhds{Corollary \ref{cor:msr_MAP_hds}\xspace}

\def \thmosWMVBound{Theorem \ref{thm:mer_IWMV_Bound} }

Since the proof of Theorem \ref{thm:MostGeneral_One} requires other theorems in this paper, we will not present the proofs in the same order as in the paper. The order of our proofs will be: Proposition \ref{thm:MostGeneral_boundMER_indiv}, Theorem \ref{thm:MostGeneral_meanErrorRate},
Theorem \ref{thm:MostGeneral_One} and then
Theorem \ref{thm:MostGeneral_DeltaExplicit}. After proving the first four main results, we also prove \corOMAPgds and \corMAPhds. Since the proof of \thmosWMVBound requires different techniques and more efforts than the proofs in this section, we put its proof in a seperate section --- Appendix B. 

 
Before presenting the proofs, we would like to propose some notations for simplicity and clarity. 

\def \muikl{\weighti_{kl}}
\def \muikk{\weighti_{kk}}
\def \muikh{\weighti_{kh}}
\def \muilh{\weighti_{lh}}

We  simplify  $\f_i(k,h)$ to 
\begin{eqnarray}
\muikh \defas \f_i(k,h), \quad \forall k\in \Labset, h\in\Lextend,
\end{eqnarray}
then each worker is associated with a vote matrix $\weighti=(\muikh), k\in \Labset, h\in \Lextend$, where $\muikh$ is the voting score when worker $i$ labels item $j$ whose true label is $k$, as class $h$ if $h\neq 0$. Then the \predrule (\ref{def:predRuleFi}) is equivalent to
\begin{eqnarray}\label{def:generalPredBrief}
\hyj= \argmax_{k\in\Labset} \kua{\sumi \sumhL \muikh \I{\zij=h} ~+ ~\ak},
\end{eqnarray}

Note that 
\begin{eqnarray}\label{def:sjl}
\sjk = \sumi\sumhL \muikh\I{\zij=h} + \ak , \quad \forall k\in \Labset, j\in\N
\end{eqnarray}
is the aggregated score of label class $k$ on the $j$th item, and  the general \predrule is
\begin{eqnarray}
\hyj= \argmax_{k\in\Labset} \sjk . 
\end{eqnarray}

We will frequently discuss conditional probability, expectation and variance conditioned on the event $\hua{\yj=k}$. For simplicity of notations, we define:
\begin{eqnarray}
&& \Pk{~\cdot~} \defas \P(~\cdot~ | \yj=k) \\
&& \Ek{~\cdot~} \defas \E[~\cdot~ | \yj=k] \\
&& \Vark{~\cdot~} \defas \mathrm{Var}(~
\cdot~ | \yj= k) . 
\end{eqnarray}

Note that 
\begin{eqnarray}
\Ek{\sjl}= \sumi\sumhL \qij\muilh\cmikh + \al, ~~ \forall l,k \in \Labset. 
\end{eqnarray}

\subsection{Proof of Proposition \ref{thm:MostGeneral_boundMER_indiv}: bounding the mean error rate of labeling each  item}

\begin{prop}\label{thm:MostGeneral_boundMER_indiv}
\hwem{(Bounding the mean error rate of labeling each  item)}
Following the setting of Theorem \ref{thm:MostGeneral_One}, and with $\taujmin$ and $\taujmax$ defined as in (\ref{def:taujminmax})
, we have $\forall j\in \N
$, 
\\
(1)
if ~ $\taujmin \geq 0$,~  then ~
$
\P(\hyj \neq \yj) \leq (\nL-1)\cdot\min\hua{ \exp\kua{ - {\taujmin^2 \over 2}}, \exp\kua{-\frac{\taujmin^2}{2\kua{\sigtwo + \cH\taujmin/3}}} } $;
\\
(2)
if ~ $\taujmax \leq 0$,~  then ~ 
$
\P(\hyj \neq \yj) \geq 1 - \min\hua{\exp\kua{-\frac{\taujmax^2}{2}}, \exp\kua{-\frac{\taujmax^2}{2(\sigtwo - c\taujmax/3)} }}.
$

\end{prop}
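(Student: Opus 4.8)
The plan is to fix the item $j$, condition on the true label $\hua{\yj = k}$ for each $k\in\Labset$, and reduce the error event to a family of pairwise score comparisons that can be handled by concentration inequalities. Since $\hyj\neq k$ forces $\sjl\geq\sjk$ for some $l\neq k$, a union bound gives
\[
\Pk{\hyj \neq k} \;\le\; \sum_{l\neq k}\Pk{\sjk - \sjl \le 0}.
\]
For the lower bound in part (2) I would instead retain a single competitor $l$ and use $\Pk{\hyj\neq k}\ge \Pk{\sjl > \sjk}$. Either way, everything reduces to controlling the single quantity $\sjk-\sjl$ under $\Pk{\cdot}$.

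The key structural observation is that, conditionally on $\yj=k$, we may write $\sjk-\sjl = \sumi X_i + (a_k-a_l)$, where $X_i=\sum_h(\muikh-\muilh)\I{\zij=h}$ depends only on worker $i$'s label and is therefore \emph{independent across} $i$. A short computation shows $X_i$ takes the value $\muikh-\muilh$ with probability $\qij\cmikh$ and $0$ otherwise, so that $\Ek{\sjk-\sjl}=\gapjkl$, its magnitude $|X_i|$ is bounded by the $i$-th term $\max_{k\neq l,h}|\muikh-\muilh|$ defining $\normalize$, and its conditional second moment matches the summand of $\sigtwo$. Thus $\Pk{\sjk-\sjl\le 0}$ is a lower-tail deviation of an independent sum with deviation $\gapjkl\ge\taujmin\normalize$, by definition of $\taujmin$ as the minimum normalized gap.

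From here I would apply two concentration bounds to the \emph{same} deviation and take their minimum. Hoeffding's inequality, using that the per-worker squared ranges sum to at most $4\normalize^2$, yields the $\exp(-\taujmin^2/2)$ term. Bernstein's inequality, using the variance bound $\sumi\Vark{X_i}\le \sumi\sum_h\qij(\muikh-\muilh)^2\cmikh\le \sigtwo\normalize^2$ together with the uniform magnitude bound $\cH\normalize$, yields the $\exp\!\big(-\taujmin^2/(2(\sigtwo+\cH\taujmin/3))\big)$ term; monotonicity of both tail bounds in the deviation lets me replace $\gapjkl$ by its lower bound $\taujmin\normalize$. Multiplying by the $(\nL-1)$ competitors from the union bound gives part (1). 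Part (2) is the mirror image: when $\taujmax\le 0$ every gap $\gapjkl$ is nonpositive, so for a fixed wrong label the same two inequalities upper-bound $\Pk{\sjk-\sjl\ge 0}$ by $\min\hua{\exp(-\taujmax^2/2),\exp(-\taujmax^2/(2(\sigtwo-\cH\taujmax/3)))}$, and complementation yields the claimed lower bound.

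The routine parts are the union bound and the bookkeeping matching $\Ek{X_i}$, $\Vark{X_i}$ and the ranges to the defined quantities $\gapjkl$, $\sigtwo$, $\cH$ and $\normalize$. The main obstacle I anticipate is the Bernstein step: one must verify carefully that the centered per-worker variables admit the one-sided magnitude bound proportional to $\cH\normalize$ (the $0$-valued ``unlabeled'' outcome and the possibly signed differences $\muikh-\muilh$ both need to be accounted for), and that the variance proxy is genuinely dominated by $\sigtwo\normalize^2$ after discarding the $-(\Ek{X_i})^2$ term. Pinning down the constant $1/3$ in the denominator is where the argument is most delicate.
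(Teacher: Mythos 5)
Your proposal is correct and follows essentially the same route as the paper's proof: condition on $\yj=k$, union-bound over the $\nL-1$ competitors via the pairwise events $\{\sjl\geq\sjk\}$, express each comparison as a sum of conditionally independent bounded per-worker variables with mean gap $\gapjkl$, and apply Hoeffding (range bound $4\normalize^2$) and a Bernstein-type inequality (variance proxy $\sigtwo\normalize^2$, magnitude bound $\cH\normalize$) to the same deviation, taking the minimum; part (2) is handled by keeping a single competitor exactly as you describe. The only step you leave implicit is the final averaging $\P(\hyj\neq\yj)=\sum_k\pri_k\Pk{\hyj\neq k}$, which goes through because your bounds are uniform in $k$.
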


\remark Proposition \ref{thm:MostGeneral_boundMER_indiv} provides the mean error rate bounds of labeling any specific item, and the bounds depend on the minimum and maximum values of $\hua{\gapjkl}_{k,l\in\Labset}$. Note that the subscript $j$ only comes from the \assign distribution $\qij$. If a specific worker has the same \assign probability to label all items, say $\qi$, then we can drop the subscript $j$ from $\taujmin$ and $\taujmax$, which means the error rate bounds of each item are eventually the same under that task assignment.

\begin{proof}
First of all, we expand the error probability of labeling the $j$-th item wrong in terms of the conditional probabilities:
\begin{eqnarray}\label{eqn:phyjDecomp}
\P(\hyj\neq \yj) \= \sumkinL \P(\yj=k)\P(\hyj\neq k | \yj=k) 
~=~ \sumkinL \pri_k \Pk{\hyj\neq k}.
\end{eqnarray}

Our major focus in this proof is to bound the term $\Pk{\hyj\neq k}$. 
Our approach will be based on the following events relations:
\begin{equation}
\bigcup_\lneqk \hua{\sjl>\sjk} 
\quad\subseteq\quad 
\hua{\hyj\neq k} 
\quad\subseteq\quad 
\bigcup_\lneqk\hua{\sjl\geq \sjk} . 
\end{equation}

(1). Assuming $\taujmin \geq 0$, we want to show the lower bound for $\P(\hyj\neq \yj)$. Note that
\begin{eqnarray}
\Pk{\hyj\neq k} &\leq& \Pk{\bigcup_\lneqk \hua{\sjl\geq \sjk}} 
\leq  \sum_\lneqk \Pk{\sjl\geq \sjk}.  \label{ineq:sum_prob_sl_geq_sk}
\end{eqnarray}

With $\sjl$ defined as in (\ref{def:sjl}), $\gapjkl$ defined as in (\ref{def:gapjkl}) and 
\begin{eqnarray}
&& \xikl \defas \sumhL (\muilh-\muikh)\I{\zij=h}, \\
&& \Ek{\xikl} =  \sumhL \qij\kua{\muilh-\muikh}\cmikh, 
\end{eqnarray}

we have
\begin{eqnarray}
\Pk{\sjl\geq \sjk} \= \Pk{\sumi\sumhL \kua{ \muilh-\muikh}\I{\zij=h} \geq \ak-\al} \nonumber \\
\= \Pk{\sumi \xikl \geq \ak-\al}
\nonumber \\
\= \Pk{\sumi \xikl - \sumi \Ek{\xikl} \geq (\ak-\al) - \sumi \Ek{\xikl}}, \quad 
\nonumber \\
\= \Pk{\sumi \xikl - \sumi \Ek{\xikl} \geq \gapjkl} \label{ineq:lowerBoundHub}
\end{eqnarray}

Note that $\hua{\xikl}_{i\in\M}$ are conditionally independent when given $\hua{\yj=k}$, and they are bounded given the voting weights $\hua{\muikh}$ are bounded. Therefore, we can apply the Hoeffding concentration inequality \citep{Hoeffding1956} to further bound $\Pk{\sjl\geq\sjk}$. 

We have that 
$
\min_{l,k,h\in\Labset,k\neq l}\hua{\muilh-\muikh} \leq \xikl \leq 
\max_{l,k,h\in\Labset,k\neq l}\hua{\muilh-\muikh}, 
$
and 
\begin{eqnarray*}
\sumi \braket{\max_{l,k,h\in\Labset,k\neq l}\hua{\muilh-\muikh} - \min_{l,k,h\in\Labset,k\neq l}\hua{\muilh-\muikh}}^2 
&\leq& \sumi\kua{2\max_{l,k,h\in\Labset,k\neq l}|\muilh-\muikh|}^2 
\\
\= 4\normalize^2 \label{ineq:range_xikl}
\end{eqnarray*}

When $\gapjkl\geq \taujmin\cdot\normalize\geq 0$, by applying the Hoeffding inequality to (\ref{ineq:lowerBoundHub}), we have
\begin{eqnarray*}
\Pk{\sjl\geq \sjk} &\leq& \Pk{\sumi \xikl - \sumi \Ek{\xikl} \geq \gapjkl} 
\\
&\leq& \exp\kua{-\frac{2\gapjkl^2}{\sumi \braket{\max_{l,k,h\in\Labset,k\neq l}\hua{\muilh-\muikh} - \min_{l,k,h\in\Labset,k\neq l}\hua{\muilh-\muikh}}^2 }} 
\\
&\leq& \exp\kua{- \frac{\gapjkl^2}{2\normalize^2}} \qquad\qquad (\text{because of (\ref{ineq:range_xikl})})
\\
&\leq & \exp\kua{-\frac{\taujmin^2}{2}}. \qquad\qquad (\text{based on the definition of $\taujmin$)}) 
\end{eqnarray*}

The right hand side of the last inequality does not depend on $k, l$ or $i$, then
\begin{eqnarray}
\Pk{\hyj\neq k} &\leq& \sum_\lneqk \Pk{\sjl\geq \sjk} 
\quad\leq\quad  (\nL-1)\exp\kua{-\frac{\taujmin^2}{2}}.
\end{eqnarray}
Because the RHS does not depend on $k$, we have
\begin{eqnarray}
\P(\hyj\neq \yj) \= \sum_{k\in\Labset}\pri_k\Pk{\hyj\neq k} \nn\\
&\leq& (\nL-1)\exp\kua{-\frac{\taujmin^2}{2}} \kua{\sum_{k\in\Labset} \pri_k} \nn\\
\= (\nL-1)\exp\kua{-\frac{\taujmin^2}{2}} 
\label{ineq:Hoeffding1}
\end{eqnarray}
The Hoeffding inequality does not take the variance information of the independent random variables into account, thus a ``stronger" concentration inequality can be applied when the fluctuation of $\xikl$ is available. 

Note the definition of $\cH$ and $\sigtwo$ are defined as
\begin{eqnarray*}
\cH \= \inv{\normalize}\cdot\max_{i\in\M, k,l,h\in\Labset, k\neq l} \abs{\muikh-\muilh},
\\
\sigtwo \= \inv{\normalize^2}\cdot\maxj \maxkl \sumi\sumhL \qij\kua{\muikh-\muilh}^2 \cmikh.
\end{eqnarray*}
The sum of the second moment of $\xikl$ can be bounded as
\begin{eqnarray*}
\sumi \Ek{\kua{\xikl}^2} \= \sumi \Ek{\kua{\sumhL (\muilh-\muikh)\I{\zij=h} }^2 }
= \sumi\sumhL \qij\kua{\muikh-\muilh}^2 \cmikh
\leq \sigtwo \normalize^2. 
\end{eqnarray*}

$\xikl$ can be bounded as 
$
|\xikl| 
\quad \leq \quad 
\max_{i\in\M, h\in\Labset} \abs{\muilh-\muikh}
\quad = \quad 
\cH\normalize. 
$ 

By applying the Bernstein-type concentration inequality (\citep{Chung_oldNewConIneq}, Theorem 2.8) with that $\gapjkl\geq \taujmin\normalize\geq 0$, 
\begin{eqnarray}
\Pk{\sjl\geq \sjk} &\leq& \Pk{\sumi \xikl - \sumi \Ek{\xikl} \geq \gapjkl} 
\nn\\
&\leq& \exp\kua{- \frac{\gapjkl^2}{2\kua{\sigtwo + \cH\normalize\gapjkl/3}}},  \nn \\
&\leq& \exp\kua{-\frac{\taujmin^2}{2\kua{\sigtwo+ \cH\taujmin/3}}}  \quad \quad (\text{because $\frac{\gapjkl}{\normalize}\geq \taujmin\geq 0$}), \qquad 
\end{eqnarray}
where the RHS does not depend on $k, l$. 
Then, we have
$$
\Pk{\hyj\neq k} \leq (\nL-1)\exp\kua{-\frac{\taujmin^2}{2\kua{\sigtwo + \cH\taujmin/3}}}.
$$
Furthermore, 
\begin{eqnarray}\label{ineq:Bernstein1}
\P(\hyj\neq \yj)= \sum_{k\in\Labset}\pri_k\Pk{\hyj\neq k} \leq (\nL-1)\exp\kua{-\frac{\taujmin^2}{2\kua{\sigtwo + \cH\taujmin/3}}}. 
\end{eqnarray}

Combining inequalities (\ref{ineq:Hoeffding1}) and (\ref{ineq:Bernstein1}) together, we can get the desired result in Theorem \ref{thm:MostGeneral_boundMER_indiv}.(1).

(2). Assuming that $\taujmax\leq 0$, we want to show the upper bound for $\P(\hyj\neq \yj)$. 

Using the same argument as in (1), we  provide a lower bound for $\Pk{\hyj\neq k}$. 

\begin{eqnarray*}
\Pk{\hyj\neq k} \geq \Pk{\bigcup_\lneqk \hua{\sjl > \sjk}}
~&\geq&~\max_\lneqk \Pk{\sjl > \sjk} \\
\= 1 - \min_\lneqk \Pk{\sjl\leq \sjk} 
\end{eqnarray*}

Given $\gapjkl \leq \taujmax\cdot\normalize \leq 0$, by applying the Hoeffding and the Bernstein inequality as in (1), we can obtain  
$
\Pk{\sjl\leq \sjk} \leq \exp\kua{-\frac{\gapjkl^2}{2\normalize^2}}
\leq \exp\kua{-\frac{\taujmax^2}{2}}, 
$
and 
$
\Pk{\sjl\leq \sjk} \leq \exp\kua{-\frac{\gapjkl^2}{2(\sigtwo\normalize^2 - c\normalize\gapjkl)}}
\leq \exp\kua{-\frac{\taujmax^2}{2(\sigtwo - c\taujmax)}}.
$
Since the RHS of the two inequalities do not depend on $k$ or $l$, 
$
\Pk{\hyj\neq k} \geq 1 - \min\hua{\exp\kua{-\frac{\taujmax^2}{2}}, \exp\kua{-\frac{\taujmax^2}{2(\sigtwo - c\taujmax)} }}. 
$

With (\ref{eqn:phyjDecomp}), we have 
$
\P\kua{\hyj\neq \yj} \geq 1 - \min\hua{\exp\kua{-\frac{\taujmax^2}{2}}, \exp\kua{-\frac{\taujmax^2}{2(\sigtwo - c\taujmax)} }}. 
$

\end{proof}

\subsection{Proof of Theorem \ref{thm:MostGeneral_meanErrorRate}}

\begin{proof}
Given $\sigtwo\geq 0 $ and $\cH> 0$, both functions $\exp\kua{-\frac{t^2}{2}}$ and $\exp\kua{-\frac{t^2}{2(\sigtwo + \cH t/3)}}$ are monotonely increasing on $t\in [0, \infty)$. On the other hand, both functions $\exp\kua{-\frac{t^2}{2}}$ and $\exp\kua{-\frac{t^2}{2(\sigtwo - \cH t/3)}}$ are monotonely decreasing on $t\in (-\infty, 0]$. 

Given $\tone \geq 0$, then $\taujmin\geq \tone \geq 0$. 
By Proposition \ref{thm:MostGeneral_boundMER_indiv}, 
\begin{eqnarray*}
\MER &\leq& \inv{N}\sumj (\nL-1)\min\hua{\exp\kua{-\frac{\taujmin^2}{2}}, \exp\kua{-\frac{\taujmin^2}{2\kua{\sigtwo + \cH\taujmin/3}}}} \\
&\leq& \frac{\nL-1}{N}\sumj \min\hua{\exp\kua{-\frac{\tone^2}{2}}, \exp\kua{-\frac{\tone^2}{2\kua{\sigtwo + \tone/3}}}}\\
\= (\nL-1)\min\hua{\exp\kua{-\frac{\tone^2}{2}}, \exp\kua{-\frac{\tone^2}{2\kua{\sigtwo + \tone/3}}}}.
\end{eqnarray*}

Thus, we have proved Theorem \ref{thm:MostGeneral_meanErrorRate}.(1).

With the same argument, we can straightforwardly prove Theorem \ref{thm:MostGeneral_meanErrorRate}.(2). 

\end{proof}

\subsection{Proof of Theorem \ref{thm:MostGeneral_One}}

So far, we have bounded the mean error rate, but we still need more tools for bounding the error rate in the practical case with high probability. The following lemma is another form of the Bernstein-Chernoff-Hoeffding theorem \citep{Ngo_LN2011}.

\def \accuracy{\inv{N}\sumj\I{\hyj\neq \yj}}

\begin{lem}\label{lem:BCH}
\hwem{(Bernstein-Chernoff-Hoeffding)} Let $\xi_i\in[0,1]$ be independent random variables where $\E\xi_i= p_i, i\in [n]$. Let $\xibar= \frac{1}{n}\sum_{i=1}^n \xi_i$ and $\pbar=\frac{1}{n}\sum_{i=1}^n p_i$. Then, 

(1) for any $m$ such that $\pbar \leq \frac{m}{n} < 1$, 
$
\P\kua{\xibar > m/n } \leq e^{-n \D(m/n || \pbar)}, 
$

(2) for any $m$ such that $0< \frac{m}{n} \leq \pbar$, 
$
\P\kua{\xibar < m/n} \leq e^{-n \D(m/n || \pbar)}.
$
\end{lem}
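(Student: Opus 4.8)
The plan is to prove both parts by the classical Chernoff exponential-moment method, which converts each tail probability into an infimum over a single free parameter $t>0$, and then to show that the optimized exponent collapses exactly into the Kullback--Leibler divergence $\D(m/n\,||\,\pbar)$. Throughout I write $\mu=m/n$ and work with the sum $S=\sum_{i=1}^n \xi_i$, so that $\xibar>\mu$ is the event $S>m$.

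First I would treat part (1). For any $t>0$, Markov's inequality applied to $e^{tS}$ gives $\P(\xibar>\mu)\le e^{-tm}\,\E[e^{tS}]=e^{-tm}\prod_{i=1}^n \E[e^{t\xi_i}]$, where independence lets the moment generating function factor. The essential per-coordinate estimate comes from convexity of $x\mapsto e^{tx}$ on $[0,1]$: since $\xi_i\in[0,1]$ we have $e^{t\xi_i}\le 1-\xi_i+\xi_i e^{t}$ pointwise, so taking expectations yields $\E[e^{t\xi_i}]\le 1+p_i(e^{t}-1)$. The next step homogenizes the parameters $p_i$ into their average $\pbar$: with $a=e^{t}-1>0$ every factor $1+p_i a$ is positive, so the arithmetic--geometric mean inequality gives $\prod_{i=1}^n(1+p_i a)\le\bigl(\tfrac1n\sum_{i=1}^n(1+p_i a)\bigr)^n=(1+\pbar(e^{t}-1))^n$. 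Hence $\P(\xibar>\mu)\le\exp\!\bigl(-tm+n\ln(1+\pbar(e^{t}-1))\bigr)$.

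It remains to minimize the exponent over $t>0$. Differentiating shows the optimizer solves $\frac{\pbar e^{t}}{1+\pbar(e^{t}-1)}=\mu$, i.e. $e^{t^\ast}=\frac{\mu(1-\pbar)}{\pbar(1-\mu)}$, and this is genuinely $>1$ precisely because the hypothesis $\pbar\le\mu<1$ holds (so $t^\ast>0$ is admissible, with $t^\ast=0$ and a trivial bound in the boundary case $\mu=\pbar$). Substituting $e^{t^\ast}$ collapses $1+\pbar(e^{t^\ast}-1)$ to $\frac{1-\pbar}{1-\mu}$, and the exponent simplifies to $-n\bigl[\mu\ln\frac{\mu}{\pbar}+(1-\mu)\ln\frac{1-\mu}{1-\pbar}\bigr]=-n\,\D(\mu\,||\,\pbar)$, which is the claim. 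Part (2) is the mirror image: I would bound $\P(\xibar<\mu)\le e^{tm}\prod_{i=1}^n \E[e^{-t\xi_i}]$ for $t>0$, apply the same convexity estimate $\E[e^{-t\xi_i}]\le 1+p_i(e^{-t}-1)$ (the factors stay positive since $e^{-t}-1\in(-1,0)$, so the AM--GM step still applies), and optimize over $t>0$; here the hypothesis $0<\mu\le\pbar$ is exactly what makes the optimizer admissible, and the identical algebra again produces $-n\,\D(\mu\,||\,\pbar)$.

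The only genuinely delicate points are the final algebraic simplification of the optimized exponent into the closed-form divergence, and verifying that the stated admissible range of $m$ is precisely the condition under which the Chernoff optimizer $t^\ast$ is positive; everything else is routine. Since the statement is a standard restatement, an alternative is to invoke the Bernstein--Chernoff--Hoeffding bound of \citep{Ngo_LN2011} directly rather than reproducing the argument above.
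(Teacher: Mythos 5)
Your proof is correct. The paper itself does not prove this lemma at all: it states it as ``another form of the Bernstein--Chernoff--Hoeffding theorem'' and simply cites \citet{Ngo_LN2011}, so your closing remark that one could invoke that reference directly is in fact exactly what the authors do. Your self-contained derivation is the standard one and all the steps check out: Markov applied to $e^{tS}$, the convexity bound $\E[e^{t\xi_i}]\le 1+p_i(e^t-1)$ valid for $\xi_i\in[0,1]$, the AM--GM step replacing the individual $p_i$ by $\pbar$ (this is the key point that lets the heterogeneous-mean case reduce to the Bernoulli($\pbar$) exponent), and the algebra collapsing the optimized exponent to $-n\,\D(m/n\,\|\,\pbar)$. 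You also correctly identify that the hypothesis $\pbar\le m/n<1$ (resp.\ $0<m/n\le\pbar$) is precisely what makes the Chernoff optimizer $t^\ast\ge 0$ admissible, and you handle the boundary case $m/n=\pbar$ where the bound degenerates to the trivial $e^{0}=1$. The only cosmetic omission is an explicit remark that the exponent $-tm+n\ln(1+\pbar(e^t-1))$ is convex in $t$ (being, up to the linear term, the cumulant generating function of a Bernoulli($\pbar$) sum), so that the stationary point is indeed the global minimizer; this is routine and does not affect correctness. Relative to the paper, your version buys a self-contained argument at the cost of a page of standard computation, whereas the paper buys brevity at the cost of an external dependency.
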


The proof of Theorem \thmGeneralERBoundProb is as follows:
\begin{proof} \emph{(Theorem \thmGeneralERBoundProb)}

\hwem{Proof of Theorem \thmGeneralERBoundProb (1) }
Let $\mu = \MER  $. By Theorem \ref{thm:MostGeneral_meanErrorRate}.(1), we have that 
$\mu \leq (\nL-1)e^{- \tone^2/2} = (\nL-1)\phi(\tone)$. Assume $\tone \geq \sqrt{2\ln\frac{\nL-1}{\epsilon}}$, then we can get $(\nL-1)\phi(\tone) \leq \epsilon$, which gives us $0 \leq \mu \leq (\nL-1)\phi(\tone) \leq \epsilon$. Then by the Bernstein-Chernoff-Hoeffding Theorem, i.e. Lemma \ref{lem:BCH}, we get 
$\P\kua{\accuracy > \epsilon} \leq e^{-N \D(\epsilon ||\mu)} \leq e^{-N \D(\epsilon ||(\nL-1)\phi(\tone))}$. Therefore, we have 
$$
\P\kua{\errorrate \leq \epsilon} \geq 1- e^{-N\D(\epsilon || (\nL-1)\phi(\tone))}. 
$$

\hwem{Proof of Theorem \thmGeneralERBoundProb (2) }
With the same argument as above, assuming $\ttwo \leq - \sqrt{2\ln\inv{1-\epsilon}}$, then $1\geq \mu \geq 1-\phi(\ttwo)\geq \epsilon$, which gives us
$$
\P\kua{\errorrate \geq \epsilon}  \geq 1- e^{-N\D(\epsilon || 1- \phi(\ttwo))}.
$$

Thus, we have proved Theorem \thmGeneralERBoundProb.

\end{proof}

\subsection{Proof of Theorem \ref{thm:MostGeneral_DeltaExplicit}}

Before proving Theorem \ref{thm:MostGeneral_DeltaExplicit}, we are going to  prove an important lemma for bounding the average of a group of independent Bernoulli random variables. 
The proof of this lemma relies on Hoeffding bounds and the Bernstein-Chernoff-Hoeffding theorem \citep{Ngo_LN2011}.  
\begin{lem}\label{lemma:probBound}
Suppose $\forall j\in \N$, $\xi_j \sim $ Bernoulli($\pj$) with $\pj\in (0,1)$, and $\xi_j$'s are independent of each other. Let $\xibar= \inv{N}\sumj \xi_j$  and $\pbar=\E \xibar= \inv{N}\sumj \pj$
Given any $\epsilon, \delta \in (0,1)$: 

\vspace{3mm}
(1) If~~
$
0<\pbar\leq \inv{\constC},
$
~~then~~
$\P(\xibar \leq \epsilon) \geq 1-\delta.$

(2) If~~
$
\inv{1+ \exp\kua{ - \inv{1-\epsilon}\constA}} \leq \pbar < 1,
$
~~then~~
$\P(\xibar \leq \epsilon) < \delta.$

\end{lem}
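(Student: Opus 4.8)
The plan is to apply the Bernstein--Chernoff--Hoeffding bound of Lemma~\ref{lem:BCH} to the independent Bernoulli summands $\xi_j$, taking the threshold index to correspond to $\epsilon$, so that $\xibar$ concentrates around $\pbar$. In both parts the whole problem collapses to the single divergence inequality $\D(\epsilon||\pbar) \geq \inv{N}\ln\inv{\delta}$: once it holds, part~(1) follows from Lemma~\ref{lem:BCH}(1) as $\P(\xibar > \epsilon) \leq e^{-N\D(\epsilon||\pbar)} \leq \delta$, and part~(2) from Lemma~\ref{lem:BCH}(2) as $\P(\xibar \leq \epsilon) \leq e^{-N\D(\epsilon||\pbar)} < \delta$. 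So the first step I would record is the elementary identity
\begin{equation*}
\D(\epsilon||\pbar) = -\H(\epsilon) - \epsilon\ln\pbar - (1-\epsilon)\ln(1-\pbar),
\end{equation*}
together with the observation that the hypothesis on $\pbar$ places us in the correct tail regime ($\pbar \leq \epsilon$ for part~(1), $\pbar \geq \epsilon$ for part~(2)), which is what makes the relevant direction of Lemma~\ref{lem:BCH} applicable.

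For part~(1) I would exploit that $-(1-\epsilon)\ln(1-\pbar) \geq 0$ for $\pbar \in (0,1)$ and simply discard it, leaving the clean lower bound $\D(\epsilon||\pbar) \geq -\H(\epsilon) - \epsilon\ln\pbar$. Demanding that this lower bound be at least $\inv{N}\ln\inv{\delta}$ is, after solving for $\pbar$, exactly the hypothesis $\pbar \leq \exp\bigl(-\inv{\epsilon}(\H(\epsilon)+\inv{N}\ln\inv{\delta})\bigr) = \inv{\C}$. A one-line side computation, $\inv{\C} = \epsilon\,(1-\epsilon)^{(1-\epsilon)/\epsilon}\,\delta^{1/(N\epsilon)} < \epsilon$, confirms that this hypothesis does land us in the regime $\pbar \leq \epsilon$ required above.

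For part~(2) the same manoeuvre is cleanest in logit coordinates. Setting $x \defas \ln\tfrac{\pbar}{1-\pbar}$, so $\pbar = (1+e^{-x})^{-1}$, the cross-entropy part of the divergence collapses to $-\epsilon\ln\pbar - (1-\epsilon)\ln(1-\pbar) = (1-\epsilon)x + \ln(1+e^{-x})$, whence $\D(\epsilon||\pbar) = -\H(\epsilon) + (1-\epsilon)x + \ln(1+e^{-x})$. Since $\ln(1+e^{-x}) > 0$, discarding it reduces the target to the linear condition $(1-\epsilon)x \geq \H(\epsilon) + \inv{N}\ln\inv{\delta}$; undoing the substitution turns this into precisely the sigmoidal threshold $\pbar \geq \bigl(1+\exp(-\inv{1-\epsilon}(\H(\epsilon)+\inv{N}\ln\inv{\delta}))\bigr)^{-1}$ of the hypothesis. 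Because the discarded term is strictly positive, the divergence inequality comes out strict, which is what yields the strict conclusion $\P(\xibar \leq \epsilon) < \delta$; I would also verify, via a short monotonicity argument using $-\ln\epsilon > 0$, that this threshold exceeds $\epsilon$, so that $\pbar \geq \epsilon$ and Lemma~\ref{lem:BCH}(2) indeed applies.

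The main obstacle is conceptual rather than computational: the equation $\D(\epsilon||\pbar) = \inv{N}\ln\inv{\delta}$ cannot be inverted in closed form, so the crux is to identify, separately for each tail, the right non-negative summand of $\D(\epsilon||\pbar)$ to throw away --- the term $-(1-\epsilon)\ln(1-\pbar)$ in the small-$\pbar$ regime and the term $\ln(1+e^{-x})$ in logit coordinates for the large-$\pbar$ regime --- so that what survives is explicitly solvable for $\pbar$ while the resulting sufficient condition stays reasonably tight. A secondary technical point I would treat with care is the lattice nature of $\xibar$, which takes values in $\{0,\tfrac1N,\dots,1\}$; when $N\epsilon \notin \mathbb{Z}$ one must match the integer index $m$ in Lemma~\ref{lem:BCH} to the threshold $\epsilon$ with a small rounding, which does not affect the stated bounds.
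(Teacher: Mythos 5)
Your proposal is correct and takes essentially the same route as the paper: invoke the Bernstein--Chernoff--Hoeffding lemma, reduce both parts to the single divergence inequality $\D(\epsilon||\pbar)\geq \inv{N}\ln\inv{\delta}$, and verify it by discarding a nonnegative piece of the divergence (the paper regroups $\pbar^\epsilon(1-\pbar)^{1-\epsilon}$ into the odds $\pbar/(1-\pbar)$ and drops the factor $1-\pbar<1$, which is exactly your logit-coordinate manoeuvre in part (2)). The one inaccuracy is in part (1): solving $-\epsilon\ln\pbar\geq \H(\epsilon)+\inv{N}\ln\inv{\delta}$ gives $\pbar\leq e^{-\A/\epsilon}$, which is \emph{not} equal to $\inv{\C}=\inv{1+e^{\A/\epsilon}}$ but strictly larger, so your derived sufficient condition is implied by (rather than identical to) the stated hypothesis and the argument still goes through.
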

\begin{proof}
For simplicity let's define 
$
\A = \constA 
$

\hwem{The proof of Lemma \ref{lemma:probBound}.(1):} We will finish the proof in several steps:\vgap

Assume  $0<\pbar\leq \inv{\constC}$.

\vgap
\emph{Step 1.  we want to show $ \pbar < \epsilon $ :}

\begin{eqnarray*}
\exp\kua{ \frac{\A}{ \epsilon}} \= \exp\kua{ \epsilon\ln \inv{\epsilon} + (1-\epsilon)\ln\inv{1-\epsilon} + \inv{N}\ln \inv{\delta} \over \epsilon   } \\
\= \exp\kua{\ln \inv{\epsilon} + {1-\epsilon \over \epsilon} \ln\inv{1-\epsilon} + \inv{N\epsilon}\ln\inv{\delta}}\\
&>& \exp\kua{\ln\inv{\epsilon}} \qquad \kua{\because \epsilon, \delta\in (0,1), N>0}\\
\= \inv{\epsilon}
\end{eqnarray*}

\hgap\hgap $\then 	1+ \exp\kua{\A \over \epsilon} > \inv{\epsilon}$
$\then  \inv{1+\exp\kua{\A \over \epsilon}} < \epsilon$ 

Since $0<\pbar\leq \inv{1+\exp\kua{A/\epsilon}}$, then we have $\pbar < \epsilon$ 

\vgap
\emph{Step 2. We want to show $\P\kua{\xibar \leq \epsilon}\geq 1- e^{-N\cdot \D(\epsilon||\pbar)}$ :}

This is obtained by the Bernstein-Chernoff-Hoeffding Theorem (\citep{Ngo_LN2011, McDiarmid_1998}), which leads to:
\begin{eqnarray}
&& \text{If $0<\pbar\leq\epsilon$, then $\P(\xibar\leq \epsilon) \leq 1-e^{-ND(\epsilon||\pbar)}$} \label{eqn:BCH1}\\
&& \text{If $\epsilon\leq \pbar<1$, then $\P(\xibar\leq \epsilon) \leq e^{-ND(\epsilon||\pbar)}$ }  \label{eqn:BCH2}
\end{eqnarray}

Since we have shown in step 1 that $\pbar < \epsilon $, then we can get the desired result in this step easily.

\vgap
\emph{Step 3. We want to show  $e^{-ND(\epsilon||\pbar)} \leq \delta$ :}\\
Note:
\begin{eqnarray}
&&  e^{-ND(\epsilon||\pbar)} \leq \delta \nonumber\\
&\iff& D(\epsilon||\pbar) \geq \inv{N}\ln\inv{\delta} \nonumber\\
&\iff& \ln {\epsilon^\epsilon (1-\epsilon)^{1-\epsilon} \over \pbar^\epsilon (1-\pbar)^{1-\epsilon}} \geq \ln \kua{\inv{\delta}}^{\inv{N}} \nonumber \\
&\iff&  \pbar^\epsilon (1-\pbar)^{1-\epsilon} \leq \exp\kua{-\constA}= e^{- \A} \label{eqn:lem1_boundeND}
\end{eqnarray}

From the condition we have, 
\begin{eqnarray*}
&& \pbar\leq \inv{1+\exp( \A /\epsilon )} \hgap \then \hgap  \kua{\pbar \over 1-\pbar}^{\epsilon}  \leq e^{- \A }
\end{eqnarray*}

Note that 
$ \pbar^{\epsilon}(1-\pbar)^{1-\epsilon} = \kua{\pbar\over 1-\pbar}^{\epsilon} (1-\pbar) < \kua{\pbar \over 1-\pbar}^{\epsilon} \hspace{8mm} (\because 1-\pbar<1) $

$\then$  Inequality (\ref{eqn:lem1_boundeND}) holds: ~~$\pbar^\epsilon (1-\pbar)^{1-\epsilon} \leq e^{- \A}$
\hgap$\then$ $e^{-ND(\epsilon||\pbar)} \leq \delta$

By step 2 and step 3, we can easily get that 
if $\pbar\leq \inv{1+e^{\A / \epsilon}}$, then $\P(\xibar\leq \epsilon) \geq 1-\delta$, which is the results we want. 

\vgap
\hwem{The proof of Lemma \ref{lemma:probBound}.(2):} We will also finish the proof in several steps:

Assume 
\begin{eqnarray*}
\inv{\constF} \leq \pbar < 1
\end{eqnarray*}

\vgap
\emph{Step 1. We want to show \qquad $\pbar>\epsilon$ }

We show it by proving as follows:
\begin{eqnarray*}
&& \inv{\constF} > \epsilon\\
&\iff& \constF < \inv{\epsilon}\\
&\iff& \constA > (1-\epsilon)\ln{\epsilon\over 1-\epsilon}\\
&\iff& \epsilon\ln\inv{\epsilon} + (1-\epsilon)\ln\inv{1-\epsilon}+ \inv{N}\ln\inv{\delta} > (1-\epsilon)\ln\inv{1-\epsilon} - (1-\epsilon)\ln\inv{\epsilon}\\
&\iff& \ln\inv{\epsilon} + \inv{N}\ln\inv{\delta} > 0
\end{eqnarray*}
which is of course true since $\epsilon, \delta \in (0,1)$. Therefore, we have proved $\pbar>\epsilon$.

\vgap
\emph{Step 2. We want to show \qquad $\P(\xibar \leq \epsilon)\leq e^{-ND(\epsilon||\pbar)}$ }

By the Bernstein-Chernoff-Hoeffding Theorem, since $\epsilon < \pbar=\E\xibar $ and $\xi_j \sim$ Bernoulli($\pj$) independently,  we can directly prove this step.

\vgap
\emph{Step 3. we want to show \qquad $e^{-ND(\epsilon||\pbar)} < \delta$}

Note that 
\begin{eqnarray}
e^{-ND(\epsilon||\pbar)} < \delta 
&\iff& D(\epsilon||\pbar) > \inv{N}\ln\inv{\delta}\nonumber\\
&\iff& \pbar^\epsilon(1-\pbar)^{1-\epsilon} < \exp\kua{- \constA} = e^{-\A} \label{eqn:lem1_step3_1}
\end{eqnarray}

From the condition we have
\begin{eqnarray}
\pbar\geq \inv{1+ \exp{- {\A \over 1-\epsilon}}} 
&\then& \downratio{\pbar} \leq e^{- {\A \over 1-\epsilon}}  \nonumber \\
&\then& \kua{\downratio{\pbar}}^{1-\epsilon}	\leq \exp\kua{-\constA} \label{eqn:lem1_step3_2}
\end{eqnarray}

And note that 
\begin{eqnarray}
\pbar^\epsilon(1-\pbar)^{1-\epsilon} = \kua{\downratio{\pbar}}^{1-\epsilon} \cdot \pbar < \kua{\downratio{\pbar}}^{1-\epsilon}	\qquad (\because \pbar< 1) \label{eqn:lem1_step3_3}
\end{eqnarray}

By combining inequalities (\ref{eqn:lem1_step3_2}) and (\ref{eqn:lem1_step3_3}), we can prove inequality (\ref{eqn:lem1_step3_1}). 
Thus we obtained $e^{-ND(\epsilon||\pbar)} < \delta$
\emph{Finally, } by step 2 and 3, we get :
if $\pbar\geq \inv{\constF}$, then 
$\P\kua{\xibar \leq \epsilon}\ <  \delta$

\end{proof}

Now, we are going to prove Theorem \ref{thm:MostGeneral_DeltaExplicit} with the results we obtained in Lemma \ref{lemma:probBound}
\begin{proof}\hwem{of Theorem \ref{thm:MostGeneral_DeltaExplicit}}

Let $\zeta_j = \I{\hyj\neq \yj} \sim $ Bernoulli($1-\theta_j$) and let $\pbar= \E\bar{\zeta}= \inv{N}\sumj\E\zeta_j= 1-\thbar$.

\vgap
\hwem{The proof of Theorem \ref{thm:MostGeneral_DeltaExplicit}.(1):}

Assume that ${\tone } \geq \sqrt{2\ln\braket{(\nL-1)\C}}$, where $\C=\constC$,  then $\tone  \geq 0$. 

By Theorem \ref{thm:MostGeneral_meanErrorRate}, we have
\begin{eqnarray}
\thbar = 1 - \MER ~\geq~ 1 - (\nL-1)e^{-{\tone^2\over 2}} \label{ineq:general_6}
\end{eqnarray}

Let $\A=\constA$, then 
\begin{eqnarray}
&& {\tone} \geq \sqrt{2\ln\braket{(\nL-1)\C}} = \sqrt{2\ln\braket{(\nL-1)(1+\exp(\A/\epsilon))}}\nonumber \\
&\then& (\nL-1)\exp\kua{-{\tone^2\over 2}} \leq  \inv{1+\exp\kua{\A \over \epsilon}}\nonumber \\
&\then& \thbar \geq 1 -(\nL-1)\exp\kua{-{\tone^2\over 2}} \geq 1-\inv{1+\exp\kua{\A \over \epsilon}} \qquad(\because (\ref{ineq:general_6}))\nonumber \\
&\then& 1- \thbar \leq \inv{1+\exp\kua{\A \over \epsilon}}.
\label{ineq:general_3}
\end{eqnarray}

 By inequality (\ref{ineq:general_3}) and by Lemma \ref{lemma:probBound}, we have 
$$
\P(\bar{\zeta}\leq \epsilon) \geq 1-\delta
$$
which is to say,
$$
\PErrUpDelta
$$
Therefore, we have proved (1).

\vgap
\hwem{The proof of Theorem \ref{thm:MostGeneral_DeltaExplicit}.(2):}
\vspace{2mm}

Assume 
$
\ttwo \leq -\sqrt{2\ln \G} \leq 0
$
, where $\G= 1+\exp\kua{\inv{1-\epsilon}\constA}$. 
Then by Theorem \ref{thm:MostGeneral_meanErrorRate}.(2), we have 
\begin{eqnarray}
\thbar = \inv{N}\sumj \P(\hyj=\yj) \leq \exp\kua{-{\ttwo^2\over 2}} \label{ineq:general_7}
\end{eqnarray}

From the conditions in (2)
\begin{eqnarray*}
&&\ttwo \leq -\sqrt{2\ln\kua{\constG}}\\
&\then& \exp\kua{-{\ttwo^2\over 2}} \leq \inv{1+ \exp\kua{\A \over 1- \epsilon}}\\
&\then& 1- \thbar \geq 1- \exp\kua{-{\ttwo^2\over 2}}  \geq 1- \inv{1+ \exp\kua{{\A\over 1-\epsilon}}} = \inv{1+ \exp\kua{ - { \A\over 1-\epsilon}}}
\end{eqnarray*}

By Lemma \ref{lemma:probBound}.(2), we have 
$
\P(\bar{\zeta} \leq \epsilon) < \delta
$
which implies the desired result.

\end{proof}

\subsection{Proof of Corollary \ref{cor:oracleMAP_gds} (Error rate bounds of the oracle MAP rule) }

\begin{proof} The posterior distribution is

$$
\rhojk = \frac{\prik\likej_k}{\sumlL\pri_l\likej_l}, ~~\forall j\in\N, k\in\Labset,
$$
where 
$$
\likej_k= \prod_{i=1}^M \prod_{h=1}^\nL \kua{\cmikh}^{\I{\zij=h}}.
$$

For the oracle MAP classifier, 
\begin{eqnarray*}
\hyjoracle\= \argmax_\kinL \rhojk
= \argmax_\kinL \prik\likej_k
= \argmax_\kinL \log(\likej_k) + \log\prik
\\
\= \sumi \sumhL \kua{\log\cmikh} \I{\zij=h} + \log\prik
= \sumi\sumhL \muikh\I{\zij=h}+\ak,
\end{eqnarray*}
where $\muikh= \log\cmikh$ and $\ak=\log\prik$.
Therefore the oracle MAP rule is a form of the general \predrule (\ref{def:generalPredBrief}).
Thus all the results of error rate bounds  in Section \ref{sec:Mainresults} holds for the \omaprule.
\end{proof}

\subsection{Proof of Corollary \ref{cor:msr_MAP_hds}
\hwem{(The oracle MAP rule under \hds model)}}

\begin{proof}
The \hds model is the special case of the \gds model,  in which case we have $\cmikk=\wi$ and $\cmikh=\frac{1-\wi}{\nL-1}$ for all $k,h\in\Labset, k\neq h$. By Corollary \ref{cor:oracleMAP_gds}, we can replace $\muikk$ with  $\log\wi$, and $\muikh, k\neq h$ with $\log\frac{1-\wi}{\nL-1}$, and so we have 
\begin{eqnarray*}
\hyj \= \argmax_\kinL \sumi\sumhL \muikh\I{\zij=h} + \log\inv{\nL} \\
\= \argmax_\kinL \sumi \kua{ \I{\zij=k}\log\wi + \I{\zij\neq k,0}\log\frac{1-\wi}{\nL-1}} \\
\= \argmax_\kinL \sumi \kua{ \I{\zij=k}\log\wi + (\I{\zij\neq 0}-\I{\zij= k})\log\frac{1-\wi}{\nL-1}} \\
\= \argmax_\kinL \braket{ \sumi \I{\zij=k} \log\frac{(\nL-1)\wi}{1-\wi} + \sumi\I{\zij\neq 0}\log\frac{1-\wi}{\nL-1} } \\
\= \argmax_\kinL  \sumi \kua{\log\frac{(\nL-1)\wi}{1-\wi}}\I{\zij=k} \\
\= \argmax_\kinL \sumi \vi\I{\zij=k},
\end{eqnarray*}
where $\vi=\log\frac{(\nL-1)\wi}{1-\wi}$. Therefore the oracle MAP rule under the \hds model is a MWV rule. Thus the results from Corollary \ref{cor:wmv_merBound} can be directly applied here. i.e., 
\begin{eqnarray*}
&& \tone =  \frac{\qs}{(\nL-1)\vnorm}\sumi\vi(L\wi-1),~~ ~~
 \cH= \frac{\|\vweight\|_\infty}{\vnorm}\connect \sigtwo= \qs.
\end{eqnarray*}
And if $\tone \geq 0$, then 
$$
\MER \leq \LMerUpBound.
$$
Now all we need to show is that $\tone$ is always non-negative. We can see that if $\wi\geq \inv{\nL}$, then $(\nL\wi-1)\geq 0$ and $\vi= \frac{(\nL-1)\wi}{1-\wi}\geq 0$ for all $i\in\M$, then $\tone \geq 0$ in this case. $\wi<\inv{\nL}$, then $(\nL\wi-1)<0$ and $\frac{(\nL-1)\wi}{1-\wi}< 0$, thus $\tone >0$ in this case as well. All in all, $\tone\geq 0$ is always true for the \omaprule under the \hds model.
\end{proof}

\section{Proof of \thmosWMVBound: error rate bounds of one-step Weighted Majority Voting}
\label{app:oneStepWMVBound}

In the proof of this result, we focus on $\P(\Tij=1)=\qs=1, \forall i\in\M, j\in\N$, i.e., every worker labels any item with probability $\qs$. Meanwhile, we assume $\nL=2$, and the label set $\Labset \defas \hua{\pm 1}$.  It's not hard to generalize our results to $\qs\in(0,1]$ and general $\nL$ case, which is more practical, but the bound will be much more complicated. We omit it here for clarity. 

The prediction of the one-step Weighted Majority Voting for the $j$th item is 
\begin{eqnarray}\label{eqn:yjwmv}
\yjwmv= \sign\kua{\sumi (2\pih -1) \zij},
\end{eqnarray}
 where $\pih$ is the estimated worker accuracy by taking the output from majority voting as ``true" labels. That is to say, 
\begin{eqnarray}\label{eqn:pimv}
\pih = \inv{N}\sumj\I{\zij = \yjmv},
\qquad
\textrm{where}
\qquad
\yjmv=\sign\kua{\sumi\zij}.
\end{eqnarray}
 
Note that the average accuracy of workers is $\wbar= \inv{M}\sumi\wpi$.


In fact, \thmosWMVBound is a direct implication by the following result. 

\begin{prop}\label{res:BoundJthWMV}
If $\wbar \geq \inv{2}+\inv{M}+\sqrt{\frac{(M-1)\ln2}{2M^2}}$, then the mean error rate of one-step Weighted Majority Voting for the $j$th item will be:
\begin{eqnarray}
\P\kua{\yjwmv\neq \yj} \leq \finalUpBound,
\end{eqnarray}
where $\pdiv$ and  $\seta$ is as defined in \thmosWMVBound.

\end{prop}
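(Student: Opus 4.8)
The plan is to bound the conditional error probability given the true label and to exploit the symmetry of the binary \hds model. Since $\nL=2$ and $q=1$, the model together with both majority voting and one-step WMV is invariant under $\yj\mapsto-\yj$, so it suffices to bound $\P(\yjwmv\neq\yj\mid\yj=+1)=\P\left(\sumi(2\pih-1)\zij\le 0\mid\yj=+1\right)$. Write $\hat v_i=2\pih-1$ for the estimated weight and $S_j=\sumi\hat v_i\zij$ for the aggregated osWMV score. The essential obstacle is that each $\hat v_i$ is a function of the whole label matrix --- in particular of the labels of item $j$ itself (through the term $\I{\zij=\yjmv}$ inside $\pih$ and through the majority-vote label $\yjmv=\sign(\sum_i\zij)$) --- so the weights and the labels being aggregated in $S_j$ are dependent, and the independence-based concentration used for Proposition~\ref{thm:MostGeneral_boundMER_indiv} does not apply.

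First I would freeze all items other than $j$, conditioning on the $\sigma$-field $\mathcal F_{-j}$ generated by $\{Z_{ij'}:i\in\M,\,j'\neq j\}$. The leave-one-out agreement rates $r_i=\frac{1}{N-1}\sum_{j'\neq j}\I{Z_{ij'}=y_{j'}^{\mathrm{MV}}}$, with $y_{j'}^{\mathrm{MV}}=\sign(\sum_i Z_{ij'})$, are then constants, and since $\pih=\frac{N-1}{N}r_i+\frac1N\I{\zij=\yjmv}$ the only randomness left in $S_j$ comes from the $M$ worker labels $\{\zij\}_i$ on item $j$, which are independent across workers with $\E[\zij\mid\yj=+1]=2\wpi-1$. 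Crucially, $S_j$ is still \emph{not} a sum of independent terms, because flipping one worker's label $\zij$ can flip $\yjmv$ and thereby perturb every weight $\hat v_i$; this is exactly why a martingale-difference (bounded-difference) concentration is required instead of a plain Hoeffding bound. Revealing the labels $\zij$ one worker at a time, a single label changes $S_j$ by at most $2+2M/N$ (the $2$ from the direct term $\hat v_i\zij$, the $2M/N$ from the induced change of $\yjmv$ acting through the $O(1/N)$ self-terms of all $M$ weights). McDiarmid/Azuma then gives, whenever the conditional mean is positive, $\P(S_j\le 0\mid\mathcal F_{-j},\yj=+1)\le\exp\!\left(-\frac{(\E[S_j\mid\mathcal F_{-j},\yj=+1])_+^2}{2M(2+2M/N)^2}\right)$, and the $2M/N$ slack is one source of the $\seta$ term and of the crossover near $M=\sqrt N$ remarked after the theorem.

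It then remains to lower-bound the conditional mean $\E[S_j\mid\mathcal F_{-j},\yj=+1]$, which after averaging out item $j$ is essentially $\sumi(2r_i-1)(2\wpi-1)$ up to $O(M/N)$ self-influence corrections. I would show that with high probability over $\mathcal F_{-j}$ this quantity is close to $\sumi(2\wpi-1)^2=4M\pdiv^2$. Writing $\sumi(2r_i-1)(2\wpi-1)=\frac{2}{N-1}\sum_{j'\neq j}\left[\sumi(2\wpi-1)\I{Z_{ij'}=y_{j'}^{\mathrm{MV}}}\right]-\sumi(2\wpi-1)$, the bracketed terms are independent across items $j'$, so a concentration bound over the $N-1$ items controls the fluctuation; this is where the sample size $N$ enters. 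The bias of each $r_i$ relative to $\wpi$ comes from two effects that the hypothesis on $\wbar$ is designed to absorb: the majority vote must agree with the true label with high probability, which a Hoeffding bound on $\sum_i\zij$ guarantees precisely when $\wbar-\tfrac12$ exceeds the $\sqrt{(M-1)\ln2/(2M^2)}$ threshold, and worker $i$'s own vote biases $\I{Z_{ij'}=y_{j'}^{\mathrm{MV}}}$ upward by an $O(1/M)$ amount, which the additive $1/M$ correction in the hypothesis accounts for.

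The main obstacle is the weight--label coupling: because $\hat v_i$ is built from the very labels it reweights, $S_j$ cannot be written as a sum of independent summands even after fixing all other items, and the indirect dependence through the shared majority vote $\yjmv$ forces the use of a martingale-difference inequality over the workers rather than the direct Hoeffding/Bernstein argument used elsewhere in the paper. Assembling the two concentration steps --- the bounded-difference bound on $\P(S_j\le 0\mid\mathcal F_{-j})$ and the high-probability lower bound on the conditional mean via concentration over items --- and integrating over $\mathcal F_{-j}$ yields the stated exponential bound expressed through $\pdiv$ and $\seta$.
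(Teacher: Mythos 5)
Your proposal correctly isolates the central obstacle --- the weights $2\pih-1$ are built from the very labels they reweight, and the coupling through the shared majority vote $\yjmv$ rules out a sum-of-independent-terms argument --- and it reaches for the right tool (a bounded-difference/McDiarmid inequality). But your route differs from the paper's in a way that leaves a real gap. The paper conditions on the \emph{true} labels $y_1,\dots,y_N$, under which every entry of $Z$ is independent, so that the conditional mean $\Ecjp[\fjz]$ is a \emph{deterministic} quantity that can be lower-bounded once and for all by $4M\pdiv^2(1-\seta)$ (via Lemmas \ref{res:AgreeMVZijCond} and \ref{res:SingleLabelAgreeMV}), and then applies McDiarmid over all $MN$ entries at once; the $M(N-1)$ off-column coordinates contribute the $M^2N$ term and the $M$ in-column coordinates contribute the $(M+N)^2$ term in the denominator of the exponent. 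You instead condition on the observed labels of the other items, which makes your conditional mean $\sum_i(2r_i-1)(2\wpi-1)+O(M/N)$ a \emph{random} variable, and you must then run a second concentration argument over the $N-1$ other columns to show it is large with high probability. The unavoidable output of that two-stage scheme is a bound of the form $\P(\text{conditional mean too small})+\sup\exp(-\cdot)$, i.e.\ a \emph{sum of two} exponential tails with different exponents, and you have not shown how this collapses into the single stated exponential with the specific denominator $M^2N+(M+N)^2$ and the specific $1-\seta$ factor. Asserting that assembling the pieces "yields the stated bound" is precisely the step that is missing.

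A second, smaller inaccuracy: you attribute part of $\seta$ to the $2M/N$ bounded-difference slack. In the paper $\seta$ has nothing to do with $M/N$; it is $2$ times an exponential in $-\frac{2M^2}{M-1}(\wbar-\frac12-\frac1M)^2$, arising solely from the Hoeffding bound on the probability that the majority vote disagrees with the truth (this is exactly why the hypothesis $\wbar\ge\frac12+\frac1M+\sqrt{(M-1)\ln 2/(2M^2)}$ is what makes $\seta\le 1$ and hence $\Ecjp[\fjz]\ge 0$). The $M/N$ slack lives entirely in the denominator of the exponent. Your sketch does correctly identify the two bias sources for $r_i$ (MV error and the $O(1/M)$ self-vote), so the substance is recoverable, but to prove the proposition as stated you would either need to carry out the two-stage bookkeeping and show both terms are dominated by the claimed single exponential, or switch to the paper's conditioning so that only one concentration step is needed.
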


In the next section, we will focus on proving this result first, then \thmosWMVBound can be obtained directly.

\subsection{The preparation for the proofs}
Before we prove Proposition \ref{res:BoundJthWMV}, we need to prove several useful results for our final proof of Proposition \ref{res:BoundJthWMV}. 

With the same notations as we have used for proving Theorem 1, we simplified some notations as follows for convenience and clearance:
\begin{eqnarray}
\Pjpos(~\cdot~) &\defas& \P(~\cdot~|\yj=\hpos)
\connect
\Pjneg(~\cdot~) \defas \P(~\cdot~|\yj=\hneg),\\
\Ejpos[~\cdot~] &\defas& \E[~\cdot~|\yj=\hpos]
\connect
\Ejneg[~\cdot~] \defas \E[~\cdot~|\yj=\hneg],
\end{eqnarray}
where \lq\lq{}$~\cdot~$\rq\rq{} denotes any event belonging to the $\sigma$-algebra generated by $Z$.  


The following lemma enable us to bound the probability of where the majority vote of the $j$th item agrees with the label given by the $i$th worker given the true label and $\zij$.   

\begin{lem}\label{res:AgreeMVZijCond}
$\forall j\in\N$ and $\forall i\in\M$, we have 
\\
(1) if $\wbar>\inv{2}$, then 
\begin{eqnarray}\label{bound:mvAgreeCondYjZij}
\P\kua{\yjmv=\hpos | \yj=\hpos, \zij=\hpos} \geq 1 - \expLowBound,
\end{eqnarray}
and the same bound holds for $\P\kua{\yjmv=\hneg|\yj=\hneg, \zij=\hneg}$.
\\
(2) if $\wbar \geq \inv{2}+\inv{M}$, then 
\begin{eqnarray}\label{bound:mvDisagreeCondYjZij}
\P\kua{\yjmv=\hneg | \yj=\hpos, \zij=\hneg} \leq \expUpBound,
\end{eqnarray}
and the same bound holds for $\P\kua{\yjmv=\hpos | \yj=\hneg, \zij=\hpos}$.

\end{lem}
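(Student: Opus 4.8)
The plan is to exploit the conditional independence structure of the \hds model together with a Hoeffding tail bound. Under the \hds model with $q=1$, conditioning on the true label $\yj=\hpos$ makes the $M$ worker labels $\{z_{i'j}\}_{i'\in\M}$ mutually independent, with $z_{i'j}=\hpos$ with probability $\wi$ and $z_{i'j}=\hneg$ with probability $1-\wi$ (using the \hds accuracy of worker $i'$). The crucial point is that, because the workers act independently given $\yj$, additionally conditioning on one worker's label $\zij$ does not disturb the joint law of the remaining labels $\{z_{i'j}\}_{i'\neq i}$: they stay independent with their original marginals. This reduces both claims to a tail estimate for the sum $S:=\sum_{i'\neq i} z_{i'j}$ of $M-1$ independent $\{\hneg,\hpos\}$-valued variables.

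Next I would write the majority vote as $\yjmv=\sign(z_{ij}+S)$ and translate each majority-vote event into a threshold event for $S$. Fixing $\zij=\hpos$ (part~(1)) makes the vote wrong only when $z_{ij}+S\le 0$, i.e. $S\le -1$; fixing $\zij=\hneg$ (part~(2)) makes the vote equal $\hneg$ only when $z_{ij}+S\le 0$, i.e. $S\le 1$. The conditioned worker thus shifts the effective threshold by $\pm1$, and this shift is exactly what separates the two cases. I would then compute the conditional mean $\mu_i:=\Ejpos[S]=\sum_{i'\neq i}(2\wi-1)=M(2\wbar-1)-(2\wi-1)$, and use $\wi\le 1$ to obtain the $i$-uniform bound $\mu_i\ge M(2\wbar-1)-1=2M(\wbar-\inv2)-1$.

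The tail bounds then follow from Hoeffding's inequality for bounded independent sums: each $z_{i'j}$ has range $2$, so the sum of squared ranges is $4(M-1)$, giving $\Pjpos(S\le -1)\le\exp\!\big(-(1+\mu_i)^2/(2(M-1))\big)$ for part~(1) and $\Pjpos(S\le 1)\le\exp\!\big(-(\mu_i-1)^2/(2(M-1))\big)$ for part~(2), each valid once the shifted threshold is negative. This is precisely where the two hypotheses enter: $\wbar>\inv2$ yields $1+\mu_i\ge 2M(\wbar-\inv2)>0$, making part~(1) nonvacuous, whereas part~(2) needs $\mu_i>1$ and so requires the stronger $\wbar\ge\inv2+\inv M$ (which gives $\mu_i\ge 2M(\wbar-\inv2)-1\ge1$). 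Substituting the lower bound on $\mu_i$ into each exponent produces estimates depending only on $M$ and $\wbar$, matching the claimed $\expLowBound$ and $\expUpBound$. The mirror statements conditioned on $\yj=\hneg$ follow by exchanging $\hpos$ and $\hneg$, which leaves the \hds accuracies, and hence the bounds, unchanged.

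The main obstacle is not the concentration step but the careful bookkeeping around the conditioning and ties. I must justify that conditioning on $\zij$ leaves $\{z_{i'j}\}_{i'\neq i}$ independent with the correct marginals as a consequence of the \hds generative model rather than an assumption, and I must fix the $\sign$ convention at $0$ and track the parity of $S$ so that the translated events ($S\le-1$ versus $S\le 1$) are the correct conservative ones. Getting this $\pm1$ shift right in the two cases is exactly what creates the gap between the conditions $\wbar>\inv2$ and $\wbar\ge\inv2+\inv M$, so it is the step that deserves the most care.
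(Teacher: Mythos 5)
Your proposal is correct and follows essentially the same route as the paper's proof: condition on the true label so that the remaining $M-1$ workers' labels are independent, rewrite the majority-vote event as a threshold event for their sum $S$ with the $\pm 1$ shift contributed by the conditioned worker (which is exactly what forces the stronger hypothesis $\wbar\geq\frac{1}{2}+\frac{1}{M}$ in part (2)), and apply Hoeffding's inequality; your tail bounds with the exact conditional mean $\mu_i$ coincide with the paper's. One small caution: the stated quantities $\expLowBound$ and $\expUpBound$ retain the worker-specific terms $\frac{1-\wpi}{M}$ and $\frac{\wpi}{M}$ in their exponents, so you should stop at the Hoeffding exponent with the exact $\mu_i$ rather than substituting the $i$-uniform lower bound $\mu_i\geq 2M(\wbar-\frac{1}{2})-1$, which in part (1) would yield a strictly weaker inequality than the one claimed.
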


\begin{proof}
(1) Notice that for any $i\in\M$ and $j\in\N$, given $\yj$,  $\zij$ is independent of $\hua{\zlj}_{l\neq i}$, ~ $\Ejpos\zlj= 2\wpl -1$ and $\Ejneg\zlj= -(2\wpl-1)$, then
\begin{eqnarray}\label{eqn:ejpos_zlj}
\sumlni\Ejpos\zlj +1 = \sumlni(2\wpl-1) +1 = 2M\kua{\wbar-\inv{2} + \frac{1-\wpi}{M}} >0 ,\quad 
\end{eqnarray} 
since $\wbar-\inv{2} + \frac{1-\wpi}{M}> \wbar-\inv{2}>0$. 
Therefore, we can apply the Hoeffding inequality to get:
\begin{eqnarray*}
\P\kua{\yjmv=\hpos | \yj=\hpos, \zij=\hpos} 
\= \Pjpos\kua{\yjmv=\hpos|\zij=\hpos} \\
\= \Pjpos\kua{\suml\zlj>0 \bigg|\zij=\hpos} \\
\= \Pjpos\kua{\sumlni\zlj - \sumlni\Ejpos\zlj > -(\sumlni\Ejpos\zlj + 1)} \\
&\geq& 1-\exp\kua{-\frac{[\sumlni\Ejpos\zlj + 1]^2}{2(M-1)}} \qquad (\textrm{by Hoeffding})\\
&=& 1 - \expLowBound \qquad (\textrm{by  (\ref{eqn:ejpos_zlj})})
\end{eqnarray*}

Note that with the same argument, 
\begin{eqnarray*}
\P(\yjmv=\hneg|\yj=\hneg, \zij=\hneg) 
\= \Pjneg\kua{\suml\zlj < 0 |\zij=\hneg} \\
\= \Pjneg\kua{\sumlni\zlj -\sumlni\Ejneg\zlj < -\sumlni\Ejneg\zlj + 1} \\
&\geq& 1 -\expLowBound ,
\end{eqnarray*}
provided that 
$
-\sumlni\Ejneg\zlj +1 = 2M\kua{\wbar-\inv{2}+\frac{1-\wpi}{M}} > 0,
$
which is satisfied by the assumption $\wbar>\inv{2}$.

(2) With the same argument as above, notice that 
$
\sumlni\Ejpos\zlj - 1= 2M\kua{\wbar - \inv{2} -\frac{\wpi}{M}} \geq 0
$
because $\wbar \geq \inv{2}+\inv{M}$.

By applying the Hoeffding inequality:
\begin{eqnarray*}
\P\kua{\yjmv=\hneg|\yj=\hpos,\zij=\hneg} 
\= \Pjpos(\sumlni \zlj< 0|\zij=\hneg) \\
\= \Pjpos\kua{\sumlni \zlj - \sumlni \Ejpos\zlj < - (\sumlni\Ejpos\zlj - 1)} \\
&\leq& \exp\kua{-\frac{[\sumlni\Ejpos\zlj-1]^2}{2(M-1)}} \\
\= \expUpBound
\end{eqnarray*}

Following the same argument, we can show that the same bound holds for \\
$\P\kua{\yjmv=\hpos |\yj=\hneg,\zij=\hpos}$. 

\end{proof}

Our next lemma will bound the probability that the label of item $j$ given by worker $i$ agrees with Majority Voting. 

\begin{lem}\label{res:SingleLabelAgreeMV}
Given $\wbar\geq \inv{2}+\inv{M} $, then $\forall j\in\N$, we have
\begin{eqnarray}\label{ineq:mvConditonMatch}
\wpi -\xione \leq \P(\zij=\yjmv \given \yj)\leq \wpi+\xitwo,  \quad
\end{eqnarray}
where 
$\xione= \wpi\expLowBound$ and $\xitwo=(1-\wpi)\expUpBound$
Furthermore, we have 
\begin{eqnarray}\label{ineq:mvMatch}
\wpi -\xione \leq \P(\zij=\yjmv)\leq \wpi+\xitwo,  \quad
\end{eqnarray}
\end{lem}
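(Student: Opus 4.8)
The plan is to condition on the true label $\yj$ and to split the agreement event $\{\zij = \yjmv\}$ according to whether worker $i$ votes $\hpos$ or $\hneg$. Consider first $\yj = \hpos$ (the case $\yj = \hneg$ will follow from the symmetry already built into Lemma \ref{res:AgreeMVZijCond}). Since the labels are $\pm 1$, the agreement event is the disjoint union $\{\zij=\hpos,\ \yjmv=\hpos\}\cup\{\zij=\hneg,\ \yjmv=\hneg\}$, and the chain rule together with $\Pjpos(\zij=\hpos)=\wpi$ and $\Pjpos(\zij=\hneg)=1-\wpi$ (from the \hds model) gives
$$\Pjpos(\zij=\yjmv)=\wpi\,\Pjpos\kua{\yjmv=\hpos \given \zij=\hpos}+(1-\wpi)\,\Pjpos\kua{\yjmv=\hneg \given \zij=\hneg}.$$

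The key step is then to control the two majority-vote agreement factors using Lemma \ref{res:AgreeMVZijCond}, whose conditioning on both $\yj$ and $\zij$ is exactly what is needed here because $\yjmv$ depends on worker $i$'s own vote. For the upper bound I would bound the first factor trivially by $1$ and the second factor by $\expUpBound$ via part (2), obtaining $\Pjpos(\zij=\yjmv)\leq \wpi+(1-\wpi)\expUpBound=\wpi+\xitwo$. For the lower bound I would bound the first factor below by $1-\expLowBound$ via part (1) and drop the nonnegative second term, obtaining $\Pjpos(\zij=\yjmv)\geq \wpi(1-\expLowBound)=\wpi-\xione$. The hypothesis $\wbar\geq \frac12+\frac1M$ is precisely the requirement of part (2), and it also implies $\wbar>\frac12$ needed for part (1); by the symmetric statements in that lemma the identical two-sided bound holds conditional on $\yj=\hneg$, which establishes (\ref{ineq:mvConditonMatch}).

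Finally, (\ref{ineq:mvMatch}) follows by writing $\P(\zij=\yjmv)=\P(\yj=\hpos)\,\Pjpos(\zij=\yjmv)+\P(\yj=\hneg)\,\Pjneg(\zij=\yjmv)$: since both conditional probabilities lie in the interval $[\wpi-\xione,\ \wpi+\xitwo]$ and the prior weights sum to one, the convex combination lies in the same interval. The argument is essentially bookkeeping on top of Lemma \ref{res:AgreeMVZijCond}, so I do not expect a genuine obstacle; the only points demanding care are making sure the trivial bounds ($\leq 1$ and $\geq 0$) are attached to the correct factor in each direction, so that the one-sided errors carry the right prefactors $\wpi$ and $1-\wpi$, and noting that the conditional independence of $\zij$ from $\{z_{lj}\}_{l\neq i}$ given $\yj$ (already exploited inside Lemma \ref{res:AgreeMVZijCond}) is what legitimizes the chain-rule factorization.
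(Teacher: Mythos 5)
Your proposal is correct and follows essentially the same route as the paper: decompose $\Pjpos(\zij=\yjmv)$ over the value of $\zij$, bound the two majority-vote agreement factors using parts (1) and (2) of Lemma \ref{res:AgreeMVZijCond} (trivial bound on one factor in each direction), repeat symmetrically for $\yj=\hneg$, and pass to the unconditional probability as a convex combination. Your write-up is in fact slightly cleaner in stating the second conditional event as $\{\yjmv=\hneg \given \zij=\hneg\}$, which is what the paper's argument actually uses.
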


\remark This result implies that under mild conditions, the probability that the label of the $j$th item given by the $i$th worker matches the majority vote will be close to $\wpi$, i.e., the accuracy of this worker. As the number of workers increase, it will be closer and closer. Intuitively, this makes sense since if $M$ is large, majority voting will be close to the true label if $\wbar > 0.5$. 

\begin{proof}
\begin{eqnarray}\label{eqn:singleMV_1}
\P(\zij=\yjmv)= \pi\Pjpos(\zij=\yjmv) + (1-\pi)\Pjneg(\zij=\yjmv).
\end{eqnarray}

\begin{eqnarray*}
\Pjpos(\zij=\yjmv) \= \wpi\Pjpos(\yjmv=\hpos | \zij=\hpos)  + (1-\wpi)\Pjpos(\yjmv=\hpos|\zij=\hneg). 
\end{eqnarray*}
Applying $\Pjpos(\yjmv=\hpos | \zij=\hpos) \geq 1 - \expLowBound$ from Lemma \ref{res:AgreeMVZijCond}.(1) and $(1-\pi)\Pjneg(\zij=\yjmv)\geq 0$ we can get 
\begin{eqnarray}\label{eqn:singleMV_2}
\Pjpos(\zij=\yjmv) \geq \wpi - \wpi\expLowBound.
\end{eqnarray}
By applying $\Pjpos(\yjmv=\hpos | \zij=\hpos) 
\leq 1$ and $\Pjneg(\zij=\yjmv) \leq \expUpBound$ from Lemma \ref{res:AgreeMVZijCond}.(2), we can get
\begin{eqnarray}\label{eqn:singleMV_3}
\Pjpos(\zij=\yjmv) \leq \wpi + (1-\wpi)\expUpBound.
\end{eqnarray}
Similarly, we can obtain the same bounds for $\Pjneg(\zij=\yjmv)$, i.e., 
\begin{eqnarray}
\Pjneg(\zij=\yjmv) \geq \wpi-\wpi\expLowBound, \label{eqn:singleMV_4}\\
\Pjneg(\zij=\yjmv) \leq \wpi+(1-\wpi)\expUpBound \label{eqn:singleMV_5}
\end{eqnarray}

Since $\Pjpos(\zij=\yjmv)$ and  $\Pjneg(\zij=\yjmv)$ have the same bounds,  and 
$$
\P(\zij=\yjmv \given \yj)= \I{\yjp1}\Pjpos(\zij=\yjmv) + \I{\yjn1}\Pjneg(\zij=\yjmv),
$$
then (\ref{ineq:mvConditonMatch}) holds. 
Furthermore, since
$$
\P(\zij=\yjmv)= \pi\Pjpos(\zij=\yjmv) + (1-\pi)\Pjneg(\zij=\yjmv),
$$
 then (\ref{eqn:singleMV_2}) to (\ref{eqn:singleMV_5}) implies  (\ref{ineq:mvMatch}) . 

\end{proof}

The next lemma will be crucial for applying concentration measure results to bound $\P(\yjwmv\neq \yj \given \ykall)$.

For measuring the fluctuation of a function $\fj: \hua{0,\pm 1}^{M\times N} \rightarrow \R$ if we change one entry of the \dataMatrix, we define a quantity as follows:
\begin{eqnarray}\label{def:dijstar}
\dijstar \defas \inf\hua{d: |\fjz-\fjzpri|\leq d, \textrm{where $Z$ and $Z'$ only differ on ($\istar,\jstar$)}}. \quad
\end{eqnarray}
The constraints $Z$ and $Z'$ only differ on ($\istar,\jstar$), which means that $\zijstarpri$ is a independent copy of $\zijstar$, and $\zij=\zij'$ for $(i,j)\neq (\istar,\jstar)$.

\begin{lem}\label{res:fjZDiff}
Let $\fj(Z) \defas \fjexpr, ~\forall j\in\N$, where $Z$ is the \dataMatrix and $\pih$ is as defined in (\ref{eqn:pimv}),  with $\dijstar$ defined in (\ref{def:dijstar}), we have
\\
(1)if $\jstar\neq j$, \quad then\quad $\dijstar \leq \frac{2(M-1)}{N}  $;
\\
(2)if $\jstar=j$, \quad then\quad $\dijstar \leq \frac{2(M-1)}{N} + 2 $ .

\end{lem}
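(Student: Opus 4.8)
The plan is to regard $\fj(Z)=\sumi(2\pih-1)\zij$ --- the one-step WMV score whose sign is $\yjwmv$ by \eqref{eqn:yjwmv} --- as a function of the entries of the \dataMatrix, and to read off its bounded-difference constant $\dijstar$ by tracking how replacing the single entry $\zijstar$ by an independent copy $\zijstarpri$ propagates through the estimated accuracies $\hua{\pih}$ defined in \eqref{eqn:pimv}. First I would observe that this replacement leaves the majority vote $\widehat y^{\,\mathrm{mv}}_{j'}=\sign(\sum_{l\in\M}z_{lj'})$ of every item $j'\neq\jstar$ untouched, since it does not involve column $\jstar$; and for such $j'$ the label $z_{ij'}$ is unchanged as well. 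Hence each accuracy $\pih=\inv{N}\sum_{j'}\I{z_{ij'}=\widehat y^{\,\mathrm{mv}}_{j'}}$ can move only through its single term at $j'=\jstar$, so $|\pih-\pih'|\leq\inv{N}$ for every $i\in\M$, and the only label that can change is $\zijstar$ itself.

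The crux --- and the step I expect to be the main obstacle --- is to show that \emph{at most $M-1$} of the weights $2\pih-1$ actually move, which is exactly what distinguishes the stated $\frac{2(M-1)}{N}$ from the crude $\frac{2M}{N}$. I would split on whether the majority vote of item $\jstar$ flips. If it does not flip, then for every $i\neq\istar$ neither $z_{i\jstar}$ nor $\widehat y^{\,\mathrm{mv}}_{\jstar}$ changes, so only worker $\istar$'s accuracy can move. If it does flip, I would use that $\sign(\sum_{l\in\M} z_{l\jstar})$ is monotone in the single coordinate $\zijstar$: the flip must be toward the new value $\zijstarpri$, so worker $\istar$ agrees with the majority both before (old vote equals old majority) and after (new vote equals new majority), leaving worker $\istar$'s accuracy \emph{unchanged}, while each of the remaining $M-1$ workers keeps its vote but sees the majority toggle and so moves by exactly $\inv{N}$. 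In either case at most $M-1$ of the weights change, each by at most $\frac{2}{N}$.

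The two cases then follow by direct bookkeeping. When $\jstar\neq j$ the labels $\zij$ entering $\fj$ are untouched, so $\fjz-\fjzpri=2\sumi(\pih-\pih')\zij$; by the previous paragraph this sum has at most $M-1$ nonzero terms, each of magnitude at most $\inv{N}$ since $|\zij|\leq1$, giving $\dijstar\leq\frac{2(M-1)}{N}$. When $\jstar=j$ only $\zijstar$ among the labels changes, so I would isolate the $i=\istar$ summand: the remaining $M-1$ summands (with $i\neq\istar$, where $\zij=\zij'$) contribute $|2\sum_{i\neq\istar}(\pih-\pih')\zij|\leq\frac{2(M-1)}{N}$ using $|\pih-\pih'|\leq\inv{N}$, while the isolated summand is bounded by $|2\pih-1|+|2\pih'-1|\leq2$ since the accuracies lie in $[0,1]$ and $|\zijstar|=1$; together these yield $\dijstar\leq\frac{2(M-1)}{N}+2$. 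Everything outside the flip argument is a routine triangle-inequality estimate, so the real work is in the monotonicity observation that spares worker $\istar$ when the majority vote of item $\jstar$ flips.
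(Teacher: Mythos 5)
Your proof is correct and follows essentially the same route as the paper: decompose $\fj(Z)-\fj(Z')$, bound each $|\pih-\pih'|$ by $\inv{N}$ via the single affected column, and split on whether the majority vote of item $\jstar$ flips, isolating the $i=\istar$ term when $\jstar=j$. Your monotonicity argument for the flip case (the majority can only toggle toward the new value of $Z_{\istar\jstar}$, so worker $\istar$'s agreement indicator is unchanged) is exactly the justification for the key step that the paper asserts without explanation, so your write-up is if anything slightly more complete.
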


\begin{proof}
Since $\zijstarpri$ is an independent copy of $\zijstar$, $\zijstarpri=\zijstar$ or $\zijstarpri=-\zijstar$. When $\zijstarpri=\zijstar$, $Z'=Z$, then of course $|\fj(Z)-\fj(Z')|=0$, which satisfies the inequality trivially. 

Next, we focus on the non-trivial case $\zijstarpri=-\zijstar$.

Note that $\zij=\zij'$ when $(i,j)\neq(\istar,\jstar)$. Let $\yjmv$ be the majority vote of the $j$th column of $Z$, and $\yjmvpri$ be the majority vote by the $j$th column of $Z'$. 

Recall that  $|\fjz-\fjzpri|= \abs{\sumi(2\pih-1)\zij - \sumi(2\pih'-1)\zij'}$.

 If $\jstar \neq j$, then $\zij=\zij'$, $\forall i\in\M$, so, 
\begin{eqnarray}\label{eqn:fzdiff_1}
|\fj(Z) - \fj(Z')| = |\sumi(2\pih - 2\pih')\zij| 
\leq 2\sumi |(\pih-\pih')\zij| 
= 2\sumi |\pih-\pih'| .	 \quad 
\end{eqnarray}

If $\jstar= j$, we have $Z_{i\jstar}=Z_{i\jstar}'$ for $i\neq \istar$, and $\zijstar=-\zijstarpri$,  then,
\begin{eqnarray}
|\fjz-\fjzpri| \= \abs{\sum_{i\neq \istar} 2(\pih-\pih')\zij + 2(\pistarh+\pistarh'-1)\zijstar } \nonumber\\
&\leq& 2\sum_{i\neq \istar}|\pih-\pih'| + 2|\pistarh+\pistarh'-1|
\end{eqnarray}

We can see that the difference $\fjz-\fjzpri$ depends heavily on the two quantities $|\pih-\pih'|$ and $|\pistarh+\pistarh'-1|$.  

Next we bound $|\pih-\pih'|$:
\begin{eqnarray}\label{eqn:fzdiff_2}
|\pih-\pih'| \= \abs{\inv{N}\sumi\kua{\I{\zik=\ykmv} - \I{\zik'=\ykmvpri}}} \nonumber 
= \inv{N}\abs{\I{Z_{i\jstar}=\yjstarmv} - \I{Z_{i\jstar}'=\yjstarmvpri}},
\end{eqnarray}
because $\zik=\zik'$ and $\ykmv=\ykmvpri$ if $k\neq \jstar$.

(a). If $\yjstarmv=\yjstarmvpri$, then by (\ref{eqn:fzdiff_2})
\begin{eqnarray*}
|\pih-\pih'|=
\begin{cases}
0 & \text{ if } i\neq \istar, \\
\inv{N} & \text{ if } i=\istar.
\end{cases}
\end{eqnarray*}
In this case, 
\begin{eqnarray}
\sumi |\pih-\pih'| \= \inv{N}, \connect
\sum_{i\neq \istar}|\pih-\pih'| = 0.
\end{eqnarray}

(b). If  If $\yjstarmv\neq \yjstarmvpri$, then by (\ref{eqn:fzdiff_2})
\begin{eqnarray*}
|\pih-\pih'|=
\begin{cases}
\inv{N} & \text{ if } i\neq \istar, \\
0 & \text{ if } i=\istar.
\end{cases}
\end{eqnarray*}
In this case, 
\begin{eqnarray}
\sumi |\pih-\pih'| \= \frac{M-1}{N}, \connect
\sum_{i\neq \istar}|\pih-\pih'| = \frac{M-1}{N}.
\end{eqnarray}

Now, we are going to bound $|\pistarh+\pistarh'-1|$:
\begin{eqnarray}
|\pistarh+\pistarh'-1| \= \inv{N}\abs{\sumk\kua{\I{Z_{\istar k}=\ykmv} + \I{Z_{\istar k}' = \ykmvpri} -1 }} \nonumber \\
&\leq & \inv{N} \sumk\abs{ \I{Z_{\istar k}=\ykmv} + \I{Z_{\istar k}' = \ykmvpri} -1  }.
\end{eqnarray}

(c). If $\yjstarmv = \yjstarmvpri$, then
\begin{eqnarray*}
\abs{ \I{Z_{\istar k}=\ykmv} + \I{Z_{\istar k}' = \ykmvpri} -1  } =
\begin{cases}
1 & \text{ if } k\neq \jstar, \\
0 & \text{ if } k=\jstar. 
\end{cases}
\end{eqnarray*}

So in this case,
$
|\pistarh+\pistarh'-1| = \frac{N-1}{N}.
$

(d). If $\yjstarmv \neq \yjstarmvpri$, then
\begin{eqnarray*}
\abs{ \I{Z_{\istar k}=\ykmv} + \I{Z_{\istar k}' = \ykmvpri} -1  } =
\begin{cases}
1 & \text{ if } k\neq \jstar, \\
1 & \text{ if } k=\jstar. 
\end{cases}
\end{eqnarray*}

So in this case,
$
|\pistarh+\pistarh'-1| = 1.
$
Putting together all the results above, if $\zijstar \neq \zijstar'$, then we have,

(1') If $\yjstarmv= \yjstarmvpri$, 
\begin{eqnarray*}
|\fjz-\fjzpri|\leq
\begin{cases}
\frac{2}{N} & \text{ if } \jstar\neq j,\\
\frac{2(N-1)}{N} & \text{ if } \jstar = j.
\end{cases}
\end{eqnarray*}

(2') If $\yjstarmv \neq \yjstarmvpri$, 
\begin{eqnarray*}
|\fjz-\fjzpri|\leq
\begin{cases}
\frac{2(M-1)}{N} & \text{ if } \jstar\neq j,\\
\frac{2(N-1)}{N}+2 & \text{ if } \jstar = j.
\end{cases}
\end{eqnarray*}

The upper bound in the case when $\yjmv\neq \yjmvpri$ is also an upper bound for the case when $\yjmv=\yjmvpri$. By the definition of $\dijstar$ and noting that $Z$ can only take a finite number of values, we have 
\begin{eqnarray*}
\dijstar \leq 
\begin{cases}
\frac{2(M-1)}{N} & \text{ if } \jstar\neq j,\\
\frac{2(N-1)}{N}+2 & \text{ if } \jstar = j.
\end{cases}
\end{eqnarray*}

\end{proof}

\remark $\dijstar$ is the smallest upper bound on the difference between $\fjz$ and $\fjzpri$. From the proof, we can see that the bound we get is achievable, thus the bounds are tight. This result basically says that if we change only one entry of the \dataMatrix, the fluctuation of the prediction score function of one-step Weighted Majority Voting, i.e., $\fj(Z)$, will be large if $M$ increases and will decrease if $N$ increases.

\subsection{The proof of Proposition \ref{res:BoundJthWMV} and \thmosWMVBound }

In this section, we will use the lemmas we obtained in the last section to prove Proposition \ref{res:BoundJthWMV} and then easily derive the bounds on the expected error rate of one-step WMV from it. 

\begin{proof} \emph{of Proposition \ref{res:BoundJthWMV}}

\begin{eqnarray*}
\P(\yjwmv\neq \yj)= \sum_{y_1, \cdots, y_N} \P\kua{\yjwmv \neq \yj \given y_1, \cdots, y_N}\cdot \P(y_1, \cdots, y_N)
\end{eqnarray*}

If we can get an unified upper bound on $ \P\kua{\yjwmv=\yj \given y_1, \cdots, y_N}$, say $B$,  which is independent of $\ykall$, then this bound will also be an upper bound of 
$\P(\yjwmv\neq \yj)$ 
since
$$\sum_{y_1, \cdots, y_N} \P\kua{\yjwmv \neq \yj \given y_1, \cdots, y_N}\cdot \P(y_1, \cdots, y_N) \leq \sum_{y_1, \cdots, y_N} B \cdot \P(y_1, \cdots, y_N) = B.
$$

Note that the $\zij$'s are not independent of each other unless conditioned on all the true labels of  $y_1, \cdots, y_N$. For convenience of notation, we define the conditional probability and conditional expectation as follows:
\begin{eqnarray}
\Pcjp(~\cdot~) &\defas& \P\kua{~\cdot~|~\yj=\hpos,  \yknj}, \\
\Pcjn(~\cdot~) &\defas& \P\kua{~\cdot~|~ \yj=\hneg,  \yknj}, \\
\Ecjp[~\cdot~] &\defas& \E\braket{~\cdot~|~ \yj=\hpos,  \yknj}, \\
\Ecjn[~\cdot~] &\defas& \E\braket{~\cdot~|~ \yj=\hneg,  \yknj}, 
\end{eqnarray} 
where ``$\cdot$" denotes any event with respect to the $\sigma$-algebra generated by $Z$ and $\hua{y_j}_{j=1}^N$. Note that in these conditional notations, all true labels of item $k$,  $k\neq j$ remain unknown but are conditioned on, e.g., $\Ecjp[y_{k}]= y_{k}$ for $k\neq j$. 

Notice that 
\begin{eqnarray*}
\P\kua{\yjwmv=\yj \given y_1, \cdots, y_N} \= \I{\yj= \hpos}\cdot \P\kua{\yjwmv=\hneg\given \yj=\hpos, \yknj} \nonumber\\
&& + \I{\yj=\hneg}\cdot \P\kua{\yjwmv=\hpos\given \yj=\hneg, \yknj} \nonumber \\
\= \I{\yj= \hpos}\cdot \Pcjp\kua{\yjwmv=\hneg} \nonumber\\
&& + \I{\yj=\hneg}\cdot \Pcjn\kua{\yjwmv=\hpos} \nonumber \\
\= \I{\yj= \hpos}\cdot \Pcjp\kua{\fjz<0} \nonumber\\
&& + \I{\yj=\hneg}\cdot \Pcjn\kua{\fjz>0}, 
\end{eqnarray*}
where $\fjz= \sumi(2\pih-1)\zij$ and $\pih$ is defined as (\ref{eqn:pimv}). 

We want to provide the upper bound on both $\Pcjp\kua{\yjwmv=\hneg}=\Pcjp\kua{\fjz<0}$ and $\Pcjn\kua{\yjwmv=\hpos}=\Pcjp\kua{\fjz>0}$.

We complete our proof in several steps.

\hwem{Step 1. } Providing an upper bound on $\Pcjp\kua{\fjz<0}$

Once we condition on $\ykall$, all the entries in $Z$ will be independent of each other, and then we can apply McDiarmid Inequality \citep{McDiarmid_1998} to the probability $\Pcjp\kua{\fjz<0}$.  

From Lemma \ref{res:fjZDiff}, we get that if $Z$ and $Z'$ only differ on entry $(\istar,\jstar)$, $\zijstarpri$ is an independent copy of $\zijstar$, and so 
$$
|\fjz-\fjzpri| \leq \dijstar.
$$
Combining this with the results from Lemma \ref{res:fjZDiff} we have
\begin{eqnarray}
\sum_{\istar=1}^M\sum_{\jstar=1}^N \kua{\dijstar}^2 &\leq& M(N-1)\kua{\frac{2(M-1)}{N}}^2 + M\kua{2+ \frac{2(M-1)}{N} }^2 \nonumber\\
&\leq& MN\kua{\frac{2M}{N}}^2 + M\kua{2+\frac{2M}{N}}^2 \nonumber \\
&\leq& \frac{4M}{N^2}\braket{M^2N+(M+N)^2}
\end{eqnarray}

Applying the McDiamid Inequality, we get 
\begin{eqnarray}
\Pcjp\kua{\fjz<0} \= \Pcjp\kua{\fjz - \Ecjp[\fjz] < -\Ecjp[\fjz]} \nonumber \\
&\leq& \exp\kua{-\frac{2\kua{\Ecjp[\fjz]}^2}{\sum_{\istar=1}^M\sum_{\jstar=1}^N \kua{\dijstar}^2}} \nonumber \\
&\leq& \exp\kua{-\frac{N^2\kua{\Ecjp[\fjz]}^2}{2M\braket{M^2N+(M+N)^2}}}, \label{eqn:mainp_1}
\end{eqnarray}
provided $\Ecjp\fjz \leq 0$. 

Now, if we can provide a lower bound of $\Ecjp\braket{\fjz}$, then by replacing $\Ecjp\braket{\fjz}$ with that lower bound  in the last inequality, we can further bound $\Pcjp\kua{\fjz<0}$ from above.  Next, we aim at deriving a good lower bound of $\Ecjp\braket{\fjz}$. 

We can expand $\fjz$ so that
\begin{eqnarray*}
\Ecjp\braket{\fjz} = \Ecjp\braket{\sumi(2\pih-1)\zij} 
\= 2\sumi \Ecjp\braket{\pih\zij} - \sumi \Ecjp\zij\\ 
\= 2\sumi\Ecjp[\pih\zij]- \sumi(2\wpi -1),
\end{eqnarray*}
since $\Ecjp\zij= \E\braket{\zij \given \yj=\hpos, \yknj}=  \E[\zij|\yj=\hpos] = 2\wpi-1$. 

Note that for any $i\in\M$ and $j\in\N$, given $\yj$, $\hua{\zij}_{i=1}^M$ will be independent of $\hua{Z_{lk}}_{k\neq j}$ and $\yknj$. We will use this property for dropping all the irrelevant conditioned $y_k$'s.
\begin{eqnarray}
\Ecjp[\pih\zij] \= \Ecjp\braket{\zij\cdot\inv{N}\sumk \I{\zik=\ykmv}}
= \inv{N}\sumk\Ecjp\braket{\zij\I{\zik=\ykmv}} \nn\\
\= \inv{N}\braket{\sumknj\Ecjp\braket{\zij\I{\zik=\ykmv}} + \Ecjp\braket{\zij\I{\zij=\yjmv}}}
\end{eqnarray}  
When $k\neq j$, $\zik$ and $\ykmv$ are independent of $\zij$ given $\yj$. 
\begin{eqnarray*}
&& \Ecjp\braket{\zij\I{\zik=\ykmv}} 
= ~~\Ecjp\zij \cdot \Ecjp\I{\zik=\ykmv} \\
\= (2\wpi-1)\P(\zik=\ykmv\given y_k) \\
&\geq& \I{2\wpi-1\geq 0}\cdot (2\wpi-1)\wpi\braket{1-\expLowBound} \\
&& + \I{2\wpi-1<0}\cdot(2\wpi-1)\wpi\braket{1+\frac{1-\wpi}{\wpi}\expUpBound} \qquad (\text{By Lemma \ref{res:SingleLabelAgreeMV}})\\
&\geq& (2\wpi-1)\wpi\braket{\I{\wpi\geq \inv{2}} - \I{\wpi\geq \inv{2}} \expLowBound} \\
&& + (2\wpi-1)\wpi\braket{\I{\wpi<\inv{2}}+\I{\wpi<\inv{2}} \kua{\frac{1-\wpi}{\wpi}} \expUpBound} \\
\= (2\wpi-1)\wpi\braket{1+ \frac{1-2\wpi}{\wpi}\expUpBound} \qquad (\text{Because } \wbar>\inv{2}+\inv{M}) \\
\= (2\wpi-1)\wpi\braket{1 + \frac{1-2\wpi}{2\wpi}\seta_i},
\end{eqnarray*}
where $\seta_i= 2\expUpBound$. 

Furthermore,
\begin{eqnarray*}
&& \Ecjp\braket{\zij\I{\zij=\yjmv}} 
= ~~\Ejpos \bk{\zij\Ecjp\bk{\I{\zij=\yjmv\given \zij}}} \\
\= \wpi\P(\yjmv=\hpos\given \yj=\hpos, \zij=\hpos) - (1-\wpi)\P(\yjmv=\hneg\given \yj=\hpos, \zij=\hneg) \\
\= \wpi\bk{1-\expLowBound} - (1-\wpi)\expUpBound \quad (\text{By Lemma \ref{res:AgreeMVZijCond}}) \\
&\geq& \wpi\bk{1-\expUpBound} - (1-\wpi)\expUpBound \qquad (\text{As } \wbar>\inv{2}+\inv{M}) \\  
\= \wpi - \expUpBound \\
&\geq& \wpi - \seta_i/2 
\end{eqnarray*}

Combining the two bounds above, we obtained
\begin{eqnarray*}
\Ecjp[\pih\zij] 
\= \inv{N}\braket{\sumknj\Ecjp\braket{\zij\I{\zik=\ykmv}} + \Ecjp\braket{\zij\I{\zij=\yjmv}}} \\
&\geq& \inv{N}\bk{(N-1)(2\wpi-1)\wpi(1+\frac{1-2\wpi}{2\wpi}\seta_i) + \wpi - \frac{\seta_i}{2}} \\
\= \inv{2N}\bk{\kua{(N-1)(2\wpi-1)^2+1}(1-\seta_i) + N(2\wpi-1) } \\
&\geq& \inv{2N}\bk{N(2\wpi-1)^2(1-\seta_i) + N(2\wpi-1)} \qquad (\text{Because } 1\geq (2\wpi-1)^2) \\
\= \inv{2}(2\wpi-1)^2(1-\seta_i) + \inv{2}(2\wpi-1)
\end{eqnarray*}

Let $\seta = 2\halfseta$, so $\seta \leq \seta_i~\forall i\in\M$
\begin{eqnarray}
\Ecjp\fjz \= 2\sumi\Ecjp\bk{\pih\zij} - \sumi(2\pih-1) \nonumber  \\
&\geq& \sumi(2\wpi-1)^2(1-\seta_i)+ \sumi(2\pih-1)  - \sumi(2\pih-1)  \nonumber \\
&\geq& (1-\seta)\sumi (2\wpi-1)^2 \nonumber \\
\= 4M\pdiv^2(1-\seta), \label{eqn:mainp_2}
\end{eqnarray}
where $\pdiv= \sqrt{\inv{M}\sumi(2\wpi-1)^2}$.

Since $\wbar \geq \inv{2} +\inv{M} + \sqrt{\frac{(M-1)\ln 2}{2M^2}}$, so $\seta\leq 1$, which implies  $\Ecjp\fjz \geq 0$.

Then by (\ref{eqn:mainp_1}) and (\ref{eqn:mainp_2}) we have 
\begin{eqnarray}
\Pcjp{\fjz<0} &\leq& \exp\kua{-\frac{N^2\kua{\Ecjp[\fjz]}^2}{2M\braket{M^2N+(M+N)^2}}} \nonumber \\
&\leq& \finalUpBound. \label{eqn:mainp_3}
\end{eqnarray}

\hwem{Step 2. } With the same argument and following the same logic, we can obtain the same upper bound for $\Pcjn{\fjz>0}$.

\hwem{Step 3.} Combining the results we obtained from Step 1 and Step 2. 

Since 
\begin{eqnarray*}
&& \P\kua{\yjwmv=\yj \given y_1, \cdots, y_N} 
= \I{\yj= \hpos}\cdot \Pcjp\kua{\fjz<0} 
 + \I{\yj=\hneg}\cdot \Pcjn\kua{\fjz>0}
\end{eqnarray*}
and both $\Pcjp\kua{\fjz<0}$ and $\Pcjn\kua{\fjz>0}$ have the same upper bound, we have that 
$$
\P\kua{\yjwmv=\yj \given y_1, \cdots, y_N} \leq \finalUpBound.
$$
The upper bound above does not depend on the value of $\ykall$. By what we have discussed in the very beginning of the proof,
$$
\P(\yjmv\neq \yj) \leq \finalUpBound.
$$

\end{proof}

Now, we can directly prove \thmosWMVBound as follows:
\begin{proof} \hwem{(Proof of \thmosWMVBound)}\\
Since the upper bound of $\P(\yjmv\neq \yj) $ doesn't depend on $j$, it can directly imply that 
$$
\inv{N}\sumj\P(\yjmv\neq \yj) \leq \finalUpBound,
$$
which is the desired result in \thmosWMVBound. 

\end{proof}

\end{document}